\newtheorem{theorem}{Theorem}
\newtheorem{lemma}{Lemma}
\newtheorem{proposition}{Proposition}
\newcommand{\rev}[1]{\textcolor{black}{#1}}
\newcommand{\algglobalrand}{\textsc{BiCompFL-GR}}
\newcommand{\alglocalrand}{\textsc{BiCompFL-PR}}
\newcommand{\alg}{\textsc{BiCompFL}\xspace}
\newcommand{\dimension}{\ensuremath{d}}
\newcommand{\client}{i}
\newcommand{\nclients}{n}
\newcommand{\epoch}{t}
\newcommand{\localepoch}{m}
\newcommand{\nlocalepochs}{L}
\newcommand{\sampleidx}{\ell}
\newcommand{\nmasksul}{n_{\text{UL}}}
\newcommand{\nmasksdl}{n_{\text{DL}}}
\newcommand{\maskul}{y_{\client, \sampleidx}^\epoch}
\newcommand{\maskdl}{x_{\client, \sampleidx}^\epoch}
\newcommand{\maskinf}{x^\epoch}
\newcommand{\scores}[1][\localepoch]{\mathbf{s}_{\client, \epoch}^{(#1)}}
\newcommand{\model}[1][\epoch]{\theta_{#1}}
\newcommand{\weights}[1][\epoch]{w}
\newcommand{\commonmodel}[1][\epoch]{\hat{\theta}_{#1}}
\newcommand{\modelest}[1][\epoch]{\hat{\theta}_{\client, #1}}
\newcommand{\modelestj}[1][\epoch]{\hat{\theta}_{j, #1}}
\newcommand{\prior}{p^\epoch}
\newcommand{\priorul}[1][\epoch]{p_{\client, u}^{#1}}
\newcommand{\priordl}{p_{\client, d}^\epoch}
\newcommand{\posterior}[1][\epoch]{q_{\client}^{\epoch}}
\newcommand{\posteriorgen}[1][\epoch]{q}
\newcommand{\posteriorentry}[1][\epoch]{q_{\client, \entry}^{\epoch}}
\newcommand{\posteriorentrytmp}{q_{\entry}}
\newcommand{\gradient}[1][\epoch]{g_{\client}^{\epoch}}
\newcommand{\gradientest}[1][\epoch]{\hat{g}_{\client}^{\epoch}}
\newcommand{\gradiententry}[1][\epoch]{g_{\client, \entry}^{\epoch}}
\newcommand{\gradientvec}[1][\epoch]{\mathbf{g}}
\newcommand{\gradientvecentry}[1][\epoch]{g_\entry}
\newcommand{\posteriortmp}[1][\epoch]{\tilde{q}_{\client}^\epoch}
\newcommand{\posteriorj}[1][\epoch]{q_{j}^\epoch}
\newcommand{\posteriorest}[1][\epoch]{\hat{q}_{\client}^\epoch}
\newcommand{\posteriorestj}[1][\epoch]{\hat{q}_{j}^\epoch}
\newcommand{\auxul}[1][\epoch]{\tilde{q}_{\client}^\epoch}
\newcommand{\nblocks}{\ensuremath{B}}
\newcommand{\blockidx}{\ensuremath{b}}
\newcommand{\issampleidx}{\ensuremath{I_{\client, \sampleidx}^\blockidx}}
\newcommand{\priorulblock}{\ensuremath{p_{\client, u, \blockidx}^\epoch}}
\newcommand{\posteriorblock}{\ensuremath{q_{\client, \blockidx}^\epoch}}
\newcommand{\kl}[2]{\mathrm{D}_\textrm{KL}\left(#1 \Vert #2\right)}
\newcommand{\klinline}[2]{\mathrm{D}_\textrm{KL}(#1 \Vert #2)}
\newcommand{\klball}{\ensuremath{\rho}}
\newcommand{\data}{\mathcal{D}_\client}
\newcommand{\loss}{F(\commonmodel, \data)}
\newcommand{\ngenericsamples}{\ensuremath{K}}
\newcommand{\isamplesel}{\ensuremath{X_{\sampleidx, I_\sampleidx}}}
\newcommand{\isampleidx}{\ensuremath{i}}
\newcommand{\isampleall}[1][\isampleidx]{\ensuremath{X_{\sampleidx, #1}}}
\newcommand{\binkl}[2]{\ensuremath{\textrm{d}_\textrm{KL}\left(#1||#2\right)}}
\newcommand{\binklinline}[2]{\ensuremath{\textrm{d}_\textrm{KL}(#1||#2)}}
\newcommand{\define}{\ensuremath{:=}}
\newcommand{\iid}{i.i.d.}
\newcommand{\noniid}{non-i.i.d.}
\newcommand{\fracrv}{\mathrm{M}}
\newcommand{\lr}{\eta}
\newcommand{\nqints}{s}
\newcommand{\qchoice}{\tau_\entry}
\newcommand{\stochquant}[1][\cdot]{Q_s(#1)}
\newcommand{\iscompression}[1][\cdot]{\mathcal{C}_\mathrm{mrc}(#1)}
\newcommand{\iscompressionvar}[2]{\mathcal{C}_\mathrm{mrc}(#1, #2)}
\newcommand{\locallipschitz}{\ensuremath{L_\client}}
\newcommand{\lipschitztmp}{\ensuremath{L^\prime}}
\newcommand{\lipschitz}{\ensuremath{L}}
\newcommand{\gradvar}{\ensuremath{\sigma^2}}
\newcommand{\contraction}{\ensuremath{\delta}}
\newcommand{\tmpsample}[1][1]{\ensuremath{x_{#1}}}
\newcommand{\bsample}{\ensuremath{\mathrm{L}}}
\newcommand{\ber}[1]{\ensuremath{\mathrm{Ber}\left(#1\right)}}
\newcommand\numberthis{\addtocounter{equation}{1}\tag{\theequation}}
\newcommand{\isidx}{\ensuremath{i}}
\newcommand{\genericprior}{\ensuremath{P}}
\newcommand{\genericposterior}{\ensuremath{Q}}
\newcommand{\auxdist}{\ensuremath{W}}
\newcommand{\isample}{\ensuremath{X_{\isidx}}}
\newcommand{\nisamples}{\ensuremath{n_\textrm{IS}}}
\newcommand{\entry}{\ensuremath{e}}
\newcommand{\tmpvec}{\mathbf{x}}
\newcommand{\tmpvecentry}{x_\entry}
\newcommand{\posteriorapproxentry}{\ensuremath{\tilde{q}_{\entry}}}
\newcommand{\priorgen}{\ensuremath{p}}
\newcommand{\priorentry}{\ensuremath{p_{\entry}}}
\newcommand{\norm}[1]{\Vert #1 \Vert}
\icmltitlerunning{BiCompFL: Stochastic Federated Learning with Bi-Directional Compression}
\begin{document}

\twocolumn[
\icmltitle{BiCompFL: Stochastic Federated Learning with Bi-Directional Compression}

\begin{icmlauthorlist}
\icmlauthor{Maximilian Egger}{yyy}
\icmlauthor{Rawad Bitar}{yyy}
\icmlauthor{Antonia Wachter-Zeh}{yyy}
\icmlauthor{Nir Weinberger}{sch}
\icmlauthor{Deniz Gündüz}{comp}
\end{icmlauthorlist}

\icmlaffiliation{yyy}{Technical University of Munich}
\icmlaffiliation{comp}{Imperial College London}
\icmlaffiliation{sch}{Israel Institute of Technology}

\icmlcorrespondingauthor{Maximilian Egger}{maximilian.egger@tum.de}

\icmlkeywords{Machine Learning, ICML}

\vskip 0.3in
]

\printAffiliationsAndNotice{This project has received funding from the German Research Foundation (DFG) under Grant Agreement Nos. BI 2492/1-1 and WA 3907/7-1, and from UKRI for project AI-R (ERC-Consolidator Grant, EP/X030806/1). The work of N.W. was partly supported by the Israel Science Foundation (ISF), grant no. 1782/22.}

\begin{abstract}
     \rev{We address the prominent communication bottleneck in federated learning (FL). We specifically consider stochastic FL, in which models or compressed model updates are specified by distributions rather than deterministic parameters. Stochastic FL offers a principled approach to compression, and has been shown to reduce the communication load under perfect downlink transmission from the federator to the clients.
     However, in practice, both the uplink and downlink communications are  constrained. We show that bi-directional compression for stochastic FL has inherent challenges, which we address by introducing \alg. Our \alg is experimentally shown to reduce the communication cost by an order of magnitude compared to multiple benchmarks, while maintaining state-of-the-art accuracies. Theoretically, we study the communication cost of \alg through a new analysis of an importance-sampling based technique, which exposes the interplay between uplink and downlink communication costs.}
\end{abstract}

\section{Introduction}

Federated learning (FL) is a widely used and well-studied machine learning (ML) framework, where multiple clients orchestrated by a federator collaborate to train an ML model \citep{mcmahan2017communication}. Communication efficiency, privacy, security, and data heterogeneity are critical challenges in FL that have been extensively studied \citep{zhang2021survey,wen2023survey}. In principle, FL is a \textit{bi-directional} process, and with the increasing size of ML models, massive amounts of data are communicated between the federator and the clients. Reducing \emph{uplink} communication from clients to the federator has been the focus of many studies, mainly within the framework of lossy gradient compression, e.g., \citep{seide2014onebit, alistarh2017qsgd, isik2024adaptive}. However, reducing the cost of \textit{downlink} transmission to communicate the updated global model from the federator to the clients has received relatively less attention, although it is as costly and can be a major bottleneck when training over a wireless network.
An ongoing body of research aims to study the communication bottleneck in downlink transmission, by combining tools from gradient compression, momentum, and error-feedback \citep{stich2018sparsified, tang2019doublesqueeze, xie2020cser,  Amiri:arXiv:20, philippenko2020bidirectional, gruntkowska2023ef21, tyurin2023direction,dorfman2023docofl, gruntkowska2024improving}. All these works are focused on non-stochastic (or non-Bayesian) settings. However, the state-of-the-art performance under limited uplink communication is achieved by stochastic compression methods, such as QSGD \cite{alistarh2017qsgd}, QLSD \cite{vono2022qlsd}, dithered quantization \cite{Abdi:arXiv:19} and FedPM \cite{isik2023sparse}, in which the clients send samples from a local distribution, and the federator estimates the mean of the clients' distributions by averaging these samples. To address this gap, in this work, we study the performance of stochastic FL with limited communication in both directions, and propose a method that obtains state-of-the-art results. \rev{Moreover, we show that our method can actually reduce the communication cost even in conventional FL with stochastic compression.}

A fundamental approach to both uni-directional and bi-directional compression schemes involves quantizing transmitted update vectors to finite resolutions. The trade-off between communication cost (or compression) and the quantization distortion has been extensively studied under the framework of rate-distortion theory \citep{thomas2006elements}.  
However,  classical rate-distortion is not well suited for analyzing how quantization affects the convergence of stochastic gradient-based optimization, %
as they rely on the joint compression of many samples and assume additive distortion measures. Consequently, it becomes difficult to characterize the fundamental trade-off between the communication cost and the convergence rates.

An alternative stochastic FL approach was proposed by \citet{isik2024adaptive}, which applies to a variety of Bayesian FL solutions as well as to standard gradient-based methods with stochastic compression. %
Communication reduction is achieved by \textit{minimal random coding} (MRC), which allows the federator to directly sample from the updated local distributions, rather than obtaining quantized versions of samples locally generated by each of the clients. This enables a direct evaluation of the communication cost when a shared common prior distribution, referred to as \textit{side information}, and sufficient common randomness are available between the federator and the clients. When the downlink communication is unlimited, the global model distribution at the federator can then be shared with all the clients, and serves as a natural side information, i.e., common prior. However, this is impossible under downlink communication constraints. This necessitates developing new algorithms and analysis, as we carry in this paper.

The core research question we address is: \textit{Can joint uplink and downlink compression reduce communication bottlenecks in \rev{stochastic FL}?} We answer this question in the affirmative, and we develop and analyze \rev{stochastic FL} algorithms with bi-directional compression. %
We utilize MRC with appropriate priors, and accurately characterize the uplink and downlink communication costs \rev{and the compression error}. \rev{When applied to conventional gradient-based methods, we prove a contraction property of our compression method, thereby facilitating convergence analysis for both uni- and bi-directional MRC-based stochastic compression.} We also examine key performance factors including client data heterogeneity, availability of shared randomness among clients\rev{, and various hyperparameters}. Our main contributions are summarized next. %

\subsection{Contributions}
\begin{enumerate}[itemsep=0.5em,parsep=0em, label={$\bullet$}, wide, labelindent =-0pt]
    \item We propose two algorithms for bi-directional \rev{stochastic FL} based on the availability of shared randomness: one for the case when globally shared randomness is available, and another for the case when only private shared randomness between each client and the federator is available. Both algorithms use carefully chosen side information to transmit samples from the desired distribution through MRC.
    \item We experimentally validate our method on existing baselines, and demonstrate \textit{order-wise} reductions in the communication cost, \rev{while maintaining similar accuracies.} \rev{We thoroughly investigate the role of shared randomness and the choice of side information.}
    \item \rev{We apply our method to stochastic compression in conventional FL, achieving substantial reductions in communication cost. We establish convergence guarantees by proving a contraction property for the biased compressors used in our algorithms.}
    \item We develop a theoretical framework for MRC to quantify communication costs in stochastic FL with bi-directional compression. Our findings go beyond the established analysis of \citet{chatterjee2018sample}, providing refined results for Bernoulli distributions that may be of independent interest. \rev{Our theoretical framework further allows targeted convergence analysis, and provides techniques applicable to other distributions.}
\end{enumerate}

\section{Preliminaries: Stochastic FL with Bi-Directional Compression}

\newcommand{\aggrule}[1][\cdot]{\ensuremath{R\left(#1 \right)}}

We propose a general stochastic FL algorithm \rev{that employs stochastic bi-directional compression based on MRC}. In what follows, we shortly review these concepts.
 
\textbf{Stochastic FL.} A set of $\nclients$ clients collaboratively and iteratively train a model, e.g., a neural network, under the orchestration of a federator. Client $\client \in [\nclients]\define \{1, \dots, n\}$ possesses a dataset $\data$. %
We differentiate between homogeneous data, where $\data$ is drawn independently from the same distribution for all clients (i.i.d.), and heterogeneous data, where each $\data$ may come from a different distribution (non i.i.d.). 
At each iteration $t$ of the training, the federator holds a model $\model$ described by a probability distribution. %
After downlink transmission, each client $\client$ has an estimate $\modelest$ of $\model$, and locally optimizes $\modelest$ to obtain a local probabilistic model called \textit{posterior} $\posterior$. 
Compressed versions of the clients' posteriors $\posterior$ are transmitted back to the federator on the uplink to obtain an estimate $\posteriorest$. The federator aggregates the received posteriors using an aggregation rule $\aggrule$ 
to obtain a refined global probability distribution $\model[\epoch+1] = \aggrule[\{\posteriorest\}_{\client \in [\nclients]}]$. A simple aggregation rule $\aggrule$ is the average over all clients' posteriors. This process is repeated until a certain convergence criterion is met. In many stochastic FL settings, the transmitted client updates $\posteriorest$ are samples from the posterior distribution $\posterior$. 

\rev{Furthermore, our definition of Stochastic FL encompasses conventional FL with stochastic quantization. The same procedure as above follows with those differences: (i) the federator holds a model $\model$ with deterministic parameters; (ii) each client $\client$ locally optimizes $\modelest$ to obtain a local gradient $\gradient$. A stochastic compression $\stochquant$ is applied on the client's gradient to obtain a posterior distribution $\posterior$ from $\stochquant[\gradient]$; (iii) samples of $\posterior$ are transmitted to the federator on the uplink to obtain an estimate of the gradient, which we still denote by $\posteriorest$; and (iv) the federator updates the global model as $\model[\epoch+1] = \model[\epoch] - \lr \aggrule[\{\posteriorest\}_{\client \in [\nclients]}]$, with learning rate $\lr$. %
We will investigate both settings, with a prominent focus on the former.} %

\rev{\textbf{Stochastic Compression by MRC.}
To efficiently transmit samples from the posterior $\posterior$, we employ MRC \citep{havasi2018minimal} to leverage common side information present at the federator and the clients, and shared randomness. This method serves as stochastic compressor $\iscompression$, which takes as input a posterior distribution $\genericposterior$ and a prior distribution $\genericprior$, and outputs a sample from a distribution $\hat{\genericposterior}$ close to $\genericposterior$. In MRC, the encoder and decoder generate 
$\nisamples$ samples $\{\isample\}_{\isidx\in[\nisamples]}$ from $\genericprior$. The encoder computes a categorical distribution $\auxdist$, with $\auxdist(\isidx) = \frac{\genericposterior(\isample)/\genericprior(\isample)}{\sum_{\isidx=1}^{\nisamples} \genericposterior(\isample)/\genericprior(\isample)}$, and transmits an index $\isidx \sim \auxdist$ with $\log_2(\nisamples)$ bits. %
The encoder sets $\nisamples = \Theta(\exp(\kl{\genericposterior}{\genericprior}))$, where $\kl{\genericposterior}{\genericprior}$ denotes the KL-divergence between $\genericposterior$ and $\genericprior$  \citep{chatterjee2018sample,havasi2018minimal}. For two Bernoulli distributions with parameters $q$ and $p$ we use the short notations $\binkl{q}{p}$ and $\iscompressionvar{q}{p}$.}

\section{\alg} \label{sec:bicompfl} %
In this section, we introduce our proposed scheme, \alg, a bi-directional stochastic compression strategy that uses MRC to reduce both uplink and downlink communication costs. %
The scheme relies on the availability of shared randomness between each of the clients and the federator, which can be implemented using pseudo-random sequences generated from a common seed. %
We distinguish between two types of shared randomness: private shared randomness (between individual clients and the federator) and global shared common randomness (among all parties), with the latter being more challenging to implement in practice. We assume all clients and the federator share the same global model $\commonmodel[0]$ at initialization. This does not incur any communication cost when global shared randomness is available, but necessitates an initial model transmission from the federator to clients when only private shared randomness exists. 

\textbf{\alg: The General Algorithm.} Our method serves as a general framework for stochastic optimization procedures. We explain \alg for Bayesian FL and show in the sequel how it can be used for conventional FL with stochastic quantization. Consider probabilistic mask training (similar to FedPM, \citep{isik2023sparse}) as an example of Bayesian FL. \rev{Let $[0,1] \define \{x\in \mathbb{R}: 0 \leq x \leq 1\}$. The models $\model \in [0, 1]^\dimension$ of dimension $\dimension$ are parameters of Bernoulli distributions.
Those parameters determine for each weight of a randomly initialized network with fixed weights $\weights$ whether it is activated or not. During inference, the weights $\weights$ are masked with samples $\maskinf \in \{0, 1\}^\dimension \sim \model$, i.e., the network weights are $\weights \odot \maskinf$.} %
We start with a general description, which is valid for the cases of global and private shared randomness. 

At iteration $\epoch=0$, each client $\client \in [\nclients]$ shares with the federator the same global model, i.e., $\modelest[0] = \model[0]$, for all $\client \in [\nclients]$. At iteration $\epoch$, each client $\client$ locally trains model $\modelest$ in $\nlocalepochs$ local iterations. \rev{In our previous example, when training Bernoulli distributions to mask a random network, the parameters are mapped to scores in a dual space, which are then trained for $\nlocalepochs$ local iterations $\localepoch \in [\nlocalepochs]$ using stochastic gradient descent. Mapping the trained scores back to the primal space, each client $\client$ obtains a model update in terms of a posterior $\posterior$. We refer to \cref{sec:fedpm} for details.} 
This \rev{optimization principle} is a special instance of mirror descent, which, in the special case of optimizing over Bernoulli distributions, leads to a point-wise minimization with respect to a KL-proximity term \rev{(as opposed to the Euclidean distance in standard SGD,} cf. \cref{app:mirror_descent} for \rev{details}). \rev{The KL-divergence between the updated local model and the global model directly determines the communication cost. Hence, %
we \textit{regularize} the minimization of the loss function by the communication cost.} This property renders our method superior to various baselines. %

To convey the model update $\posterior$ to the federator, each client employs $\iscompression$ in $\nblocks$ blocks of size $\dimension/\nblocks$ each (assuming $\nblocks\vert \dimension$) with a prior distribution $\priorul$, which is set to $\priorul[0] = \modelest[0]$ at iteration $\epoch=0$. The choice of $\priorul$ for $t>0$ will be clarified later. For each block $\blockidx \in [\dimension/\nblocks]$, client $\client$ conveys $\nmasksul$ samples $\{\maskul\}_{\sampleidx \in [\nmasksul]}$ %
of $\posterior$ to the federator by transmitting for each block $\blockidx$ an index $\issampleidx$ with $\log_2(\nisamples)$ bits, where $\nisamples$ is the number of samples per block, generated from the prior distribution $\priorul$ at both the client and the federator using the available shared randomness. %
The samples of all blocks are concatenated for each $\sampleidx$. 
Hence, the federator obtains an estimate of client $\client$'s posterior distribution using the empirical average $\posteriorest = \frac{1}{\nmasksul} \sum_{\sampleidx=1}^{\nmasksul} \maskul$.

By averaging the estimates $\posteriorest$ for all the clients' models, the federator updates the global model as $\model[\epoch+1] = \frac{1}{\nclients} \sum_{\client=1}^\nclients \posteriorest$. To transmit the new model to each client $\client$, we assume the existence of a common prior $\priordl$ shared by the federator and the clients. With $\priordl$, the federator performs MRC in $\nblocks$ blocks of size $\dimension/\nblocks$ to make client $\client$ sample from, and thereby estimate, the latest global model $\model[\epoch+1]$. The client samples $\nmasksdl$ masks $\{\maskdl\}_{\sampleidx \in [\nmasksdl]}$, each incurring a communication cost of $\nblocks \log_2(\nisamples)$ bits. An estimate of the updated global model is obtained by concatenating the reconstructed samples for all the blocks $\blockidx \in [\nblocks]$, and averaging over all masks $\modelest[\epoch+1] = \frac{1}{\nmasksdl} \sum_{\sampleidx=1}^{\nmasksdl} \maskdl$.

Since the number of clients is typically large, it often suffices to choose $\nmasksul=1$. The clients' contributions are averaged at the federator, effectively reducing the noise due to the MRC step. This allowed \citet{isik2024adaptive} to theoretically analyze the uplink communication cost for  importance sampling-based stochastic communication of model updates.  
We will follow a similar approach for downlink communication; however, since downlink communication cannot benefit from the averaging effect of multiple clients, we reduce the variance of the model estimate in the downlink by setting 
$\nmasksdl = \nclients \cdot \nmasksul$.

The choice of the priors $\priorul$ and $\priordl$ for MRC in the uplink and downlink channels, respectively, crucially affects the performance and the communication cost of the algorithm. As a first-order characterization, the communication cost of MRC is determined by $\klinline{\posterior}{\priorul}$ in the uplink and by $\klinline{\model[\epoch+1]}{\priordl}$ in the downlink. %
\setlength{\textfloatsep}{9pt}
\begin{algorithm}[!t]
\caption{\algglobalrand\ with Global Randomness}
\label{alg:globalrand}
\begin{algorithmic}[1] %
\REQUIRE Both clients and federator initialize the same global model \rev{$\model[0]$} using a shared seed \\
\ENSURE Clients set prior $\prior = \modelest[0] = \model[0], \forall \client \in [\nclients]$ \\
\REPEAT
\FOR{Client $i \in [n]$}
\STATE \rev{$\posterior \gets$ Local training of $\modelest$} \\ %
\STATE Sample indices $\issampleidx, \sampleidx \in [\nmasksul], \blockidx \in [\nblocks]$ from $\posterior$ with prior $\prior$ and transmit to federator to reconstruct $\posteriorest$%
\ENDFOR
\STATE Federator updates global model $\model[\epoch+1] = \frac{1}{\nclients} \sum_{\client=1}^\nclients \posteriorest$ \\
\STATE Federator relays to client $j$ the other clients' indices $\{\issampleidx\}_{\sampleidx \in [\nmasksul], \blockidx \in [\nblocks], \client \in [\nclients]\setminus \{j\}}$ \\
\FOR{Clients $i \in [n]$}
\STATE Reconstruct $\modelest[\epoch+1] = \frac{1}{\nclients} \sum_{\client=1}^\nclients \posteriorest$ from $\{\issampleidx\}$ %

\ENDFOR
 \STATE Clients and federator set prior $\prior = \commonmodel[\epoch+1]$ \\
\STATE $t \gets t+1$
\UNTIL{Convergence}
\end{algorithmic}
\end{algorithm}

\textbf{Global Randomness.} When global shared randomness is available, all clients can maintain the same priors at each iteration $\epoch$, and, thereby, obtain the same global model estimates $\modelest$. The global model is known to the clients and the federator from initialization, and synchronization among all clients is ensured by choosing as prior $\priorul=\priordl$ the latest estimate of the global model $\modelest$. The clients utilize the globally shared randomness to sample the exact same samples from the same prior for uplink transmission at all iterations. Selected indices of such samples are transmitted to the federator to convey an estimate $\posteriorest$ of the posterior $\posterior$, who reconstructs the global model $\model[\epoch+1]$. Using the same prior in the downlink, i.e., the global model from the previous iteration, the updated model can be transmitted to the clients through MRC. Leveraging the shared randomness, all clients $\client \in [\nclients]$ sample from the same prior, and thus obtain the exact same estimate of the global model $\modelest[\epoch+1] = \commonmodel[\epoch+1]$, for all $\client \in [\nclients]$. Hence, we have that $\priorul = \priordl = \commonmodel$ for all $\client \in [\nclients]$.

In this version, the federator reconstructs the global model from estimates of the client posteriors $\posteriorest$. However, in the uplink, all clients sample from the same prior, which enables further improvements. Naively, the federator will reconstruct the global model using the indices $\issampleidx$ for $\blockidx \in [\nblocks], \sampleidx \in [\nmasksul]$ received by the clients $\client \in [\nclients]$ through MRC, followed by an additional round of MRC for downlink transmission.  Instead, and more efficiently, the federator can simply relay the indices to the respective other clients (i.e., client $j$ receives $\issampleidx$ for $\blockidx \in [\nblocks], \client \in [\nclients]\setminus \{j\}, \sampleidx \in [\nmasksul]$), which reconstruct the same updated global model individually. This avoids introducing additional noise by a second round of compression and allows better convergence without additional communication facilitated by global randomness. We term this approach \algglobalrand\ and summarize the procedure in \cref{alg:globalrand}.

\textbf{Private Randomness.}
Without global randomness, maintaining the same prior among all clients is impossible without introducing additional communication. Instead, an additional round of MRC is needed for the downlink transmission, and each client obtains a different estimate of the global model $\modelest$ at each iteration. Hence, the clients' local trainings start from different estimates of the global model. In a non-stochastic setting, such a phenomenon has only been considered by \citet{philippenko2021preserved, gruntkowska2024improving}. This raises the questions of the additional cost incurred due to lack of shared randomness in terms of both the convergence speed and the communication load and the choice of the priors $\priorul$ and $\priordl$.

\begin{algorithm}[!t]
\caption{\alglocalrand\ with Private Randomness}
\label{alg:localrand}
\begin{algorithmic}[1] %
\REQUIRE Both clients and federator initialize the same global model  \rev{$\model[0]$} using a shared seed \\
\ENSURE Clients set prior $\priorul = \priordl = \modelest[0] = \model[0], \forall \client \in [\nclients]$ \\
\REPEAT
\FOR{Client $i \in [n]$}
\STATE \rev{$\posterior \gets $ Local training of $\modelest$} \\ %
\STATE Federator \rev{employs $\iscompressionvar{\posterior}{\priorul}$ to draw} $\nmasksul$ \rev{samples} $\maskul \sim \posterior$ \rev{using} prior $\priorul$ \\
\STATE Federator est. client's posterior $\posteriorest = \frac{1}{\nmasksul} \sum_{\sampleidx=1}^{\nmasksul} \maskul$
\ENDFOR
\STATE Federator updates global model $\model[\epoch+1] = \frac{1}{\nclients} \sum_{\client=1}^\nclients \posteriorest$ \\
\FOR{Clients $i \in [n]$}
\STATE Client \rev{employs $\iscompressionvar{\model[\epoch+1]}{\priordl}$ to draw} $\nmasksdl$ \rev{samples} $ \maskdl \sim \model[\epoch+1]$ \rev{using} prior $\priordl$ \\
\STATE Client est. global model: $\modelest[\epoch+1] = \frac{1}{\nmasksdl} \sum_{\sampleidx=1}^{\nmasksdl} \maskdl$
\STATE Clients set prior $\priorul = \priordl = \modelest[\epoch+1]$ \\ 
\ENDFOR
\STATE $t \gets t+1$
\UNTIL{Convergence}
\end{algorithmic}
\end{algorithm}

For the uplink transmission of client $\client$, any convex combination of $\modelest$ and $\posteriorest$ can be used as prior, i.e., $\priorul = \lambda \modelest + (1-\lambda) \posteriorest[\epoch-1]$, for some $0 \leq \lambda \leq 1$.\footnote{This adds a negligible cost of transmitting $\lambda$ if it is to be optimized at each round\rev{, cf. \cref{sec:optimize_prior} for details}.} This is due to the availability of both quantities at the federator and client $\client$. However, small $\lambda$ values are not expected to reduce the cost of communication reflected by $\binkl{\posterior}{\priorul}$ since the \rev{previous} global model estimate is likely to be similarly different from the posterior (in terms of the KL-divergence) than the \rev{previous} posterior estimate of the federator. 
Indeed, our numerical experiments have shown that the savings from choosing $\lambda \neq 1$, i.e., priors other than $\modelest$, are not significant. For simplicity, we thus propose to use $\priorul = \priordl = \modelest$. We term this approach \alglocalrand\, and summarize the procedure in \cref{alg:localrand}. Choosing different priors is possible and only affects line 11 in \cref{alg:localrand}. 
We mention in passing that  \alglocalrand\ allows partial client participation, which is incompatible with shared randomness and the method \algglobalrand.

\textbf{Block Allocation. }
We consider three different block allocation strategies: 1) fixed block size (referred to as ``Fixed'' in the experiments), where each block $\blockidx \in [\nblocks]$ is of the same size and constant across all $t$; 2) adaptive block allocation (Adaptive) as proposed by \citet{isik2024adaptive}, where each block size is separately optimized each iteration $t$; and 3) adaptive average allocation (Adaptive-Avg), where the block sizes are equal but optimized at each iteration $t$ according to the average KL-divergence per block. We refer the reader to \cref{app:block_allocation} for a detailed discussion on this. 

\section{Experiments} \label{sec:experiments}

We conduct experiments to evaluate the performance of our proposed $\algglobalrand$ and $\alglocalrand$ schemes, and compare against baseline FL strategies without compression (FedAvg or PSGD) \cite{mcmahan2017communication} and several non-stochastic bi-directional compression schemes that employ different combinations of compression, error-feedback, and momentum. 
\begin{figure}
    \centering
    \includegraphics[width=1\linewidth]{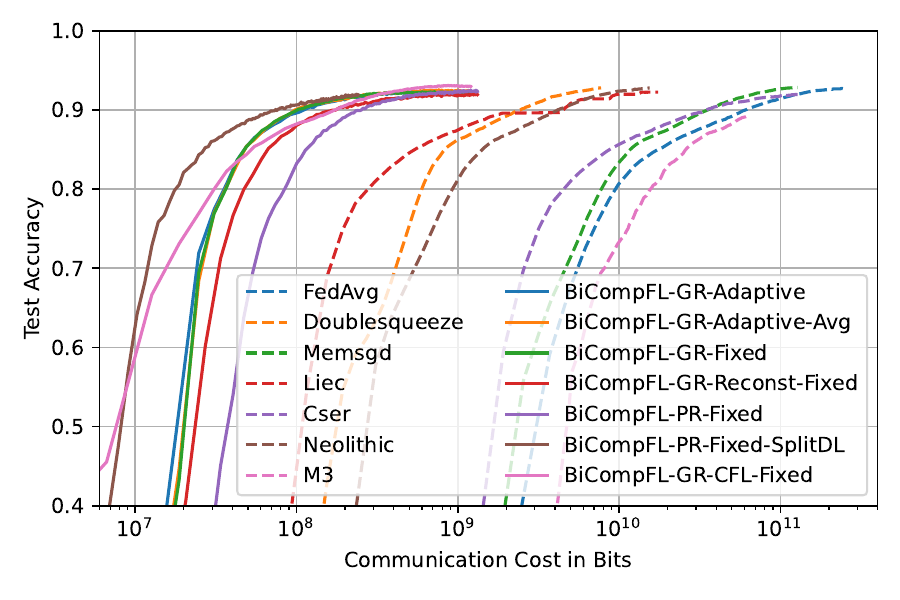} \vspace{-.7cm}
    \caption{Test accuracy for \alg\ and baselines on Fashion MNIST 4CNN on \iid\ data. %
    }
    \label{fig:mnist4cnn_uniform}
\end{figure}
\begin{figure*}[!t]
\subfigure[MNIST 4CNN \iid]{\includegraphics[width = .3298\linewidth]{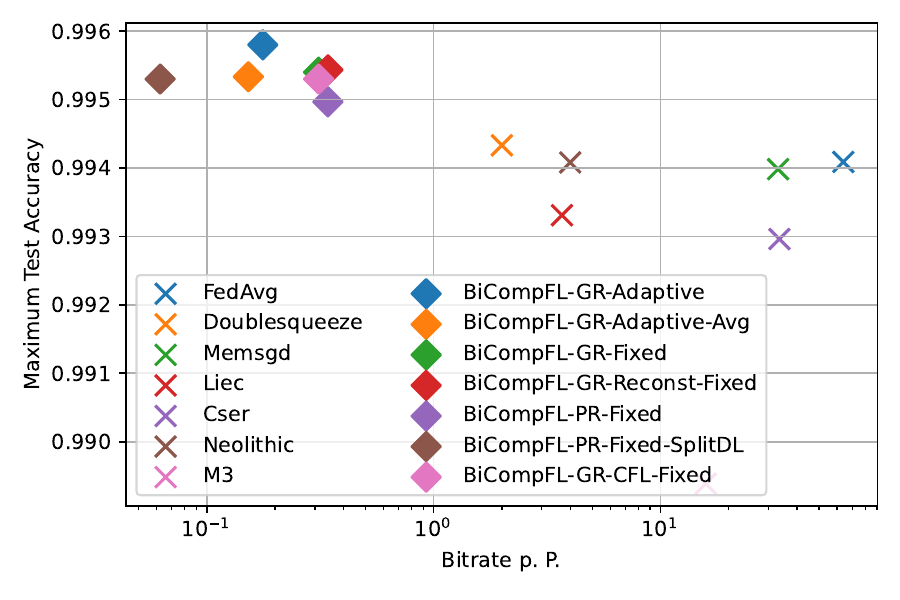}\label{fig:acc_bitrate_fmnist_iid}}
\subfigure[MNIST 4CNN \noniid]{\includegraphics[width = .3302\linewidth]{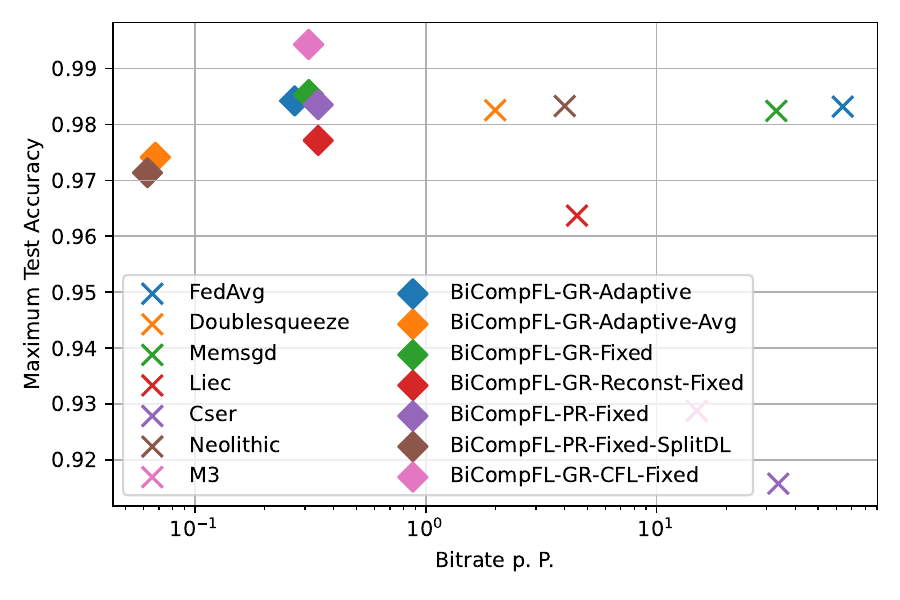}\label{fig:acc_bitrate_fmnist_noniid}}
\subfigure[CIFAR-10 6CNN \iid]{\includegraphics[width = .33\linewidth]{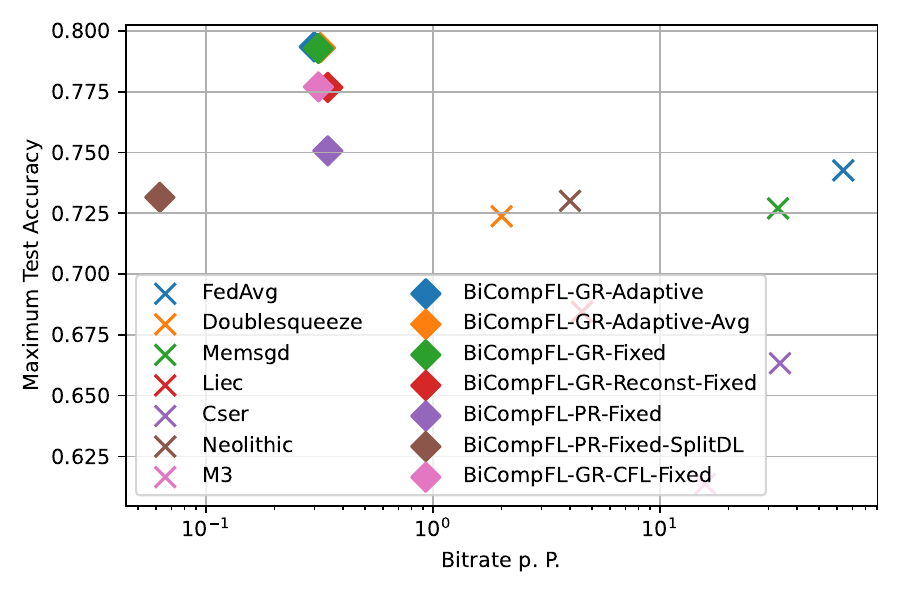}\label{fig:acc_bitrate_cifar_iid}}
\vspace{-.1cm}
\caption{Maximum test accuracy as a function of the total communication cost measured as the bitrate per parameter.} \vspace{-.2cm}
\end{figure*}
In particular, we compare against \textsc{DoubleSqueeze} \citep{tang2019doublesqueeze}, \textsc{Mem-SGD} \citep{stich2018sparsified}, \textsc{Neolithic} \citep{huang2022lower}, \textsc{Cser} \citep{xie2020cser}, and the recently proposed \textsc{Liec} \citep{cheng2024communication}. 
SignSGD \citep{seide2014onebit} serves to compress the transmitted gradients for all the schemes. 
We further compare with \textsc{M3} \citep{gruntkowska2024improving}, which partitions the model into disjoint parts for downlink transmission and transmits to each client a different part of the model. While M3 is focused on RandK compression for the uplink (i.e., transmitting random $K$ entries of the gradient), we use TopK \citep{wangni2018gradient, shi2019distributed}, which we found to achieve much more stable results.

\newcommand{\signsgdconst}{\ensuremath{K}}
\newcommand{\algglobalrandcfl}{\algglobalrand-CFL}
\rev{As mentioned above, the mirror descent approach outlined in \cref{sec:bicompfl} inherently minimizes the communication cost as a by-product; and hence, provides a strong candidate for communication-efficient stochastic FL. %
Nonetheless, we show how our method can be used to improve the communication efficiency in conventional FL by using the uplink and downlink compression $\iscompression$ combined with stochastic quantizers, e.g., \citep{alistarh2017qsgd}. In \cref{sec:theory}, we pave the way to convergence guarantees by proving a contraction property of $\iscompression$ composed with a stochastic quantization $\stochquant$ of gradients $\gradient$. To compare our method to the baselines that use SignSGD as compressor, we evaluate \algglobalrand\ in a conventional federated learning (CFL) task with a stochastic variant of SignSGD. %
We replace the mirror descent over Bernoulli masks by a standard learning procedure over a deterministic model, which takes as input the global model estimate $\modelest$, computes a gradient $\gradient$ (over $\nlocalepochs$ local epochs), and outputs a distribution $\stochquant[\gradient]$. In stochastic SignSGD, $\stochquant$ transforms each gradient entry $\gradiententry$ to a Bernoulli random variable with parameter $\posteriorentry = 1/(1+\exp(-\gradiententry/\signsgdconst))$ for some $\signsgdconst > 0$, where the random variable takes value $+1$ with probability $\posteriorentry$, and $-1$ otherwise. We then employ $\iscompressionvar{\posterior}{\priorul}$ to obtain samples $\maskul$, where the compression is performed element-wise. We apply this method to \algglobalrand\, where Step 6 is replaced by $\model[\epoch+1] = \model[\epoch] - \lr_s \frac{1}{\nclients} \sum_{\client=1}^\nclients \posteriorest$, where $\posteriorest=\frac{1}{\nmasksul} \sum_{\sampleidx=1}^{\nmasksul} \maskul$ and $\lr_s$ is the federator's learning rate. Step 9 is modified accordingly. The priors $\prior$ are chosen to be Bernoulli random variables with parameter $0.5$. We will refer to this method as \algglobalrandcfl.
}

We study the setting of $\nclients=10$ clients collaboratively training a convolutional neural network (CNN)-based classifier for the datasets MNIST, Fashion-MNIST and CIFAR-10 under the orchestration of a federator. For MNIST, we use two different models, LeNet-5 \citep{lecun1998gradient} and a $4$-layer convolutional neural network (4CNN) proposed by \citet{ramanujan2020whats}. The latter is also used to train on Fashion MNIST. For CIFAR-10, we use a larger neural network with $6$ convolutional layers (6CNN). We train MNIST and Fashion-MNIST for $200$ global iterations and CIFAR-10 for $400$ global iterations. Through all experiments and datasets, we carry $\nlocalepochs=3$ local iterations per client per global iteration. We evaluate the performance of the schemes in two different settings: with uniform data allocation (\iid) to model homogeneous systems and a \noniid\ setting to model heterogeneous systems, where data allocation for each client is drawn from a Dirichlet distribution with parameter $\alpha=0.1$. This is considered a rather challenging regime due to high-class imbalance. Every result shows the average across three simulation runs with different seeds. Details on the simulation setup and the network architectures are deferred to \cref{app:sim_setup}. Consistently throughout all experiments, our proposed methods provide order-wise improvements in the communication cost while achieving state-of-the art accuracies.

We plot in \cref{fig:mnist4cnn_uniform} the test accuracies for all the schemes as a function of the total communication cost in bits per parameter \rev{and per global iteration}. While all the schemes achieve approximately the same maximum test accuracy, \algglobalrand\ and \alglocalrand\ require substantially less communication. Hence, when the bandwidths of uplink and downlink transmissions are limited, both variations of the proposed method achieve better test accuracies. Turning our focus to the different variations of our scheme, it can be observed that, without partitioning the model for downlink compression, $\alglocalrand$ convergences significantly slower than $\algglobalrand$ for any block allocation method. This highlights the intuition above that the additional MRC step in downlink incurs further noise, which reduces the convergence speed. However, when we partition the model in the downlink and only send disjoint parts to each client through MRC (\alglocalrand-Fixed-SplitDL), the downlink communication cost reduces by a factor of $\nclients$. In the regime of Fashion MNIST with uniform data allocation, this comes without performance degradation, and is hence the method of choice in this regime. We additionally simulated \algglobalrand\ with the suboptimal implementation (\algglobalrand-Reconst-Fixed), in which the federator first reconstructs the global model, and then performs an additional MRC step for downlink transmission. This naturally reduces the convergence speed per iteration without gains in the communication cost. Hence, justifying the choice of \algglobalrand. %
\rev{We show that, in conventional FL, \algglobalrandcfl\ substantially reduces the communication cost without loss in performance. In some cases, especially for \noniid\ data, we even observe improved performance, which we attribute to implicit regularization. %
Note that \algglobalrandcfl\ provides improvements even without error-feedback or momentum. However, our method is fully compatible with such techniques, and can be used as a plug-in approach to further minimize the communication cost in many existing schemes. We study the convergence in \cref{sec:theory}.}

We plot in \cref{fig:acc_bitrate_fmnist_iid} the average bitrate of each scheme over the maximum test accuracy for MNIST and 4CNN. 
The average bitrate is reduced by more than a factor of $1000$ compared to FedAvg, and more than a \textbf{factor of $\mathbf{32}$} compared to \textsc{DoubleSqueeze}, $\textsc{Neolithic}$ and \textsc{Liec}, which perform best among the conventional bi-directional compression methods.

We perform the same study for \noniid data allocation according to a Dirichlet distribution with parameter $\alpha=0.1$, and show the maximum test accuracies over the average bitrate in \cref{fig:acc_bitrate_fmnist_noniid}. It can be found that partitioning the model in \alglocalrand\ worsens the final accuracy of the model. While the model converges faster, it does not achieve the same accuracies as \algglobalrand\ and \alglocalrand\ without partitioning. This hints towards hybrid schemes for \alglocalrand, where the training begins with partitioning on the downlink, and the scheme later switches to full transmission.

In \cref{fig:acc_bitrate_cifar_iid}, we provide the results for CIFAR-10 and uniform data allocation. \algglobalrand\ and \alglocalrand\ both achieve better results with a bitrate \textbf{smaller by a factor of $\mathbf{5}$} than the best baselines. More detailed numerical results can be found in \cref{app:additional_experiments,sec:hyperparameters}.

The adaptive block allocation (Adaptive) of \citet{isik2024adaptive} saves communication costs in many settings and provides better performance than the fixed block allocation (Fixed), due to more accurate MRC tailored to the exact divergences. The proposed low complexity adaptive strategy based on the average KL-divergence (Adaptive-Avg) per block can additionally save in communication (and computation) with no or little performance degradation. We refer the reader to \cref{app:additional_experiments} for further extensive experiments\rev{, graphs for accuracies over epochs, separate studies of uplink and downlink costs, and comparisons for the case of an available broadcast channel from federator to the clients. Further, we refer to \cref{sec:hyperparameters} for various ablation studies analyzing the sensitivity of \alg\ with respect to the choices of the priors, $\nclients$, $\nmasksdl$, $\nisamples$, and the block size $\dimension/\nblocks$.}

\section{Theoretical Results} \label{sec:theory}

\rev{\textbf{Convergence.} In stochastic FL, the exact dynamics of the system over time are challenging to analyze due to the round-dependent interplay of the learning procedure with the transmission noise. However, when using \alg\ for conventional FL with stochastic quantization (cf. \algglobalrandcfl), convergence guarantees can be given. For a comprehensive understanding, we prove the convergence for a general and widely used class of stochastic quantizers $\stochquant$, which are natively unbiased. %
$\stochquant$ takes as input the entry $\gradientvecentry$ of a gradient vector $\gradientvec \in \mathbb{R}^\dimension$  and operates as follows. Let $\nqints$ be the number of quantization intervals, and let $0 \leq \qchoice < \nqints$ be an integer such that $\frac{\qchoice}{\nqints} \leq \frac{\vert \gradientvecentry\vert}{\Vert \gradientvec \Vert} \leq \frac{\qchoice+1}{\nqints}$, then $\stochquant[\gradientvecentry]$ outputs $\Vert \gradientvec \Vert \cdot \text{sign}(\gradientvecentry) (\qchoice+1) / \nqints$ with probability $\frac{\vert \gradientvecentry\vert}{\Vert \gradientvec \Vert} \nqints - \qchoice$, and $\Vert \gradientvec \Vert \cdot \text{sign}(\gradientvecentry) \qchoice/\nqints$ otherwise. $\stochquant$ is unbiased, i.e., $\mathbb{E}[\stochquant[\mathbf{x}]] = \mathbf{x}$, and its variance satisfies $\mathbb{E}[\Vert\stochquant[\mathbf{x}] - \mathbf{x} \Vert^2] \leq \min\{\dimension/\nqints^2, \sqrt{\dimension}/\nqints\} \Vert \mathbf{x} \Vert_2^2$ \citep{alistarh2017qsgd}.}

\rev{Replacing stochastic SignSGD by $\stochquant$ in \algglobalrandcfl, the posterior is given by a Bernoulli distribution with parameter %
$\posteriorentry = \frac{\vert \gradiententry\vert}{\Vert \gradient \Vert} \nqints - \qchoice$. The values $\norm{\gradientvec}$, $\text{sign}(\gradientvec)$, and $\qchoice$ can be encoded independently, e.g., using Elias coding. %
With a slight abuse of notation, let $\iscompressionvar{\stochquant}{\cdot}$ denote the composition of $\stochquant$ and MRC with $\nisamples$ samples per entry. The compression $\iscompressionvar{\stochquant[\gradient]}{\cdot}$ takes a gradient $\gradient$ and outputs samples from a distribution close to $\stochquant[\gradient]$, and falls in the class of biased compressors. We can prove the following contraction property for $\iscompressionvar{\stochquant}{\cdot}$, which will facilitate convergence analysis. A prominent biased contractive compressor is TopK.}
\rev{
\begin{lemma} \label{lemma:contraction}
For any $\mathbf{x} \in \mathbb{R}^\dimension$ and corresponding posterior $\posteriorgen$ following $\stochquant[\mathbf{x}]$, and a prior $\priorgen \in [0,1]^\dimension$, let $\bar{\Delta} \define \max_{\entry \in [\dimension]} \frac{\posteriorentrytmp}{\priorentry} - \frac{1-\posteriorentrytmp}{1-\priorentry}$, $\bar{\Delta}^\prime \define \max_{\entry \in [\dimension]} \posteriorentrytmp \left(\frac{\priorentry}{\posteriorentrytmp} + \frac{1-\priorentry}{1-\posteriorentrytmp}\right)$, and $\bar{p} \define \max_{\entry \in [\dimension]} \priorentry$. The compressor $\iscompression[\stochquant]$ satisfies the following contraction property for $\nisamples = \mathcal{O}(\max\{\sqrt{2\bar{\Delta}^\prime}, \log(6 \bar{p} (\bar{\Delta} + \bar{\Delta}^2)) \sqrt{6 \bar{p} (\bar{\Delta} + \bar{\Delta}^2)}\})$ and $s \geq \sqrt{2\dimension}$:
\begin{align*}
\mathbb{E}[\Vert \iscompression[\stochquant[\mathbf{x}]]- \mathbf{x} \Vert^2] \leq (1-\delta) \Vert \mathbf{x} \Vert^2,
\end{align*}
for $\delta = 1-\frac{\dimension}{\nqints^2} \left(1+\frac{\bar{\Delta}^\prime}{\nisamples^2} + \mathcal{O}\left((\bar{\Delta} + \bar{\Delta}^2) \sqrt{\frac{6 \bar{p} \log\left(2\nisamples\right)}{\nisamples}}\right)\right)$.
\end{lemma}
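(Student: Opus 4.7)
The plan is to exploit the element-wise structure of the composed compressor. Since $\stochquant$ acts coordinate-wise, outputting for entry $\entry$ one of two levels with Bernoulli probability $\posteriorentrytmp$, and $\iscompression$ is applied element-wise with Bernoulli prior $\priorentry$, the analysis reduces to a per-coordinate computation. Let $\hat{q}_\entry$ be the random probability (determined by the realised $\nisamples$ prior samples) that MRC selects an index pointing to a $1$ on coordinate $\entry$. The output is $Y_\entry = \norm{\mathbf{x}}\,\mathrm{sign}(\tmpvecentry)(\qchoice + B_\entry)/\nqints$ with $B_\entry \mid \hat{q}_\entry \sim \ber{\hat{q}_\entry}$, whereas by the unbiasedness of $\stochquant$ we have $\tmpvecentry = \norm{\mathbf{x}}\,\mathrm{sign}(\tmpvecentry)(\qchoice + \posteriorentrytmp)/\nqints$. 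A bias--variance decomposition then gives
\begin{equation*}
\mathbb{E}\bigl[\norm{\iscompression[\stochquant[\mathbf{x}]] - \mathbf{x}}^2\bigr] = \frac{\norm{\mathbf{x}}^2}{\nqints^2} \sum_{\entry=1}^{\dimension} \bigl(\mathbb{E}[\hat{q}_\entry](1-\mathbb{E}[\hat{q}_\entry]) + (\mathbb{E}[\hat{q}_\entry] - \posteriorentrytmp)^2\bigr),
\end{equation*}
so the entire error is pinned down once the bias $|\mathbb{E}[\hat{q}_\entry] - \posteriorentrytmp|$ and the spread of $\hat{q}_\entry$ are controlled for each coordinate.

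The technical heart of the proof is this Bernoulli-MRC computation, and it is where I expect the main difficulty. Explicitly, $\hat{q}_\entry$ is a rational function of a binomial count $K_\entry \sim \mathrm{Binomial}(\nisamples, \priorentry)$, namely $\hat{q}_\entry = K_\entry(\posteriorentrytmp/\priorentry) / \bigl(K_\entry(\posteriorentrytmp/\priorentry) + (\nisamples - K_\entry)(1-\posteriorentrytmp)/(1-\priorentry)\bigr)$. My plan is to Taylor expand the map $k \mapsto \hat{q}(k)$ around its mean $\nisamples\priorentry$; the first derivative is proportional to $\bar{\Delta}$ (the gap between the two likelihood ratios in the definition of $\hat{q}$), and the second-order terms expose the chi-squared-type quantity encoded by $\bar{\Delta}'$. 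On the Bernstein event $|K_\entry - \nisamples \priorentry| \leq \mathcal{O}(\sqrt{\nisamples \bar{p}\log\nisamples})$, the expansion yields a bias of order $\bar{\Delta}\sqrt{\bar{p}\log\nisamples/\nisamples}$ and a variance of order $\bar{\Delta}'/\nisamples^2$ up to lower-order terms; off the event, the trivial bound $\hat{q}_\entry \in [0,1]$ combined with the exponentially small tail contributes negligibly. The stated lower bound on $\nisamples$ is precisely the threshold above which these concentration arguments become informative, and this sharpening of the distribution-free Chatterjee--Diaconis estimate is the refined Bernoulli analysis advertised as a contribution of the paper.

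Substituting the per-coordinate bounds into the decomposition and summing over $\entry \in [\dimension]$ converts the local $(\norm{\mathbf{x}}/\nqints)^2$ factor into the global $\dimension/\nqints^2$ appearing in $\contraction$, producing
\begin{equation*}
\mathbb{E}\bigl[\norm{\iscompression[\stochquant[\mathbf{x}]] - \mathbf{x}}^2\bigr] \leq \frac{\dimension}{\nqints^2}\left(1 + \frac{\bar{\Delta}'}{\nisamples^2} + \mathcal{O}\!\left((\bar{\Delta}+\bar{\Delta}^2)\sqrt{\frac{6 \bar{p}\log(2\nisamples)}{\nisamples}}\right)\right)\norm{\mathbf{x}}^2.
\end{equation*}
The hypothesis $\nqints \geq \sqrt{2\dimension}$ forces $\dimension/\nqints^2 \leq 1/2$, leaving room for the $\nisamples$-dependent corrections to stay strictly below $1$ and securing genuine contraction. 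The main obstacle throughout is the second paragraph: the per-coordinate rational-function Taylor bound together with its Bernstein-type concentration leg; everything else is bias--variance bookkeeping and routine summation over entries.
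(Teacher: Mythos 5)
Your proposal follows essentially the same route as the paper: it reduces to a per-coordinate two-point computation (your bias--variance identity is algebraically equivalent to the direct expansion in the paper's proof of \cref{lemma:contraction}), and it controls $\vert \mathbb{E}[\hat{q}_\entry] - \posteriorentrytmp\vert$ via binomial concentration of the prior-sample count plus an expansion of the importance-weight ratio around its mean, which is precisely the content of \cref{lemma:qdiv}. The only cosmetic discrepancy is bookkeeping: in the paper the $\bar{\Delta}^\prime/\nisamples^2$ term arises as the off-event (tail-probability) contribution rather than as a second-order ``variance'' term of the Taylor expansion, but your plan accounts for that tail separately and lands on the same final bound.
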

The underlying core result is a refinement of the MRC analysis, cf. \cref{lemma:qdiv} (\cref{app:proofs}). Hence, for sufficiently large $\nisamples$, the compressor $\iscompressionvar{\stochquant}{\cdot}$ can be used as an alternative to common compressors such as $\stochquant$. The use of MRC introduces a bias into the otherwise unbiased stochastic quantization. 
Based on the contraction property in \cref{lemma:contraction}, standard convergence results follow easily, cf. \cref{sec:convergence_analysis}. %
}

\rev{\textbf{Communication Cost.} We continue to analyze the communication cost} in a specific iteration $t$ and comment on the inter-round dependency later. When the latest global model estimate $\modelest$ is chosen as a prior in MRC, the cost of communication on the uplink is mainly determined by how far the model evolves during the client's training, i.e., $\binklinline{\posterior}{\priorul} = \binklinline{\posterior}{\modelest}$. After communicating samples of the posteriors, the federator obtains an estimate $\posteriorest$ for all $\client \in [\nclients]$. The cost of communication on the downlink to client $\client$ is then determined by $\binklinline{\frac{1}{\nclients} \sum_{\client=1}^{\nclients} \posteriorest}{\modelest}$. While $\binklinline{\posterior}{\modelest}$ depends on the progress during client training, the core challenge is to bound the expected KL-divergence of each model estimate $\binklinline{\posteriorest}{\modelest}$ in the presence of potentially different priors, i.e., $\modelest \neq \modelestj, \client \neq j$. %
For each client $\client$, the overall communication cost is in the order of \vspace{-.2cm}
\begin{equation*}
    \nmasksdl \! \exp\!\bigg(\!\!\mathrm{d}_\textrm{KL}\bigg(\frac{1}{\nclients} \! \sum_{\client=1}^{\nclients} \posteriorest \Vert \priordl \bigg)\bigg) + \nmasksul \exp\!\left(\binkl{\posterior}{\priorul}\!\right)\!. \vspace{-.3cm}
\end{equation*}
We will next quantify $\binklinline{\frac{1}{\nclients} \sum_{\client=1}^{\nclients} \posteriorest}{\modelest}$ for the case $\priorul=\priordl$, however, the analysis can be extended to $\priorul \neq \priordl$ by an additional assumption on the divergence between the two priors.

For the theoretical analysis, we focus on the scalar case for a single iteration $t$, where client $\client\in [\nclients]$ has a posterior $\genericposterior_\client$, and the federator and client $\client$ share a common prior $\genericprior_\client$, both are Bernoulli distributions with parameters $q_\client$ and $p_\client$, respectively. In the context of FL, the client locally trains $\genericprior_\client$ and results with $\genericposterior_\client$. According to \citet{chatterjee2018sample} and the multi-client extension of \citet{isik2024adaptive}, the communication cost in the uplink is determined by $\exp(\binkl{\genericposterior_\client}{\genericprior_\client})$. After uplink transmission, the federator obtains an estimate $\hat{q}_\client$ of $q_\client$; and hence, the updated global model is given by $\frac1n \sum_{\client=1}^\nclients \hat{q}_\client$. The communication cost in the downlink for client $\client$ is determined by $\binkl{\frac{1}{\nclients} \sum_{\client=1}^\nclients \hat{q}_\client}{p_\client}$. Our theoretical contribution is  a new high probability upper bound on this quantity, which refines previous  MRC analysis, for the special case of Bernoulli distributions. 
Let $X$ be a Bernoulli sample obtained through MRC. 
As an initial step, we derive an upper bound on the difference between $q_i$ and the probability $\Pr(X)=1$ that the samples are drawn from, which vanishes when $p_\client = q_\client$ (and hence $\binkl{q_\client}{p_\client} = 0$). We note that the bound of \citet[Theorem 1.1]{chatterjee2018sample} does not saitsfy this natural property. We formally state the result in \cref{prop:loose_bound} in \cref{app:proofs}, which, however, does not yet capture the dependency on the number of samples $\nisamples$ used in MRC to sample an index. We refine \cref{prop:loose_bound} with \cref{lemma:qdiv} (cf. \cref{app:proofs}), which additionally captures this dependency, and will allow us to derive an upper bound on $\binkl{\frac1n \sum_{\client=1}^\nclients \hat{q}_\client}{p_\client}$. \cref{lemma:qdiv}  is of independent interest and can be seen as a refinement of the analysis by \citet{chatterjee2018sample} for Bernoulli distributions. It is required to prove \cref{thm:downlink_kl}.

\newcommand{\priordiv}{\ensuremath{\zeta}}
For the statement of the following theorem, we assume that the progress by one local client training is bounded by $\vert q_j - p_j \vert \leq \klball$ for all $j \in [\nclients]$. Using Pinsker's inequality to bound $\vert q_j - p_j \vert \leq \frac{1}{2}\sqrt{\binkl{q_j}{p_j}/2}$, this is a natural assumption given from the KL-proximity term of mirror descent (for one local iteration), and can be strictly enforced through the projection of $q_j$ onto a KL ball around $p_j$ of fixed divergence. We assume that the difference between the clients' priors, i.e., their global model estimates in our algorithms, are bounded as $\vert p_\client - p_j\vert \leq \priordiv$ for all $\client, j \in [\nclients]$.
\begin{theorem} \label{thm:downlink_kl}
Assume $p_j \! > \! \priordiv$ for all $j \! \in \! [\nclients]$, for $\Delta_j \! \define \! \frac{q_j}{p_j-\priordiv} - \frac{1-q_j}{1-p_j+\priordiv}$ and $\Delta^\prime_j \define \! q_j\big(\frac{p_j+\priordiv}{q_j} + \frac{1-p_j+\priordiv}{1-q_j}\big)$, with probability $1-\delta^\prime$, the global model divergence $\binklinline{\frac{1}{\nclients} \sum_{j=1}^\nclients \hat{q}_j}{p_\client}$ is upper bounded by \vspace{-.2cm}
\begin{align*}
\sum_{j=1}^\nclients \frac{2}{\nclients \min\{p_\client, 1-p_\client\}} \Bigg(\frac{\Delta^\prime_j}{\nisamples^2} + \! + \! \sqrt{\frac{\ln(2/\delta^\prime)}{2 \nmasksul}} \! + \! \klball \! + \! \priordiv^2 +\!\!\!\!\\
+\mathcal{O}\bigg((\Delta_j + \Delta^2_j) \sqrt{\frac{6 (p_\client+\priordiv) \log\left(2\nisamples\right)}{\nisamples}}\bigg)\Bigg)
\end{align*} \vspace{-.5cm}
\end{theorem}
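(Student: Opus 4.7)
The plan is to reduce the Bernoulli KL divergence to a sum of per-client squared deviations via reverse Pinsker and Jensen, then handle each term with MRC and concentration tools. First, the reverse Pinsker inequality $\binkl{a}{b} \leq (a-b)^2/(b(1-b)) \leq 2(a-b)^2/\min\{b, 1-b\}$ gives
\[
\binkl{\tfrac{1}{\nclients}\sum_{j=1}^\nclients \hat{q}_j}{p_\client} \leq \frac{2}{\min\{p_\client, 1-p_\client\}} \bigg(\tfrac{1}{\nclients}\sum_{j=1}^\nclients \hat{q}_j - p_\client\bigg)^{\!2}\!,
\]
and Jensen's inequality bounds the squared average by the average of squares, $(\tfrac{1}{\nclients}\sum_j(\hat{q}_j - p_\client))^2 \leq \tfrac{1}{\nclients}\sum_j(\hat{q}_j - p_\client)^2$, reducing the problem to a per-client estimate.

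For each $j$, I would expand $(\hat{q}_j - p_\client)^2 = ((\hat{q}_j - p_j) + (p_j - p_\client))^2 \leq (\hat{q}_j - p_j)^2 + 2|\hat{q}_j - p_j|\priordiv + \priordiv^2$ using $|p_j - p_\client| \leq \priordiv$. Since probabilities lie in $[0,1]$, the squared term $(\hat{q}_j - p_j)^2$ is upper bounded by $|\hat{q}_j - p_j|$, producing a linear-in-deviation expression plus the desired $\priordiv^2$ (rather than a $\priordiv$) contribution. The triangle inequality then gives $|\hat{q}_j - p_j| \leq |\hat{q}_j - \mathbb{E}[\hat{q}_j]| + |\mathbb{E}[\hat{q}_j] - q_j| + |q_j - p_j|$, and I would bound the three terms respectively by Hoeffding's inequality on the $\nmasksul$ binary MRC samples (yielding $\sqrt{\ln(2/\delta')/(2\nmasksul)}$ with high probability), by Lemma~2 applied to client $j$'s MRC (yielding the $\Delta'_j/\nisamples^2$ main term and the $\mathcal{O}\bigl((\Delta_j + \Delta_j^2)\sqrt{6(p_\client+\priordiv)\log(2\nisamples)/\nisamples}\bigr)$ correction), and by the KL-proximity hypothesis $|q_j - p_j| \leq \klball$.

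The main obstacle is invoking Lemma~2 correctly under the prior mismatch: client $j$'s MRC is executed with prior $p_j$, whereas the target divergence is taken against $p_\client$, which is only guaranteed to be within $\priordiv$ of $p_j$. To uniformly bound the MRC bias under this mismatch, one replaces $p_j$ by the worst-case prior inside the ball $[p_j - \priordiv,\, p_j + \priordiv]$ in the quantities controlling Lemma~2, which is precisely why $\Delta_j$ and $\Delta'_j$ are defined with $p_j \pm \priordiv$ in the theorem statement. The assumption $p_j > \priordiv$ is essential: it guarantees $p_j - \priordiv > 0$ so the denominators in $\Delta_j$ remain strictly positive and Lemma~2's hypotheses are met. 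Substituting the per-client bound back into the Jensen-averaged sum, multiplying by the reverse Pinsker prefactor $2/\min\{p_\client, 1-p_\client\}$, and taking a union bound over the $\nclients$ clients for the Hoeffding events yields the stated high-probability inequality.
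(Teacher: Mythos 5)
Your proposal is correct and follows essentially the same route as the paper's proof: the same decomposition of $\hat{q}_j - p_\client$ into sampling noise (Hoeffding over the $\nmasksul$ MRC samples), MRC bias (\cref{lemma:qdiv} with the worst-case prior in the $\priordiv$-ball around $p_j$, which is exactly why $p_j>\priordiv$ is assumed), local progress ($\klball$), and prior mismatch ($\priordiv$), followed by reverse Pinsker. The only cosmetic difference is the order of operations --- you apply reverse Pinsker to the average and then Jensen, while the paper first uses convexity of the KL-divergence and then reverse Pinsker per client --- which yields the same bound.
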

By \citet{chatterjee2018sample}, this provides an immediate bound on the cost of downlink transmission. The bound applies to both algorithms \alglocalrand\ and \algglobalrand. However, when all priors $p_j$ are the same (such as in \algglobalrand-Reconst), i.e., $\priordiv=0$, the bound simplifies accordingly. The explicit dependency on the factor $1/\sqrt{\nmasksul}$ reflects the interplay between uplink and downlink cost. The parameter $\priordiv$ gives rise to an inter-round dependency of the communication cost. The more accurate the estimation of the global model in the previous iteration (given the priors are chosen as $\modelest$), the smaller $\priordiv$, and hence the lower the transmission cost in the subsequent iteration. The proofs of \cref{prop:loose_bound,lemma:qdiv,thm:downlink_kl} can be found in \cref{app:proofs}.

\section{Related Work}

Followed by the introduction of FL by \citet{mcmahan2017communication}, lossy compression of gradients or model updates has been a long studied narrative in FL, with prominent representatives such as SignSGD, also known as 1-bit Stochastic Gradient Descent (SGD) \citep{seide2014onebit}, QSGD \citep{alistarh2017qsgd}, TernGrad \citep{wen2017terngrad}, SignSGD with error feedback \citep{karimireddy2019error}, vector-quantized SGD \citep{gandikota2021vqsgd} and natural compression \citep{horvoth2022natural}. Such methods retain satisfactory final model accuracy even with aggressive quantization. Sparsification-based methods have also been considered as alternatives, e.g., TopK \citep{wangni2018gradient, shi2019distributed}. The importance of bi-directional gradient compression in many settings was outlined by \citet{philippenko2020bidirectional}. Many schemes were proposed that leverage combinations of gradient compression in the uplink and downlink, error-feedback, and momentum, e.g., {Mem-SGD} \citep{stich2018sparsified}, {DoubleSqueeze} \citep{tang2019doublesqueeze}, block-wise SignSGD with momentum \citep{zheng2019communication}, communication-efficient SGD with error reset ({Cser}) \citep{xie2020cser}, \rev{Artemis \citep{philippenko2020bidirectional}}, {Neolithic} \citep{huang2022lower}, \textsc{DoCoFL} \citep{dorfman2023docofl}, EF21-P and friends \citep{gruntkowska2023ef21}, 2Direction \citep{tyurin2023direction}, M3 \citep{gruntkowska2024improving}, and LIEC \citep{cheng2024communication}. With the exception of the methods MCM \citep{philippenko2021preserved} and M3 \citep{gruntkowska2024improving}, each client receives the same broadcast, potentially compressed, global gradient or model update. %
\citet{isik2024adaptive} studied uplink compression for stochastic FL and showed significant communication reduction with competitive performance. Their framework, termed KLMS, applies to a variety of stochastic compressors and to Bayesian FL settings, e.g., QLSD \cite{vono2022qlsd}. The compression is based on importance sampling and MRC, thoroughly studied by \citet{chatterjee2018sample} and \citet{havasi2018minimal}. Such methods, known as relative entropy coding, have been used in FL in conjunction with differential privacy, cf. DP-REC \citep{triastcyn2022dprec}.

Since the lottery ticket hypothesis \citep{frankle2018lottery}, finding sparse subnetworks of neural networks that achieve satisfactory accuracy was investigated. \citet{ramanujan2020whats} showed that randomly weighted networks contain suitable subnetworks of large neural networks capable of achieving competitive performance. \citet{isik2023sparse} formulated a probabilistic method of training neural network masks collaboratively in an FL context.

\section{Conclusion}
\vspace{-.2cm}
We illuminated the problem of bi-directional compression in stochastic FL using the specific instance of federated probabilistic mask training\rev{, which we showed to inherently optimize both the learning objective and the communication costs.} By leveraging side-information through carefully chosen prior distributions, the total communication costs can be reduced by factors between $5$ and $32$ compared to non-stochastic FL baselines while achieving state-of-the-art accuracies on classification tasks in both homogeneous and heterogeneous FL regimes. We thereby close the gap of downlink compression for stochastic FL and complement the existing literature on bi-directional compression for standard FL. \rev{Applying our methods to stochastic quantization in conventional FL, we paved the way to convergence analysis for MRC-based compression.} Allowing different priors among all clients, this work opens the door to studying compression under side-information in \emph{decentralized stochastic FL}, %
where a central coordinator is missing. 
Our theoretical results are of independent interest and may be applied in various scenarios where MRC is used.

\section*{Impact Statement}

This paper presents work whose goal is to advance the field of 
Machine Learning. There are many potential societal consequences 
of our work, none which we feel must be specifically highlighted here.

\bibliography{refs}
\bibliographystyle{iclr2025_conference}

\appendix
\newpage

\onecolumn

\section{Reproducibility}
In addition to the algorithmic details and the clients' training procedure function (cf. \cref{alg:local_training,alg:globalrand,alg:localrand}), we provide in \cref{sec:experiments} the most important hyperparameters used in our experiments, such as local and global iterations, and data allocation. Further parameter information, such as batch size, learning rates and the choice of the optimizer can be found in \cref{app:additional_experiments}, together with details on the neural network architectures and the hardware cluster used for running the experiments. Particularities of the block allocation required for the operation of our schemes are described in \cref{app:block_allocation}. All assumptions required for the theoretical analysis are stated in \cref{sec:theory}. Full proofs of all claims, including formal statements, can be found in \cref{app:proofs}.

\section{Proofs and Intermediate Results} \label{app:proofs}

In the following, we provide the formal statements of \cref{prop:loose_bound,lemma:qdiv} including their proofs. Parts of the proof of \cref{prop:loose_bound} will be used to prove \cref{lemma:qdiv}. We prove \cref{thm:downlink_kl} afterward.

\begin{proposition} \label{prop:loose_bound}
For a sample $X_\sampleidx$ transmitted by MRC with posterior and prior Bernoulli distributions with parameters $q$ and $p$, we have
\begin{align*}
     \vert \Pr(X_\sampleidx = 1) - q \vert &\leq q \left(\max\left\{\frac{p}{q}, \frac{1-p}{1-q}, \frac{q}{p}, \frac{1-q}{1-p}\right\} - 1\right).
\end{align*}
\end{proposition}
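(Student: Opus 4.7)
The plan is to analyze $\Pr(X_\sampleidx = 1)$ directly by conditioning on the number of ones among the $\nisamples$ prior samples. Let $X_1, \ldots, X_{\nisamples}$ be i.i.d.\ $\ber{p}$, and set $K = \sum_{\isidx = 1}^{\nisamples} X_\isidx \sim \mathrm{Bin}(\nisamples, p)$. Writing $a = q/p$ and $b = (1-q)/(1-p)$, the importance weights $w_\isidx = q(X_\isidx)/p(X_\isidx)$ take only two values ($a$ on ones, $b$ on zeros), so the auxiliary distribution $\auxdist$ gives
\[
\Pr(X_\sampleidx = 1 \mid K = k) \;=\; \frac{ka}{ka + (\nisamples - k) b}
\]
for $k \geq 1$ (and $0$ for $k = 0$).

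The core step is a two-sided sandwich of this conditional probability. Bounding the denominator above by $\nisamples \max(a,b)$ and below by $\nisamples \min(a,b)$ yields
\[
\frac{k}{\nisamples}\min\!\left(1, \frac{a}{b}\right) \;\leq\; \Pr(X_\sampleidx = 1 \mid K = k) \;\leq\; \frac{k}{\nisamples}\max\!\left(1, \frac{a}{b}\right).
\]
Taking expectation over $K$ and using $\mathbb{E}[K/\nisamples] = p$ together with the identity $p \cdot a/b = q(1-p)/(1-q)$, I obtain
\[
\min\!\left(p, \frac{q(1-p)}{1-q}\right) \;\leq\; \Pr(X_\sampleidx = 1) \;\leq\; \max\!\left(p, \frac{q(1-p)}{1-q}\right).
\]

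Subtracting $q$ and noting that the two candidate deviations $p - q$ and $q(1-p)/(1-q) - q = q(q-p)/(1-q)$ have opposite signs, the absolute error is at most $\max\{|p-q|,\, q|p-q|/(1-q)\}$. It remains to match this against $q(M-1)$. Expanding, $q(M-1) = \max\{p-q,\, q(q-p)/(1-q),\, q(q-p)/p,\, q(p-q)/(1-p)\}$, one term per ratio in $M$. A short case split on the sign of $p - q$ closes the argument: for $p \geq q$, the inequality $1-q \geq 1-p$ gives $q(p-q)/(1-q) \leq q(p-q)/(1-p)$, matching the $(1-q)/(1-p)$ term, while $p - q$ matches the $p/q$ term; for $q \geq p$, $q/p \geq 1$ gives $q - p \leq q(q-p)/p$, matching the $q/p$ term, while $q(q-p)/(1-q)$ matches the $(1-p)/(1-q)$ term.

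The main subtlety I anticipate is cosmetic rather than technical: the sandwich bounds must be chosen so that their structure mirrors exactly the four ratios forming $M$, rather than collapsing to a tighter but simpler quantity such as $|p-q|$ (which would also suffice, but would not expose the $M-1$ form needed for the subsequent refinement in \cref{lemma:qdiv} and \cref{thm:downlink_kl}).
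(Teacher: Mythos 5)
Your proof is correct, and it reaches the stated bound by a genuinely different argument at the core step. Both you and the paper start from the same exact representation obtained by conditioning on the number of ones among the $\nisamples$ prior samples: your $\mathbb{E}_K\bigl[Ka/(Ka+(\nisamples-K)b)\bigr]$ with $K\sim\mathrm{Bin}(\nisamples,p)$ is, after the substitution $k=\bsample+1$ and the identity $\nisamples\binom{\nisamples-1}{k-1}=k\binom{\nisamples}{k}$, exactly the quantity the paper writes as $q\,\mathbb{E}_{\bsample}\bigl[1/\mathbb{E}_{\ber{(\bsample+1)/\nisamples}}[\fracrv]\bigr]$ with $\bsample\sim\mathrm{Bin}(\nisamples-1,p)$. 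From there the paper factors out $q$ and controls the reciprocal of the inner expectation via Jensen's inequality (upper bound) and the tangent-line inequality $1/x\ge 2-x$ (lower bound), before bounding $\max\{\fracrv,1/\fracrv\}$ pointwise by the four-ratio maximum. You instead sandwich the conditional probability between $\frac{k}{\nisamples}\min(1,a/b)$ and $\frac{k}{\nisamples}\max(1,a/b)$ by bounding the denominator between $\nisamples\min(a,b)$ and $\nisamples\max(a,b)$, and then compute the outer expectation exactly via $\mathbb{E}[K]=\nisamples p$. This is more elementary (no convexity argument needed) and yields the weakly tighter intermediate bound $\max\{|p-q|,\,q|p-q|/(1-q)\}$, which your sign case split correctly dominates by $q(M-1)$. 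What the paper's heavier route buys is reusability: the representation $q\,\mathbb{E}\bigl[1/\mathbb{E}_{\ber{\cdot}}[\fracrv]\bigr]$ is the exact launching point for the proof of \cref{lemma:qdiv}, where concentration of $\bsample$ and a Taylor expansion of the reciprocal recover the dependence on $\nisamples$ that your sandwich discards; you anticipate this correctly in your closing remark. Two minor caveats that do not affect validity: your parenthetical that $|p-q|$ alone ``would also suffice'' is not established by your argument (your sandwich carries the extra factor $q/(1-q)$, which exceeds $1$ when $q>1/2$), and, as in the paper's own proof, the degenerate cases $q\in\{0,1\}$ must be implicitly excluded for the ratios to be well defined.
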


\begin{proof}[Proof of \cref{prop:loose_bound}]
Assume a party wants to sample from a Bernoulli distribution $\genericposterior$ with parameter $q$, which is held by another party. Both parties share a common prior $\genericprior$ in the form of a Bernoulli distribution with parameter $p$ and have access to shared randomness. Fix any sample index $\sampleidx$ for the moment (this index will be needed for the proof of \cref{thm:downlink_kl}). Both parties sample $\ngenericsamples \nisamples$ i.i.d. samples $\isampleall \sim \genericprior$ for $\isampleidx \in [\nisamples]$ independently and identically from $\genericprior$. The party holding $\genericposterior$ constructs an auxiliary distribution $$W_\sampleidx(\isampleidx) = \frac{\genericposterior(\isampleall)/\genericprior(\isampleall)}{ \sum_{\isampleidx=1}^{\nisamples} \genericposterior(\isampleall)/\genericprior(\isampleall)},$$ from which it samples to obtain an index $I_\sampleidx$. The index is transmitted to the other party, which reconstructs the corresponding sample $\isamplesel$.

To bound the difference $\vert \Pr(X_\sampleidx = 1) - q \vert$, i.e., the target Bernoulli parameter compared to the parameter which the sample is drawn from, by the independence of the samples $\isamplesel$ for different $\sampleidx$, we focus on a single sample $\sampleidx \in [\ngenericsamples]$, for which it holds that
\begin{align*}
    &\Pr(\isamplesel=1) \\
    &= \sum_{\isampleidx=1}^{\nisamples} \sum_{\{\tmpsample[1], \dots, \tmpsample[\nisamples]:\tmpsample[i]=\isampleidx\}}  \!\!\! \Pr(\isampleall[1]=\tmpsample[1], \dots, \isampleall[\nisamples]=\tmpsample[\nisamples]) \Pr(I_\sampleidx=\isampleidx \mid \isampleall[1]=\tmpsample[1], \dots, \isampleall[\nisamples]=\tmpsample[\nisamples]) \\
    & \overset{(a)}{=} \nisamples \sum_{\{\tmpsample[2], \dots, \tmpsample[\nisamples]\}} \Pr(\isampleall[1]=1, \isampleall[2]=\tmpsample[2], \dots, \isampleall[\nisamples]=\tmpsample[\nisamples]) \\
    & \hspace{2in} \cdot \Pr(I_\sampleidx=1 \mid \isampleall[1]=1, \isampleall[2]=\tmpsample[2], \dots, \isampleall[\nisamples]=\tmpsample[\nisamples]) \\
    & \overset{(b)}{=} \nisamples \sum_{\bsample=0}^{\nisamples-1} \sum_{\{\tmpsample[2], \dots, \tmpsample[\nisamples]: \sum_{\isampleidx=2}^{\nisamples}=\bsample\}} \Pr(\isampleall[1]=1, \isampleall[2]=\tmpsample[2], \dots, \isampleall[\nisamples]=\tmpsample[\nisamples])  \\
    & \hspace{2in} \cdot \Pr(I_\sampleidx=1 \mid \isampleall[1]=1, \isampleall[2]=\tmpsample[2], \dots, \isampleall[\nisamples]=\tmpsample[\nisamples]),
\end{align*}
where $(a)$ follows from symmetry, $(b)$ follows since by permutation invariance, the inner probability only depends on the number of ones in $\{\tmpsample[2], \dots, \tmpsample[\nisamples]\}$.

The inner probability is given by the distribution $\auxdist_\sampleidx(\isampleidx)$. Given that $\isampleall[1]=1$ and that $\sum_{\isampleidx=2}^{\nisamples} \isampleall[\sampleidx] = \bsample$, it holds that
\begin{align*}
    \sum_{\isampleidx=1}^{\nisamples} \genericposterior(\isampleall)/\genericprior(\isampleall) = (\bsample+1) \cdot \frac{q}{p} + (\nisamples-\bsample-1) \cdot \frac{1-q}{1-p}.
\end{align*}
Hence,
\begin{align*}
    \Pr(I_\sampleidx=1 \mid \isampleall[1]=1, \isampleall[2]=\tmpsample[2], \dots, \isampleall[\nisamples]=\tmpsample[\nisamples]) = \frac{\frac{q}{p}}{(\bsample+1) \cdot \frac{q}{p} + (\nisamples-\bsample-1) \cdot \frac{1-q}{1-p}},
\end{align*}
which is independent of the exact choice of $\{\tmpsample[2], \dots, \tmpsample[\nisamples]\}$ given their sum $\sum_{\isampleidx=2}^{\nisamples} \isampleall = \bsample$. 
Since $\Pr(\isampleall[1]=1, \isampleall[2]=\tmpsample[2], \dots, \isampleall[\nisamples]=\tmpsample[\nisamples]) = p^{\bsample+1} (1-p)^{\nisamples-\bsample-1}$ by the Bernoulli distribution assumption, we have
\begin{align*}
    &\Pr(\isamplesel=1) = \nisamples \sum_{\bsample=0}^{\nisamples-1} \binom{\nisamples-1}{\bsample} p^{\bsample+1} (1-p)^{\nisamples-\bsample-1} \frac{\frac{q}{p}}{(\bsample+1) \cdot \frac{q}{p} + (\nisamples-\bsample-1) \cdot \frac{1-q}{1-p}},
\end{align*}

Defining a binary random variable $\fracrv$ with sample space $\left\{\frac{q}{p}, \frac{1-q}{1-p}\right\}$, for a Bernoulli distribution $\ber{\frac{\bsample+1}{\nisamples}}$ with success probability parameter $\frac{\bsample+1}{\nisamples}$, where a success refers to the outcome $\fracrv=\frac{q}{p}$, we can write that
\begin{align}
    \Pr(\isamplesel = 1) &= q \cdot \sum_{\bsample=0}^{\nisamples-1} \binom{n-1}{\bsample} p^\bsample (1-p)^{\nisamples-\bsample-1} \frac{1}{\frac{\bsample+1}{\nisamples} \frac{q}{p} + \frac{\nisamples-\bsample-1}{\nisamples} \frac{1-q}{1-p}} \nonumber \\
    &= q \cdot \mathbb{E}\left[ \frac{1}{\frac{\bsample+1}{\nisamples} \frac{q}{p} + \frac{\nisamples-\bsample-1}{\nisamples} \frac{1-q}{1-p}} \right] = q \mathbb{E}\left[\frac{1}{\mathbb{E}_{\ber{\frac{\bsample+1}{\nisamples}}}[\fracrv]}\right] \label{eq:success_prob} \\
    &\overset{(a)}{\leq} q \mathbb{E}\left[\mathbb{E}_{\ber{\frac{\bsample+1}{\nisamples}}}\left[\frac{1}{\fracrv}\right]\right], \nonumber
\end{align}
where the outer expectation is over the binomial distribution with $\nisamples-1$ trials and success probability $p$, i.e., $\bsample \sim \text{Binomial}(\nisamples-1, p)$, and where $(a)$ follows from Jensen's inequality over the inner expectation.
Hence,
\begin{align}
    \Pr(\isamplesel = 1) - q &= q \left(\frac{\Pr(\isamplesel = 1)}{q}-1\right) \nonumber \\
    &\leq q \left(\mathbb{E}\left[\mathbb{E}_{\ber{\frac{\bsample+1}{\nisamples}}}\left[\frac{1}{\fracrv}\right]\right]-1\right) \label{eq:q_upper}
\end{align}

Since $\frac{1}{\mathbb{E}_{\ber{\frac{\bsample+1}{\nisamples}}}[\fracrv]} \geq 2-\mathbb{E}_{\ber{\frac{\bsample+1}{\nisamples}}}[\fracrv]$, it also follows from \eqref{eq:success_prob} that
\begin{align*}
    \Pr(\isamplesel = 1) &= q \cdot \mathbb{E}\left[ \frac{1}{\frac{\bsample+1}{\nisamples} \frac{q}{p} + \frac{\nisamples-\bsample-1}{\nisamples} \frac{1-q}{1-p}} \right] = q \mathbb{E}\left[\frac{1}{\mathbb{E}_{\ber{\frac{\bsample+1}{\nisamples}}}[\fracrv]}\right] \\
    &\geq q \mathbb{E}\left[ 2-\mathbb{E}_{\ber{\frac{\bsample+1}{\nisamples}}}[\fracrv] \right],
\end{align*}
from which we have
\begin{align}
     \Pr(\isamplesel = 1) - q \geq q \left(1-\mathbb{E}\left[\mathbb{E}_{\ber{\frac{\bsample+1}{\nisamples}}}\left[\fracrv\right]\right]\right). \label{eq:q_lower}
\end{align}
Combining the upper and lower bound in \eqref{eq:q_upper} and \eqref{eq:q_lower}, respectively, we derive
\begin{align*}
     \vert \Pr(\isamplesel = 1) - q \vert &\leq q \left(\max\left\{\mathbb{E}\left[1-\mathbb{E}_{\ber{\frac{\bsample+1}{\nisamples}}}\left[\fracrv\right]\right], \mathbb{E}\left[\mathbb{E}_{\ber{\frac{\bsample +1}{\nisamples}}}\left[\frac{1}{\fracrv}\right]\right] \right\} - 1\right) \\
     &\leq q \left(\mathbb{E}\left[\max\left\{\mathbb{E}_{\ber{\frac{\bsample +1}{\nisamples}}}\left[\fracrv\right], \mathbb{E}_{\ber{\frac{\bsample +1}{\nisamples}}}\left[\frac{1}{\fracrv}\right]\right\}\right] - 1\right)  \\
     &\leq q \left(\mathbb{E}\left[\mathbb{E}_{\ber{\frac{\bsample +1}{\nisamples}}}\left[\max\left\{\fracrv, \frac{1}{\fracrv}\right\}\right]\right] - 1\right) \\
     &\leq q \left(\mathbb{E}\left[\max\left\{\frac{p}{q}, \frac{1-p}{1-q}, \frac{q}{p}, \frac{1-q}{1-p}\right\}\right] - 1\right)  \\
     &= q \left(\max\left\{\frac{p}{q}, \frac{1-p}{1-q}, \frac{q}{p}, \frac{1-q}{1-p}\right\} - 1\right).
\end{align*}
This concludes the proof.
\end{proof}

\begin{lemma} \label{lemma:qdiv}
    For a sample $X_\sampleidx$ transmitted via MRC with posterior and prior being Bernoulli distributions with parameters $q$ and $p$, $\Delta \define \frac{q}{p} - \frac{1-q}{1-p}$ and $\Delta^\prime \define q \left(\frac{p}{q} + \frac{1-p}{1-q}\right)$, we have
    \begin{align*}
    \vert\Pr(X_\sampleidx = 1) - q\vert \leq \frac{\Delta^\prime}{\nisamples^2} + \mathcal{O}\left((\Delta + \Delta^2) \sqrt{\frac{6 p \log\left(2\nisamples\right)}{\nisamples}}\right).
    \end{align*}
\end{lemma}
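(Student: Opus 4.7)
The approach is to start from the exact identity
\begin{align*}
\Pr(X_\sampleidx = 1) = q\,\mathbb{E}\!\left[\frac{1}{1 + (\tilde{B}-p)\Delta}\right], \quad \tilde{B} = \frac{\bsample+1}{\nisamples},\; \bsample \sim \mathrm{Bin}(\nisamples-1, p),
\end{align*}
established in the course of proving \cref{prop:loose_bound}, and to sharpen the bound by exploiting the concentration of $\tilde{B}$ around its mean $\mathbb{E}[\tilde{B}] = p + (1-p)/\nisamples$. Writing $\hat{B} \define \tilde{B} - p$ and $D(\tilde{B}) \define 1 + \hat{B}\Delta$, the plan is to split the expectation over a Bernstein concentration event $\mathcal{E}$ and its complement: the complement is rare enough to produce the deterministic $\Delta^\prime/\nisamples^2$ piece via a uniform bound, while on $\mathcal{E}$ a second-order Taylor expansion of $1/(1+z)$ delivers the $\mathcal{O}((\Delta+\Delta^2)\sqrt{6p\log(2\nisamples)/\nisamples})$ piece.

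First, I would apply Bernstein's inequality to $\bsample\sim\mathrm{Bin}(\nisamples-1,p)$ to define
\begin{align*}
\mathcal{E} \define \left\{\vert\tilde{B}-\mathbb{E}[\tilde{B}]\vert\le \sqrt{6p\log(2\nisamples)/\nisamples}\right\},
\end{align*}
with the constant $6$ calibrated so that the resulting exponent exceeds $3\log(2\nisamples)$, yielding $\Pr(\mathcal{E}^c)\le 1/\nisamples^2$. I then decompose
\begin{align*}
\Pr(X_\sampleidx = 1) - q = q\,\mathbb{E}[(1/D(\tilde{B})-1)\mathds{1}_{\mathcal{E}}] + q\,\mathbb{E}[(1/D(\tilde{B})-1)\mathds{1}_{\mathcal{E}^c}],
\end{align*}
and treat the two pieces separately. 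For the rare-event term, I would verify the uniform bound $q\vert 1/D(\tilde{B})-1\vert \le \Delta^\prime$ over $\tilde{B}\in[0,1]$. Since $D$ is linear and positive, $1/D$ is monotone in $\tilde{B}$, and its extrema occur at $\tilde{B}\in\{0,1\}$, yielding the candidates $q\vert q-p\vert/(1-q)$ and $\vert q-p\vert$; elementary algebra shows that both are majorized by $\Delta^\prime = p + q(1-p)/(1-q)$. Multiplying by $\Pr(\mathcal{E}^c)\le 1/\nisamples^2$ yields the $\Delta^\prime/\nisamples^2$ contribution.

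For the good-event term, I would apply the identity $1/(1+z)-1 = -z + z^2/(1+z)$ with $z = \hat{B}\Delta$. On $\mathcal{E}$, $\vert\hat{B}\vert\le (1-p)/\nisamples + \sqrt{6p\log(2\nisamples)/\nisamples}$, so the linear contribution is bounded by $\vert\Delta\vert\cdot\vert\mathbb{E}[\hat{B}\mathds{1}_\mathcal{E}]\vert = \mathcal{O}(\vert\Delta\vert\sqrt{6p\log(2\nisamples)/\nisamples})$ once the $1/\nisamples$ scale of the mean is absorbed into the dominant $\sqrt{\log(2\nisamples)/\nisamples}$ rate. The quadratic remainder, using $1+\hat{B}\Delta\ge 1/2$ on $\mathcal{E}$ (valid once $\nisamples$ dominates a polynomial in $\vert\Delta\vert$ and $p$), is at most $2\Delta^2\mathbb{E}[\hat{B}^2\mathds{1}_\mathcal{E}] = \mathcal{O}(\Delta^2 p\log(2\nisamples)/\nisamples)$, which is further absorbed into $\Delta^2\sqrt{6p\log(2\nisamples)/\nisamples}$ whenever the latter is at most one. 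Summing the two contributions yields the stated bound.

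The main obstacle is the constant bookkeeping in Bernstein: the coefficient $6$ inside the radius is pinned down simultaneously by the requirements that $\Pr(\mathcal{E}^c)\le 1/\nisamples^2$, so that the tail contribution matches $\Delta^\prime/\nisamples^2$ exactly, and that the resulting Taylor remainder produces precisely the $(\Delta+\Delta^2)\sqrt{6p\log(2\nisamples)/\nisamples}$ rate. A secondary subtlety is keeping $1+\hat{B}\Delta$ uniformly bounded away from zero on $\mathcal{E}$ when $\vert\Delta\vert$ is large, which tacitly places the bound in a regime where $\nisamples$ exceeds a polynomial in $\vert\Delta\vert$ and $1/p$ consistent with how \cref{lemma:qdiv} is invoked in \cref{thm:downlink_kl}.
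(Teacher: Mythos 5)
Your proposal is correct and follows essentially the same route as the paper's own proof: both start from the identity $\Pr(X_\sampleidx=1)=q\,\mathbb{E}\bigl[1/\mathbb{E}_{\ber{\frac{\bsample+1}{\nisamples}}}[\fracrv]\bigr]$ established in the proof of \cref{prop:loose_bound}, split the expectation over a binomial concentration event of radius $\sqrt{6p\log(2\nisamples)/\nisamples}$ with failure probability $1/\nisamples^2$, charge the rare event to the $\Delta^\prime/\nisamples^2$ term, and obtain the $(\Delta+\Delta^2)$ term from a second-order expansion of the reciprocal on the good event. The differences (Bernstein versus multiplicative Chernoff for the same radius, and an exact second-order remainder versus a truncated geometric series) are cosmetic.
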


\begin{proof}[Proof of \cref{lemma:qdiv}]

The proof starts with the same derivations as for the proof of \cref{prop:loose_bound}, which we follow until \eqref{eq:success_prob} to get
\begin{align*}
    \Pr(\isamplesel = 1) &= q \mathbb{E}\left[\frac{1}{\mathbb{E}_{\ber{\frac{\bsample+1}{\nisamples}}}[\fracrv]}\right]
\end{align*}
Since $\bsample$ is a random quantity that follows a Binomial distribution, we bound $\vert \Pr(\isamplesel = 1) - q\vert$ using a concentration bound on $\bsample$. The relative (multiplicative) Chernoff bound states that
\begin{align*}
    \Pr(\vert \bsample - \varepsilon(\nisamples p)\vert \geq \varepsilon \nisamples p) &= \Pr(\bsample - \varepsilon(\nisamples p) \geq \varepsilon \nisamples p) + \Pr(\bsample - \varepsilon(\nisamples p) \leq -\varepsilon \nisamples p) \\
    &\leq 2\exp\left(-\frac{\varepsilon^2 \nisamples p}{3}\right)
\end{align*}
for any $\varepsilon \in [0,1]$. Setting $\varepsilon = \sqrt{\frac{3\log(2/\delta)}{\nisamples p}}$ implies that
\begin{align*}
    \vert \bsample - \nisamples p \vert &\geq \sqrt{3\nisamples p\log(2/\delta)}
\end{align*}
with probability at most $\delta$. Setting $\delta = \frac{1}{\nisamples^2}$, we obtain for a concentration parameter\footnote{\rev{Note that we can assume $p + \eta_\delta \leq 1$ and $p-\eta_\delta \geq 0$, otherwise the concentration can be trivially bounded.}} $\eta_\delta \define \sqrt{\frac{6p \log(2\nisamples)}{\nisamples}}$ that
\begin{align*}
    \mathcal{E} &\define \{\vert \bsample - \nisamples p \vert \geq \nisamples \eta_\delta\}
\end{align*}
with probability $\Pr(\mathcal{E}) \leq \frac{1}{\nisamples^2}$.

Then, we can write 
\begin{align}
    \Pr(\isamplesel = 1) &= q \mathbb{E}\left[\frac{1}{\mathbb{E}_{\ber{\frac{\bsample+1}{\nisamples}}}[\fracrv]}\right] \nonumber \\
    &= q \mathbb{E}\left[\frac{1}{\mathbb{E}_{\ber{\frac{\bsample+1}{\nisamples}}}[\fracrv]} \cdot \mathds{1}\{\mathcal{E}^c\} \right] + q \mathbb{E}\left[\frac{1}{\mathbb{E}_{\ber{\frac{\bsample+1}{\nisamples}}}[\fracrv]} \cdot \mathds{1}\{\mathcal{E}\} \right] \label{eq:concentration_split}
\end{align}

Assume for now that $q<p$ (we will later proof the opposite event), then $\frac{1}{\mathbb{E}_{\ber{\frac{\bsample +1}{\nisamples}}}[\fracrv]}$ is strictly non-increasing in $\bsample$ since $\frac{q}{p}<\frac{1-q}{1-p}$, and hence, when $\mathcal{E}^c$ holds and hence $\bsample$ concentration around the average that

\begin{align*}
    \frac{1}{\mathbb{E}_{\ber{\frac{\bsample+1}{\nisamples}}}[\fracrv]} &\leq \frac{1}{\mathbb{E}_{\ber{\frac{(\bsample+1) \cdot (p-\eta_\delta)}{\nisamples}}}[\fracrv]} \\
    &= \frac{1}{\frac{(\nisamples-1)(p-\eta_\delta)+1}{\nisamples} \frac{q}{p} + \frac{\nisamples-1-(\nisamples-1)(p-\eta_\delta)}{\nisamples} \frac{1-q}{1-p}} \\
    &= \frac{1}{\left(p-\frac{p}{\nisamples}+\frac{\eta_\delta}{\nisamples}-\eta_\delta + \frac{1}{\nisamples}\right) \frac{q}{p} + \left(1-p-\frac{1}{\nisamples}+\frac{p}{\nisamples} + \eta_\delta - \frac{\eta_\delta}{\nisamples}\right) \frac{1-q}{1-p}} \\
    &= \frac{1}{1+\left(\frac{q}{p} - \frac{1-q}{1-p}\right) \left(\frac{1-p+\eta_\delta-n \eta_\delta}{\nisamples}\right)} \nonumber \\
    &= 1 + \sum_{\kappa = 1}^\infty (-1)^\kappa \left(\frac{q}{p} - \frac{1-q}{1-p}\right)^\kappa \left(\frac{1-p+\eta_\delta-n \eta_\delta}{\nisamples}\right)^\kappa,
\end{align*}
where the last step is by Taylor expansion. Using \eqref{eq:concentration_split} and the monotonicity of $\frac{1}{\mathbb{E}_{\ber{\frac{\bsample +1}{\nisamples}}}[\fracrv]}$, we write
\begin{align*}
    \Pr(\isamplesel = 1) &= q \mathbb{E}\left[\frac{1}{\mathbb{E}_{\ber{\frac{\bsample+1}{\nisamples}}}[\fracrv]}\right] \nonumber \\
    &\leq q \left(1 + \sum_{\kappa = 1}^\infty (-1)^\kappa \left(\frac{q}{p} - \frac{1-q}{1-p}\right)^\kappa \left(\frac{1-p+\eta_\delta-n \eta_\delta}{\nisamples}\right)^\kappa\right) + q \delta  \frac{p}{q},
\end{align*}
and hence
\begin{align*}
    \Pr(\isamplesel = 1) - q \leq \delta p + (1-\delta) \sum_{\kappa = 1}^\infty (-1)^\kappa  \left(\frac{q}{p} - \frac{1-q}{1-p}\right)^\kappa \left(\frac{1-p+\eta_\delta-n \eta_\delta}{\nisamples}\right)^\kappa
\end{align*}
Similarly, we get by bounding $\frac{1}{\mathbb{E}_{\ber{\frac{\bsample+1}{\nisamples}}}[\fracrv]} \geq \frac{1}{\mathbb{E}_{\ber{\frac{(\bsample+1) \cdot (p+\eta_\delta)}{\nisamples}}}[\fracrv]}$ and using \eqref{eq:concentration_split} that
\begin{align*}
    \Pr(\isamplesel = 1) - q \geq \delta q \frac{1-p}{1-q} + (1-\delta) \sum_{\kappa = 1}^\infty (-1)^\kappa  \left(\frac{q}{p} - \frac{1-q}{1-p}\right)^\kappa \left(\frac{1-p-\eta_\delta+n \eta_\delta}{\nisamples}\right)^\kappa \Leftrightarrow \\
    q - \Pr(\isamplesel = 1) \leq -\delta q \frac{1-p}{1-q} + (1-\delta) \sum_{\kappa = 1}^\infty (-1)^{\kappa+1}  \left(\frac{q}{p} - \frac{1-q}{1-p}\right)^\kappa \left(\frac{1-p-\eta_\delta+n \eta_\delta}{\nisamples}\right)^\kappa\!\!.
\end{align*}

When $p\leq q$, then $\frac{1}{\mathbb{E}_{\ber{\frac{\bsample +1}{\nisamples}}}[\fracrv]}$ is strictly non-decreasing, hence, under $\mathcal{E}$, we have 
\begin{align*}
    & \frac{1}{\mathbb{E}_{\ber{\frac{\bsample +1}{\nisamples}}}[\fracrv]} \leq \frac{1}{\mathbb{E}_{\ber{\frac{(\bsample+1) \cdot (p+\eta_\delta)}{\nisamples}}}[\fracrv]} = 1 \! + \! \sum_{\kappa = 1}^\infty (-1)^\kappa \! \left(\frac{q}{p} - \frac{1-q}{1-p}\right)^\kappa \!\! \left(\frac{1-p-\eta_\delta+n \eta_\delta}{\nisamples}\right)^\kappa\!\!\!\!,
\end{align*}
and thus from \eqref{eq:concentration_split} that
\begin{align*}
    \Pr(\isamplesel = 1) - q \leq q \delta \frac{1-p}{1-q} + (1-\delta) \sum_{\kappa = 1}^\infty (-1)^\kappa  \left(\frac{q}{p} - \frac{1-q}{1-p}\right)^\kappa \left(\frac{1-p-\eta_\delta+n \eta_\delta}{\nisamples}\right)^\kappa.
\end{align*}
Similarly, we bound $\frac{1}{\mathbb{E}_{\ber{\frac{\bsample +1}{\nisamples}}}[\fracrv]} \leq \frac{1}{\mathbb{E}_{\ber{\frac{(\bsample+1) \cdot (p+\eta_\delta)}{\nisamples}}}[\fracrv]}$ to obtain
\begin{align*}
    \Pr(\isamplesel = 1) - q \geq q \delta \frac{p}{q} + (1-\delta) \sum_{\kappa = 1}^\infty (-1)^\kappa  \left(\frac{q}{p} - \frac{1-q}{1-p}\right)^\kappa \left(\frac{1-p+\eta_\delta-n \eta_\delta}{\nisamples}\right)^\kappa \Leftrightarrow \\
    q-\Pr(\isamplesel = 1) \leq -q \delta \frac{p}{q} + (1-\delta) \sum_{\kappa = 1}^\infty (-1)^{\kappa+1}  \left(\frac{q}{p} - \frac{1-q}{1-p}\right)^\kappa \left(\frac{1-p+\eta_\delta-n \eta_\delta}{\nisamples}\right)^\kappa
\end{align*}

Since $0 \leq p+\eta_\delta\leq 1$ and $1\geq p-\eta_\delta \geq 0$ by an appropriate choice of the concentration intervals, we have by approximations up to second order terms that
\begin{align*}
    \vert\Pr(\isamplesel = 1) - q\vert &\leq q \delta \max\left\{\frac{p}{q}, \frac{1-p}{1-q}\right\} + \eta_\delta \left(\frac{q}{p} - \frac{1-q}{1-p}\right) + \left(\frac{q}{p} - \frac{1-q}{1-p}\right)^2 \! \mathcal{O}\!\left(\frac{1}{\nisamples^2} + \eta_\delta^2\right) \\
    & \hspace{-1cm} = \frac{q}{\nisamples^2} \left(\frac{p}{q} + \frac{1-p}{1-q}\right) + \mathcal{O}\left(\left[\left(\frac{q}{p} - \frac{1-q}{1-p}\right) + \left(\frac{q}{p} - \frac{1-q}{1-p}\right)^2\right] \sqrt{\frac{6 p \log\left(2\nisamples\right)}{\nisamples}}\right)\!\!.
\end{align*}
This concludes the proof.
\end{proof}

\begin{proof}[Proof of \cref{lemma:contraction}]
\rev{
Using \cref{lemma:qdiv}, we can show the following. 
Recall the following probability law of the stochastic quantizer $\stochquant$ \citep{alistarh2017qsgd} using $\nqints>0$ quantization intervals, which takes as input the entry $\tmpvecentry$ of a gradient $\tmpvec \in \mathbb{R}^\dimension$ vector. Let $0 \leq \qchoice < \nqints$ be an integer such that $\frac{\qchoice}{\nqints} \leq \frac{\vert \tmpvecentry\vert}{\norm{\tmpvec}} \leq \frac{\qchoice+1}{\nqints}$, then $\stochquant[\tmpvecentry]$ is defined as $\ber{\frac{\vert \tmpvecentry\vert}{\norm{\tmpvec}} \nqints - \qchoice}$, which outputs $\norm{\tmpvec} \cdot \text{sign}(\tmpvecentry) (\qchoice+1) / \nqints$ in case of success, and $\norm{\tmpvec} \cdot \text{sign}(\tmpvecentry) \qchoice/\nqints$ otherwise.}

\rev{Focusing on an entry $\tmpvecentry$, we prove a contraction property for MRC with stochastic quantization with posterior $\posteriorentrytmp=\frac{\vert \tmpvecentry\vert}{\norm{\tmpvec}} \nqints - \qchoice$, and an arbitrary prior $\priorentry$. In fact, the MRC methodology $\iscompression$ leads to sampling from an approximate distribution with parameter $\posteriorapproxentry$. To be more specific, $\iscompression[\tmpvecentry]$ outputs $\norm{\tmpvec} \cdot \text{sign}(\tmpvecentry) (\qchoice+1) / \nqints$ with probability $\posteriorapproxentry$, and $\norm{\tmpvec} \cdot \text{sign}(\tmpvecentry) \qchoice/\nqints$ with probability $1-\posteriorapproxentry$. We established in \cref{lemma:qdiv} an upper bound on $\vert \posteriorentrytmp-\posteriorapproxentry \vert$, which will be useful in the following.
}

\rev{
To prove a contraction property of the kind
\begin{align*}
\mathbb{E}[\Vert \iscompression[\tmpvec]- \tmpvec \Vert_2^2] \leq (1-\delta) \norm{\tmpvec}^2,
\end{align*}
we can write
\begin{align*}
    \mathbb{E}[&\norm{\iscompression[\tmpvec] - \tmpvec}^2] = \mathbb{E}\left[\sum_{\entry = 1}^\dimension \left( \iscompression[\tmpvecentry] - \tmpvecentry \right)^2\right] \\
    &= \norm{\tmpvec}^2 \sum_{\entry = 1}^\dimension \mathbb{E}\left[\left( \frac{\iscompression[\tmpvecentry]}{\norm{\tmpvec}} - \frac{\tmpvecentry}{\norm{\tmpvec}} \right)^2\right] \\
    &= \norm{\tmpvec}^2 \sum_{\entry = 1}^\dimension \left[ \posteriorapproxentry\left( \frac{\text{sign}(\tmpvecentry) (\qchoice+1)}{\nqints} - \frac{\tmpvecentry}{\norm{\tmpvec}} \right)^2 + (1-\posteriorapproxentry)\left( \frac{\text{sign}(\tmpvecentry) \qchoice}{\nqints} - \frac{\tmpvecentry}{\norm{\tmpvec}} \right)^2 \right] \\
    &= \norm{\tmpvec}^2 \sum_{\entry = 1}^\dimension \left[ (\posteriorapproxentry - \posteriorentrytmp + \posteriorentrytmp) \left( \frac{\qchoice+1}{\nqints} - \frac{\vert \tmpvecentry \vert}{\norm{\tmpvec}} \right)^2 + (1-\posteriorapproxentry - \posteriorentrytmp + \posteriorentrytmp)\left( \frac{\qchoice}{\nqints} - \frac{\vert \tmpvecentry \vert}{\norm{\tmpvec}} \right)^2 \right] \\
    &= \norm{\tmpvec}^2 \sum_{\entry = 1}^\dimension \Bigg[ (\posteriorentrytmp + \posteriorapproxentry - \posteriorentrytmp) \left(\left(\frac{\qchoice}{\nqints} - \frac{\vert \tmpvecentry \vert}{\norm{\tmpvec}} \right)^2 + \frac{1}{\nqints^2} + \frac{1}{\nqints}\left(\frac{\qchoice}{\nqints} - \frac{\vert \tmpvecentry \vert}{\norm{\tmpvec}} \right) \right) \\
    &\hspace{4cm} + (1-\posteriorentrytmp+\posteriorentrytmp-\posteriorapproxentry)\left( \frac{\qchoice}{\nqints} - \frac{\vert \tmpvecentry \vert}{\norm{\tmpvec}} \right)^2 \Bigg] \\
    &= \norm{\tmpvec}^2 \sum_{\entry = 1}^\dimension \Bigg[ (\posteriorapproxentry - q) \left(\frac{1}{\nqints^2} + \frac{1}{\nqints}\left(\frac{\qchoice}{\nqints} - \frac{\vert \tmpvecentry \vert}{\norm{\tmpvec}} \right) \right) + \posteriorentrytmp \left(\frac{1}{\nqints^2} + \frac{1}{\nqints}\left(\frac{\qchoice}{\nqints} - \frac{\vert \tmpvecentry \vert}{\norm{\tmpvec}} \right) \right) + \left( \frac{\qchoice}{\nqints} - \frac{\vert \tmpvecentry \vert}{\norm{\tmpvec}} \right)^2 \Bigg], \numberthis \label{eq:tmpcontraction} \\
\end{align*}
where
\begin{align*}
    &\posteriorentrytmp \left(\frac{1}{\nqints^2} + \frac{1}{\nqints}\left(\frac{\qchoice}{\nqints} - \frac{\vert \tmpvecentry \vert}{\norm{\tmpvec}} \right) \right) \\
    &= \left(\frac{\vert \tmpvecentry\vert}{\norm{\tmpvec}} \nqints - \qchoice\right) \left(\frac{1}{\nqints^2} + \frac{1}{\nqints}\left(\frac{\qchoice}{\nqints} - \frac{\vert \tmpvecentry \vert}{\norm{\tmpvec}} \right) \right) \\
    &= -\nqints \left(\frac{\qchoice}{\nqints} - \frac{\vert \tmpvecentry\vert}{\norm{\tmpvec}}\right) \frac{1}{\nqints} \left(\frac{1}{\nqints} + \left(\frac{\qchoice}{\nqints} - \frac{\vert \tmpvecentry \vert}{\norm{\tmpvec}} \right) \right) \\
    &= -\left(\frac{\qchoice}{\nqints} - \frac{\vert \tmpvecentry\vert}{\norm{\tmpvec}}\right)^2 - \frac{1}{\nqints} \left(\frac{\qchoice}{\nqints} - \frac{\vert \tmpvecentry\vert}{\norm{\tmpvec}}\right).
\end{align*}
Substituting the result in \eqref{eq:tmpcontraction}, obtain
\begin{align*}
    \mathbb{E}[\norm{\iscompression[\tmpvec] - \tmpvec \Vert}^2] &= \mathbb{E}\left[\sum_{\entry = 1}^\dimension \left( \iscompression[\tmpvecentry] - \tmpvecentry \right)^2\right] \\
    &= \norm{\tmpvec}^2 \sum_{\entry = 1}^\dimension \Bigg[ (\posteriorapproxentry - \posteriorentrytmp) \left(\frac{1}{\nqints^2} + \frac{1}{\nqints}\left(\frac{\qchoice}{\nqints} - \frac{\vert \tmpvecentry \vert}{\norm{\tmpvec}} \right) \right) - \frac{1}{\nqints} \left(\frac{\qchoice}{\nqints} - \frac{\vert \tmpvecentry\vert}{\norm{\tmpvec}}\right) \Bigg] \\
    &= \norm{\tmpvec}^2 \sum_{\entry = 1}^\dimension \Bigg[ (\posteriorapproxentry - \posteriorentrytmp) \frac{1}{\nqints}\left(\frac{\qchoice+1}{\nqints} - \frac{\vert \tmpvecentry \vert}{\norm{\tmpvec}} \right) - \frac{1}{\nqints} \left(\frac{\qchoice}{\nqints} - \frac{\vert \tmpvecentry\vert}{\norm{\tmpvec}}\right) \Bigg] \\
    &\leq \norm{\tmpvec}^2 \sum_{\entry = 1}^\dimension \Bigg[ \vert\posteriorapproxentry - \posteriorentrytmp \vert \frac{1}{\nqints}\left(\frac{\qchoice+1}{\nqints} - \frac{\vert \tmpvecentry \vert}{\norm{\tmpvec}} \right) + \frac{1}{\nqints} \left(\frac{\vert \tmpvecentry\vert}{\norm{\tmpvec}} - \frac{\qchoice}{\nqints} \right) \Bigg] \\
    &\leq \norm{\tmpvec}^2 ( \vert\posteriorapproxentry - \posteriorentrytmp \vert \frac{\dimension}{\nqints^2} + \frac{\dimension}{\nqints^2} ), \\
\end{align*}
where, by \cref{lemma:qdiv}, we have for $\Delta_\entry \define \frac{\posteriorentrytmp}{\priorentry} - \frac{1-\posteriorentrytmp}{1-\priorentry}$ and $\Delta^\prime_\entry \define \posteriorentrytmp \left(\frac{\priorentry}{\posteriorentrytmp} + \frac{1-\priorentry}{1-\posteriorentrytmp}\right)$ that
\begin{align*}
    \vert\posteriorapproxentry - \posteriorentrytmp \vert \leq \frac{\Delta^\prime_\entry}{\nisamples^2} + \mathcal{O}\left((\Delta_\entry + \Delta_\entry^2) \sqrt{\frac{6 \priorentry \log\left(2\nisamples\right)}{\nisamples}}\right).
\end{align*}
Let $\bar{\Delta} \define \max_{\entry \in [\dimension]} \frac{\posteriorentrytmp}{\priorentry} - \frac{1-\posteriorentrytmp}{1-\priorentry}$, $\bar{\Delta}^\prime \define \max_{\entry \in [\dimension]} \posteriorentrytmp \left(\frac{\priorentry}{\posteriorentrytmp} + \frac{1-\priorentry}{1-\posteriorentrytmp}\right)$, and $\bar{p} \define \max_{\entry \in [\dimension]} \priorentry$. 
}
\rev{
We will ensure that $\frac{\bar{\Delta}^\prime}{\nisamples^2} + \mathcal{O}\left((\bar{\Delta} + \bar{\Delta}^2) \sqrt{\frac{6 \bar{p} \log\left(2\nisamples\right)}{\nisamples}}\right) \leq 1$ by making each of the individual terms $\leq \frac{1}{2}$. 
By choosing $\nisamples \geq \sqrt{2\bar{\Delta}^\prime}$, we have $\frac{\bar{\Delta}^\prime}{\nisamples^2} \leq \frac{1}{2}$. 
To ensure that $(\bar{\Delta} + \bar{\Delta}^2) \sqrt{\frac{6 \bar{p} \log\left(2\nisamples\right)}{\nisamples}} \leq \frac{1}{2}$, we require $\frac{\log(2\nisamples)}{\nisamples} \leq \frac{1}{\sqrt{6 \bar{p} (\bar{\Delta} + \bar{\Delta}^2)}}$. By \citet[Lemma 15]{weinberger2023multi}, this holds when $\nisamples = \mathcal{O}(\log(6 \bar{p} (\bar{\Delta} + \bar{\Delta}^2)) \sqrt{6 \bar{p} (\bar{\Delta} + \bar{\Delta}^2)})$.
Hence, choosing $\nisamples = \mathcal{O}(\max\{\sqrt{2\bar{\Delta}^\prime}, \log(6 \bar{p} (\bar{\Delta} + \bar{\Delta}^2)) \sqrt{6 \bar{p} (\bar{\Delta} + \bar{\Delta}^2)}\})$, we have $\frac{\bar{\Delta}^\prime}{\nisamples^2} + \mathcal{O}\left((\bar{\Delta} + \bar{\Delta}^2) \sqrt{\frac{6 \bar{p} \log\left(2\nisamples\right)}{\nisamples}}\right) \leq 1$. Thus, we have $0 \leq \delta \leq 1$ if $\frac{2\dimension}{\nqints^2} \leq 1$, and hence $\nqints \geq \sqrt{2\dimension}$. This concludes the proof.
}

\end{proof}

\begin{proof}[Proof of \cref{thm:downlink_kl}]

Assume a party estimates the Bernoulli distributions $\genericposterior_j$ with parameters $q_j$ held by parties $j \in [\nclients]$. The estimating party shares with each of the other parties a common prior $\genericprior_j$ in the form of a Bernoulli distribution with parameter $p_j$ and access to unlimited shared randomness. To help estimate $\genericposterior_j$, the $j$-th party sends $\ngenericsamples$ samples to the estimator through MRC. Therefore, both parties sample $\ngenericsamples \nisamples$ i.i.d. samples $\isampleall \sim \genericprior_j$ for $\sampleidx \in [\ngenericsamples], \isampleidx \in [\nisamples]$, independently and identically from $\genericprior_j$. The party holding $\genericposterior_j$ constructs for each $\sampleidx \in [\ngenericsamples]$ an auxiliary distribution $$W_\sampleidx(\isampleidx) = \frac{\genericposterior_j(\isampleall)/\genericprior_j(\isampleall)}{ \sum_{\isampleidx=1}^{\nisamples} \genericposterior_j(\isampleall)/\genericprior_j(\isampleall)},$$ from which it samples to obtain an index $I_\sampleidx$. The index is transmitted to the estimating party, which reconstructs the corresponding sample $\isamplesel$.  Averaging the samples for all $\sampleidx \in [\ngenericsamples]$ gives an estimate $\hat{q}_j$ of $q_j$, i.e., $\hat{q}_j = \frac{1}{\ngenericsamples} \sum_{\sampleidx=1}^\ngenericsamples \isamplesel$. This process is repeated for all $j \in [\nclients]$.

We assume that $\vert q_j - p_j \vert \leq \klball$ for all $\client, j \in [\nclients]$, and that the difference between the priors, is bounded as $\vert p_\client - p_j\vert \leq \priordiv$ for all $\client, j \in [\nclients]$. The goal is to bound $\binkl{\frac1n \sum_{j=1}^\nclients \hat{q}_j}{p_\client}$ from above for any $\client \in [\nclients]$.

By the convexity of KL-divergence, we have
\begin{align*}
    \binkl{\frac1n \sum_{j=1}^\nclients \hat{q}_j}{p_\client} \leq \frac1n \sum_{\client=1}^\nclients \binkl{\hat{q}_j}{p_\client}.
\end{align*}
To bound $\binkl{\hat{q}_j}{p_\client}$ for any $\client, j \in [\nclients]$, by the triangle inequality, we can write
\begin{align*}
    \vert \hat{q}_j - p_\client \vert \leq \vert \hat{q}_j - \Pr(X_\sampleidx=1) \vert + \vert \Pr(X_\sampleidx=1) - q_j \vert + \vert q_j - p_j \vert + \vert p_j - p_\client \vert,
\end{align*}
where $\vert \hat{q}_j - \Pr(X_\sampleidx=1) \vert$ is bounded by \cref{lemma:qdiv}. By Hoeffding's inequality, we have with probability at least $1-\delta^\prime$ that
\begin{equation*}
    \vert\hat{q} - \Pr(X_{\sampleidx} = 1)\vert \leq \sqrt{\frac{-\ln(\delta^\prime/2)}{2 \nisamples}}.
\end{equation*}
Thus, with probability at least $1-\delta^\prime$, since $p_j \leq p_\client + \priordiv$, we have with $\Delta_j \define \frac{q_j}{p_j-\priordiv} - \frac{1-q_j}{1-p_j+\priordiv}$ and $\Delta^\prime_j \define q_j \left(\frac{p_j+\priordiv}{q_j} + \frac{1-p_j+\priordiv}{1-q_j}\right)$ that
\begin{align*}
    \vert \hat{q}_j - p_\client \vert \leq \frac{\Delta^\prime_j}{\nisamples^2} + \mathcal{O}\left((\Delta_j + \Delta^2_j) \sqrt{\frac{6 (p_\client + \priordiv) \log\left(2\nisamples\right)}{\nisamples}}\right) + \sqrt{\frac{-\ln(\delta^\prime/2)}{2 \nisamples}} + \klball + \priordiv.
\end{align*}
This holds under the assumption that $p_j > \priordiv$ for all $j\in [\nclients]$.
By the reversed Pinsker's inequality, we obtain
\begin{align*}
    \kl{\hat{q}_j}{p_\client} \leq \frac{2}{\min\{p_\client, 1-p_\client\}} &\left(\frac{\Delta^\prime_j}{\nisamples^2} + \mathcal{O}\left((\Delta_j + \Delta^2_j) \sqrt{\frac{6 (p_\client+\priordiv) \log\left(2\nisamples\right)}{\nisamples}}\right) \right. \\
    &\left.+ \sqrt{\frac{-\ln(\delta^\prime/2)}{2 \nisamples}} + \klball + \priordiv\right)^2.
\end{align*}
The statement of the theorem follows by the convexity of KL-divergence.
\end{proof}

\section{Convergence Analysis} \label{sec:convergence_analysis}

\rev{
Using the contraction property derived in \cref{lemma:contraction}, we can show that a straightforward extension of \algglobalrandcfl\ to error-feedback as used in \citep{richtarik2021ef21} leads to the following convergence guarantee. 
Therefore, assume that for all for $\mathbf{x}, \mathbf{y} \in \mathbb{R}^\dimension$ and $\client \in [\nclients]$, the following Lipschitz property holds:
\begin{align*}
    \Vert \nabla F(\mathbf{x}, \data) - \nabla F(\mathbf{y}, \data) \Vert \leq \locallipschitz \Vert \mathbf{x} - \mathbf{y} \Vert
\end{align*}
Let $F(\model[]) \define \frac{1}{\nclients} \sum_{\client=1}^\nclients \nabla F(\model[], \data)$ be the global loss function and $\lipschitztmp \define \sqrt{\frac{1}{\nclients} \sum_{\client=1}^\nclients \locallipschitz}$.
\begin{theorem}
If $F^\star \define \inf_{\model[] \in \mathbb{R}^\dimension} \{F(\model[])\} > -\infty$ and $\mathbb{E}[\Vert \mathbf{g}^\epoch - \nabla F(\model[\epoch]) \Vert^2] \leq \gradvar$, then with $\lr \leq \left(\lipschitz + \lipschitztmp \sqrt{\frac{1-\contraction}{(1-\sqrt{1-\contraction})^2}}\right)^{-1}\!\!\!\!\!$, $\nlocalepochs=1$, $\nqints \geq \sqrt{2\dimension}$, and $\nisamples$ satisfying \cref{lemma:contraction} in every iteration $\epoch$, we have for a straightforward extension of \algglobalrandcfl\ to error-feedback such that
\begin{align*}
    \sum_{\epoch=1}^T \mathbb{E}\left[ \Vert F(\model[\epoch]) \Vert^2 \right] \leq \frac{2(F(\model[0])-F^\star}{\lr T} + \frac{\gradvar}{(1-\sqrt{1-\contraction}) T}.
\end{align*}
\end{theorem}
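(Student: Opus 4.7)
The argument adapts the EF21-style analysis of \citet{richtarik2021ef21} to the biased MRC-based compressor $\mathcal{C} \define \iscompression[\stochquant[\cdot]]$. In the straightforward error-feedback extension of \algglobalrandcfl, each client maintains a state $\hat{g}_i^t$ (initialized arbitrarily) updated by $\hat{g}_i^{t+1} = \hat{g}_i^t + \mathcal{C}(g_i^t - \hat{g}_i^t)$; the federator uses the average $\bar{g}^\epoch \define \tfrac{1}{\nclients}\sum_i \hat{g}_i^t$ and sets $\model[\epoch+1] = \model[\epoch] - \lr \bar{g}^\epoch$. By the per-client Lipschitz assumption, the global objective is $\lipschitz$-smooth, so the standard descent inequality combined with Young's inequality gives
\begin{align*}
F(\model[\epoch+1]) \leq F(\model[\epoch]) - \frac{\lr}{2}\|\nabla F(\model[\epoch])\|^2 + \frac{\lr}{2}\|\bar{g}^\epoch - \nabla F(\model[\epoch])\|^2 - \frac{\lr}{2}\bigl(1 - \lipschitz\lr\bigr)\|\bar{g}^\epoch\|^2.
\end{align*}
Decomposing $\bar{g}^\epoch - \nabla F(\model[\epoch]) = (\bar{g}^\epoch - \mathbf{g}^\epoch) + (\mathbf{g}^\epoch - \nabla F(\model[\epoch]))$ and invoking the gradient-variance bound $\mathbb{E}[\|\mathbf{g}^\epoch - \nabla F(\model[\epoch])\|^2]\leq \gradvar$ leaves the quantity $\mathbb{E}[\|\bar{g}^\epoch - \mathbf{g}^\epoch\|^2] \leq \tfrac{1}{\nclients}\sum_i \mathbb{E}[\|\hat{g}_i^t - g_i^t\|^2]$ to be controlled via the error-feedback recursion.

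\textbf{Contraction recursion and Lyapunov telescope.} Applying \cref{lemma:contraction} conditionally on the iterate history gives $\mathbb{E}[\|\hat{g}_i^{t+1} - g_i^t\|^2\mid \cdot] \leq (1-\contraction)\|\hat{g}_i^t - g_i^t\|^2$, and the per-client smoothness $\|g_i^{t+1} - g_i^t\| \leq \locallipschitz\lr\|\bar{g}^\epoch\|$, combined with Young's inequality at the parameter $\beta = (1-\sqrt{1-\contraction})^{-1} - 1$, yields
\begin{align*}
\mathbb{E}[\|\hat{g}_i^{t+1} - g_i^{t+1}\|^2] \leq \sqrt{1-\contraction}\,\|\hat{g}_i^t - g_i^t\|^2 + \frac{(\locallipschitz)^2\sqrt{1-\contraction}\,\lr^2}{(1-\sqrt{1-\contraction})^2}\|\bar{g}^\epoch\|^2.
\end{align*}
Averaging over clients and using $(\lipschitztmp)^2 = \tfrac{1}{\nclients}\sum_i(\locallipschitz)^2$, I would set up the Lyapunov function $\Phi^\epoch \define F(\model[\epoch]) - F^\star + \tfrac{\lr}{2(1-\sqrt{1-\contraction})}\cdot\tfrac{1}{\nclients}\sum_i\mathbb{E}[\|\hat{g}_i^t - g_i^t\|^2]$. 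The hypothesized bound $\lr \leq (\lipschitz + \lipschitztmp\sqrt{(1-\contraction)/(1-\sqrt{1-\contraction})^2})^{-1}$ is exactly the threshold at which the coefficient of $\|\bar{g}^\epoch\|^2$ in $\Phi^{\epoch+1} - \Phi^\epoch$ becomes non-positive, producing $\Phi^{\epoch+1} \leq \Phi^\epoch - \tfrac{\lr}{2}\mathbb{E}[\|\nabla F(\model[\epoch])\|^2] + \tfrac{\lr\gradvar}{2(1-\sqrt{1-\contraction})}$. Telescoping $\epoch = 0,\ldots,T-1$, dividing by $\lr T/2$, and recognizing $\Phi^0 \leq F(\model[0]) - F^\star$ (upon zero initialization of the error state, or by appropriate absorption of the initial error) then gives the claimed inequality.

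\textbf{Main obstacle.} The principal difficulty is that $\mathcal{C}$ is biased, so the usual unbiased-compressor tricks (centered variance, MSE-decomposition across randomness) do not apply; instead, the contraction of \cref{lemma:contraction} must be invoked directly in expectation. Moreover, \cref{lemma:contraction} requires $\nisamples$ to scale with the round-dependent quantities $\bar{\Delta}$, $\bar{\Delta}'$, which themselves depend on the posterior/prior gap along the trajectory; this is precisely what the hypothesis ``$\nisamples$ satisfies \cref{lemma:contraction} in every iteration $\epoch$'' sidesteps, so that the same $\contraction$ can be reused in each recursion step. Finally, obtaining the tight scaling $1/(1-\sqrt{1-\contraction})$ in the noise term, rather than the weaker $1/\contraction$, requires choosing the Young parameter to be exactly $\beta = (1-\sqrt{1-\contraction})^{-1} - 1$, which is the central algebraic ingredient of the EF21 analysis that must be preserved verbatim here.
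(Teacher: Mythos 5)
Your proposal is correct and takes essentially the same route the paper intends: the paper gives no written proof of this theorem at all, deferring entirely to the EF21 analysis of \citet{richtarik2021ef21} instantiated with the contraction constant $\contraction$ from \cref{lemma:contraction}, and your descent-lemma/contraction-recursion/Lyapunov-telescope argument is precisely a reconstruction of that analysis. The only blemish is a minor bookkeeping slip in the Young step (the parameter should be $s=(1-\contraction)^{-1/2}-1$, giving recursion coefficients $\sqrt{1-\contraction}$ and $\frac{(1-\contraction)\locallipschitz^2\lr^2}{1-\sqrt{1-\contraction}}$, which is what produces the stated stepsize threshold), but this does not affect the structure or validity of the argument.
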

Similarly, guarantees can be derived for other algorithms, such as modified versions of \alglocalrand\ with error-feedback and momentum, using \cref{lemma:contraction}. However, we emphasize the generality of \alg, reaching beyond conventional FL with stochastic compression to pure stochastic narratives.
}

\section{Gradient Descent with a KL-Proximity} \label{app:mirror_descent}

Mirror descent employs point-wise optimization in the form of a first-order approximation of $\loss$ with proximity term $D_F(p,q)$, where $D_F$ is the Bregman divergence associated with function $F(\cdot)$. When $F(x) = \Vert x \Vert^2$, and hence the Bregman divergence is the Euclidean distance, this is known as gradient descent. Let now $p$ and $q$ be vectors with the entries corresponding to independent Bernoulli parameters. When we choose $F(x) = x \log(x) + (1-x) \log (1-x)$, the Bregman divergence becomes $D_F(p, q) = \sum_{k=1}^d \kl{p_k}{q_k}$. Hence, we are optimizing with respect to a KL-proximity constraint. The mapping between dual and primal spaces is then given by $\nabla F(x) = \log(x) - \log(1-x)$ and $\left(\nabla F(x)\right)^{-1} = \frac{1}{e^{-x}+1}$, respectively; also known as the inverse sigmoid and the sigmoid functions.

\section{Block Allocation} \label{app:block_allocation}
The simplest yet effective strategy for block allocation is to partition the model into equally-sized blocks of size $\dimension/\nblocks$ for MRC (Fixed). The partitioning into blocks is required to make MRC practically feasible in this setting. It is known that for vanishing MRC error, the number of samples $\nisamples$ from a block $\priorulblock$ of the prior is supposed to be in the order of $\exp\left(\kl{\posteriorblock}{\priorulblock}\right)$, where $\posteriorblock$ is the $\blockidx$-th block of posterior $\posterior$. It was observed by \cite{isik2024adaptive} that the KL-divergence decreases as the training progresses with the global model used as a prior, which is intuitive since the local training will change the posterior less and less as training converges. To adapt the block size according to the divergence from the posterior with respect to the prior, \cite{isik2024adaptive} proposed an adaptive block allocation strategy (Adaptive), where upon realizing a large deviation from the target KL-divergence per block, clients partition their model into blocks with equal sums of parameter-wise KL-divergences and transmit the block intervals to the federator. The federator aggregates the indices of all the clients, and broadcasts the updated block allocation. We propose in this work a low complexity solution that adapts the block size according to the average KL-divergence per block (Adaptive-Avg). This alleviates the cost of computing and transmitting the exact block partitions, where the transmission of each block size requires $\log_2(b_{\max})$ bits, with $b_{\max}$ the maximum pre-defined block size. Instead, the transmission of one size is enough in our solution. If the average KL per block $\kl{\posteriorblock}{\priorulblock}$ deviates more than a given factor, the clients request to update the blocks. In the next iteration, each client proposes a block size, and the federator averages and broadcasts an updated size.

\section{Additional Experimental Details} \label{app:sim_setup}
We use the cross-entropy loss and a batch size of $128$ in all our experiments. We use Adam \citep{kingma2014adam} as an optimizer with learning rate $\eta = 0.0003$ for all non-stochastic methods, and $\eta=0.1$ for probabilistic mask training. For non-stochastic FL, we use a federator (server) learning rate of $0.1$, i.e., the clients' gradients are averaged, and the federator updates the global model with learning rate $0.1$\rev{, and with a learning rate of $0.005$ for \algglobalrand\ with SignSGD}. For \textsc{M3}, we use a federator learning rate of $0.02$ to obtain reliable results. For \textsc{Liec} and \textsc{Cser}, we use an average period of $50$ global iterations (cf. \citep{cheng2024communication,xie2020cser}). For M3, we use TopK with $K=\lfloor\dimension/\nclients\rfloor$. To run the simulations, we use a cluster of different architectures, which we list in the following table.

\begin{table}[H]
\centering
\begin{tabular}{|l|l|l|l|}
\hline
\textbf{CPU(s)}                                  & \textbf{RAM}   & \textbf{GPU(s)}                         & \textbf{VRAM} \\ \hline
2x Intel Xeon Platinum 8176 (56 cores)         & 256 GB         & 2x NVIDIA GeForce GTX 1080 Ti        & 11 GB                 \\ 
2x AMD EPYC 7282 (32 cores)                    & 512 GB         & NVIDIA GeForce RTX 4090              & 24 GB                 \\ 
2x AMD EPYC 7282 (32 cores)                    & 640 GB         & NVIDIA GeForce RTX 4090              & 24 GB                 \\ 
2x AMD EPYC 7282 (32 cores)                    & 448 GB         & NVIDIA GeForce RTX 4080              & 16 GB                 \\ 
2x AMD EPYC 7282 (32 cores)                    & 256 GB         & NVIDIA GeForce RTX 4080              & 16 GB                 \\ 
HGX-A100 (96 cores)                            & 1 TB           & 4x NVIDIA A100                      & 80 GB                 \\ 
DGX-A100 (252 cores)                           & 2 TB           & 8x NVIDIA Tesla A100                & 80 GB                 \\ 
DGX-1-V100 (76 cores)                          & 512 GB         & 8x NVIDIA Tesla V100                & 16 GB                 \\ 
DGX-1-P100 (76 cores)                          & 512 GB         & 8x NVIDIA Tesla P100                & 16 GB                 \\ 
HPE-P100 (28 cores)                            & 256 GB         & 4x NVIDIA Tesla P100                & 16 GB                 \\ \hline
\end{tabular}
\caption{System specifications of our simulation cluster.}
\end{table}

The details of the CNN architectures used in our experiments are summarized in the following. The parameter count is $61706$ for LeNet5, $1933258$ for 4CNN, and $2262602$ for 6CNN.

\begin{table}[H]
\centering
\caption{LeNet5 Architecture Overview} \label{tab:lenet5}
\begin{tabular}{|l|l|l|} \hline
\textbf{Layer} & \textbf{Specification} & \textbf{Activation} \\ \hline
5x5 Conv & 6 filters, stride 1 & ReLU, AvgPool (2x2) \\
5x5 Conv & 16 filters, stride 1 & ReLU, AvgPool (2x2) \\
Linear & 120 units & ReLU \\
Linear & 84 units & ReLU \\
Linear & 10 units & Softmax \\ \hline
\end{tabular}
\end{table}

\begin{table}[H]
\centering
\caption{4-layer CNN (4CNN) Architecture Overview} \label{tab:4cnn}
\begin{tabular}{|l|l|l|} \hline
\textbf{Layer} & \textbf{Specification} & \textbf{Activation} \\ \hline
3x3 Conv & 64 filters, stride 1 & ReLU \\
3x3 Conv & 64 filters, stride 1 & ReLU, MaxPool (2x2) \\
3x3 Conv & 128 filters, stride 1 & ReLU \\
3x3 Conv & 128 filters, stride 1 & ReLU, MaxPool (2x2) \\
Linear & 256 units & ReLU \\
Linear & 256 units & ReLU \\
Linear & 10 units & Softmax \\ \hline
\end{tabular}
\end{table}

\begin{table}[H]
\caption{6-layer CNN (6CNN) Architecture Overview} \label{tab:6cnn}
\centering
\begin{tabular}{|l|l|l|} \hline
\textbf{Layer} & \textbf{Specification} & \textbf{Activation} \\ \hline
3x3 Conv & 64 filters, stride 1 & ReLU \\
3x3 Conv & 64 filters, stride 1 & ReLU, MaxPool (2x2) \\
3x3 Conv & 128 filters, stride 1 & ReLU \\
3x3 Conv & 128 filters, stride 1 & ReLU, MaxPool (2x2) \\
3x3 Conv & 256 filters, stride 1 & ReLU \\
3x3 Conv & 256 filters, stride 1 & ReLU, MaxPool (2x2) \\
Linear & 256 units & ReLU \\
Linear & 256 units & ReLU \\
Linear & 10 units & Softmax \\ \hline
\end{tabular}
\end{table}

\rev{For the sake of clarity, in the paper we restrict the analysis to a fixed number of importance samples $\nisamples$, block sizes $\nblocks$, and choice of priors $\priorul, \priordl$. Our experiments have shown that, while increasing $\nisamples$ beyond the ones used in our algorithms slightly improves the convergence over the number of epochs, the convergence with respect to the communication cost did not significantly improve. The block size is mainly limited by the system resources at hand, and one would choose the largest possible for best efficiency while complying with memory resources. We investigated many different prior choices and found the former global model to be reasonably good in almost all cases. With high heterogeneity, it might be beneficial to use different convex combinations as priors, which mix the former global model with the latest posterior estimate of a certain client, but the gains we experienced were minor. Hence, we settled on the former global estimate for simplicity in presenting the algorithm.}

\section{Federated Probabilistic Mask Training} \label{sec:fedpm}

\rev{The idea in federated probabilistic mask training (FedPM) \cite{isik2023sparse} is to collaboratively train a probabilistic mask that determines which weights to maintain from a randomly initialized network. The motivation stems from the \textit{lottery-ticket hypothesis} \citep{frankle2018lottery}, which claims that randomly initialized networks contain sub-networks capable of reaching accuracy comparable to that of the full network. 
The weights $\weights$ of the network are randomly initialized at the start of training, and remain fixed. The  federator and clients only train a mask, which  determines for each  parameter whether it is activated or not, i.e., identifying an efficient subnetwork within the given fixed network. The probabilistic masks $\model$ are described by Bernoulli distributions, i.e., $\model \in [0, 1]^\dimension$ contains a Bernoulli parameter to be trained for each weight of the network. These parameters determine the probability of retaining the corresponding weights. During inference, the weights $\weights$ are masked with samples $\maskinf \in \{0, 1\}^\dimension \sim \model$ from the distribution $\model$, i.e., the inference is conducted on a network with weights $\weights \odot \maskinf$. In FedPM, clients sample from their locally trained models, and send these samples to the federator, which, in turn, updates the global model by averaging these samples. The communication cost of this scheme is fixed for all iterations, even though the communication cost can be reduced since the KL-divergence between the global model and the locally trained models diminishes as the training progresses.}

\rev{
\setlength{\textfloatsep}{15pt}
\begin{algorithm}[!tb]
\caption{Local Training at Client $\client$}
\begin{algorithmic}[1] %
\REQUIRE Model $\modelest$ \\
\STATE Map model to scores in the dual space: $\scores[0] = \sigma^{-1}(\modelest) = \log\left(\frac{\modelest}{1-\modelest}\right)$
\FOR{Local iterations $\localepoch \in [\nlocalepochs]$}
\STATE $\scores[\ell] = \scores[0] - \lr \nabla_{\scores[\ell-1]}{F(\modelest^{(\localepoch-1)}, \data)}$, where $\modelest^{(\localepoch-1)} = \sigma(\scores[\ell-1])$
\ENDFOR
\STATE Map back to primal space: $\posterior = \sigma(\scores[\nlocalepochs])$
\end{algorithmic}
\label{alg:local_training}
\end{algorithm}
}

\rev{
We adopt the following federated learning procedure for collaboratively learning network masks, and highlight in the following the parallels to mirror descent by referring to primal and dual spaces. Starting from a common model $\model[0]$, at iteration $\epoch$, each client $\client$ locally trains the model $\modelest$ in $\nlocalepochs$ local iterations. 
To enable gradient descent, the model $\modelest$ is mapped to scores $\scores[0]$ in a dual space by the inverse Sigmoid function $\scores[0] = \sigma^{-1}(\modelest) = \log(\modelest)-\log(1-\modelest)$. The scores are then trained for $\nlocalepochs$ local iterations $\localepoch \in [\nlocalepochs]$ by computing the gradient $\nabla_{\scores[\ell-1]}{F(\modelest^{(\localepoch-1)}, \data)}$, where the straight-through estimator is used to compute the gradient of the non-differentiable Bernoulli sampling operation based on the distribution $\modelest^{(\localepoch-1)} = \sigma(\scores[\ell-1])$, i.e., the gradient equals the Bernoulli parameter. By mapping the model back to the primal space, each client $\client$ obtains a model update in terms of a posterior $\posterior = \sigma(\scores[\nlocalepochs])$. The client training process is summarized in \cref{alg:local_training}.
}

\section{Minimal Random Coding (MRC)} \label{sec:importance_sampling}

\rev{\citet{isik2024adaptive} proposed a method, called KL minimization with side information (KLMS), to reduce the cost of transmitting the local models $\posterior$ to the federator.  Consequently, the communication cost depends on the KL-divergence between the desired distribution and the common prior. This method utilizes the common side information available at both the clients and the federator, as well as shared randomness. The idea is that instead of sampling locally and sending the samples to the federator, the federator in the  KLMS method samples from the desired distribution through MRC.  %
In a nutshell, MRC \citep{havasi2018minimal} is based on importance sampling \citep{srinivasan2002importance} and makes use of a common prior to sample from a desired distribution. Consider two distributions $\genericprior$ and $\genericposterior$, where $\genericprior$ is known to both parties, and $\genericposterior$ is only known to the client. To make the federator sample from $\genericposterior$, both parties sample $\nisamples$ samples $\{\isample\}_{\isidx\in[\nisamples]}$ from $\genericprior$. The client forms an auxiliary distribution $\auxdist(\isidx) = \frac{\genericposterior(\isample)/\genericprior(\isample)}{\sum_{\isidx=1}^{\nisamples} \genericposterior(\isample)/\genericprior(\isample)}$ capturing the importance of the samples. A sample from $\auxdist$ is fully described by its index $\isidx$, which can be transmitted with $\log_2(\nisamples)$ bits, and approximates a sample from $\genericposterior$. \citet{chatterjee2018sample} shown that importance sampling with posterior $\genericposterior$ and prior $\genericprior$ requires $\nisamples$ to be in the order of $\Theta(\exp(\kl{\genericposterior}{\genericprior}))$ , where $\kl{\genericposterior}{\genericprior}$ denotes the KL-divergence between distributions $\genericposterior$ and $\genericprior$. In what follows, we will also denote the KL-divergence between two Bernoulli distributions $\genericposterior$ and $\genericprior$ with parameters $q$ and $p$ by $\binkl{q}{p}$.}

\section{Additional Experiments} \label{app:additional_experiments}

We provide in the following experiments for both uniform (\iid) and heterogeneous (\noniid) data distributions for training LeNet5 and a 4-layer CNN on MNIST, a 4-layer CNN on Fashion MNIST, and a 6-layer CNN on CIFAR-10. The details of the neural networks can be found in \cref{tab:lenet5,tab:4cnn,tab:6cnn}. For each setting and method depicted, we show the average of three simulation runs with different seeds. We plot for each setting the test accuracies over the communication cost in bits, and the maximum test accuracy over the bitrate. We provide tables summarizing the maximum test accuracies with their standard deviation over multiple runs, the total bitrates and the bitrates split into uplink and downlink. \rev{The overall bitrates per parameter (bpp) are computed assuming point-to-point links between all participants, i.e., uplink and downlink costs have equal weight. For the case when a broadcast (BC) link between the federator and the clients is available, the bitrate per parameter for all baseline schemes reduces by a factor of $\nclients$. \algglobalrand profits similarly from the broadcast link, but \alglocalrand cannot profit due to the absence of shared randomness, giving the same overall bitrate compared to the point-to-point link scenario.} We highlight for each of the measures the scheme with the best result. Consistently throughout all experiments, \alg\ achieves order-wise savings in the bitrates per parameter while reaching state-of-the-art accuracies in the classification task. \rev{While the sampling can introduce an additional computational overhead depending on the implementation, the storage cost is similar to the baselines. Since we leverage as priors the former global model, the additional storage cost incurred is limited to storing until the next iteration the estimate of the former global model at each client, i.e., where the training started, which is usually not a bottleneck. This can be cheaper than some baselines, which require storing data for momentum and error-feedback.}

\begin{figure}[H]
\subfigure[Test Accuracy over Communication]{\includegraphics[width = .5\linewidth]{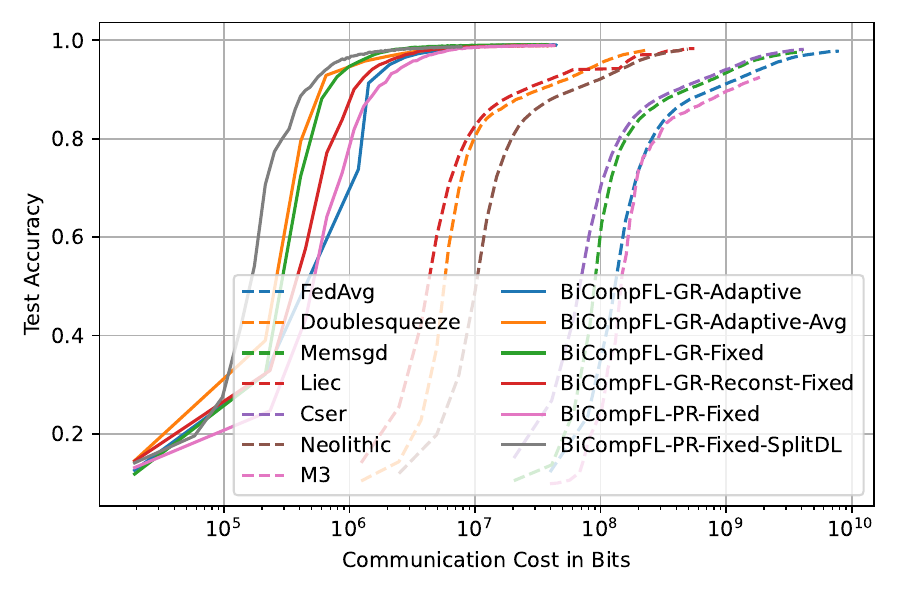}}
\subfigure[Test Accuracy over Bitrate]{\includegraphics[width = .5\linewidth]{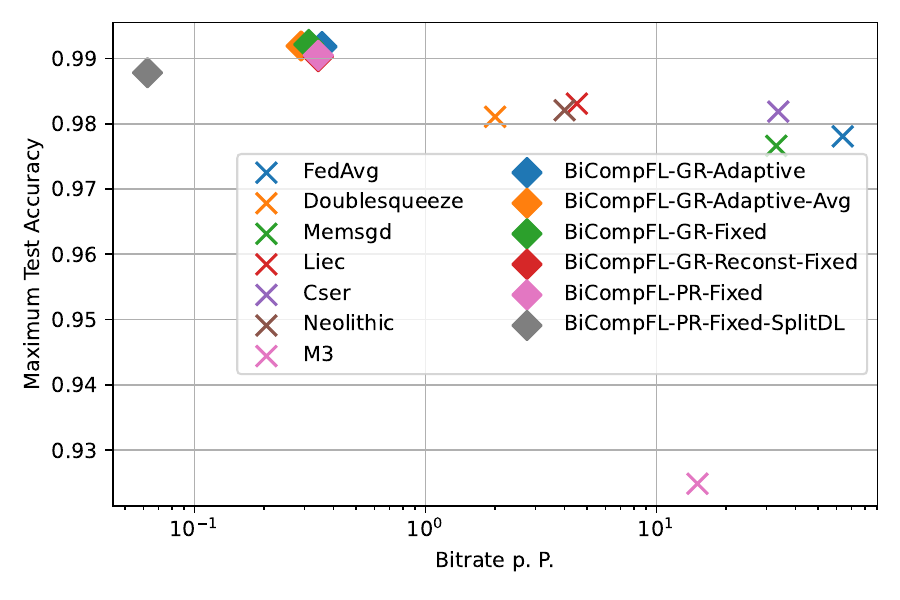}}
\vspace{-0.2cm}
\caption{MNIST LeNet \iid}
\vspace{-0.1cm}
\end{figure}

For LeNet5 on MNIST, it can be observed that all our proposed methods converge significantly faster to satisfying accuracies with respect to the communication cost, while achieving higher maximum accuracies after $200$ epochs than the non-stochastic baselines. Partitioning the model on the downlink can help to further reduce the communication cost with only a minor loss in performance, especially in the \iid\ setting. For \noniid\ data distribution, the loss in performance is larger than for \iid\ distribution. However, at the beginning of the training, the model improves faster with respect to the communication cost than all other schemes. The bitrates are comparable for all our methods, with the exception of \alglocalrand-Fixed-SplitDL. Further, \algglobalrand-Reconst-Fixed does not suffer notable performance degradation from employing an additional MRC step (especially for \iid\ data allocation).

\begin{table}[H]
    \centering
    \caption{MNIST LeNet \iid}
    \begin{tabular}{|l|c|c|c|c|c|}
\toprule
Method & Acc (mean ± std) & bpp & bpp (BC) & Uplink & Downlink \\
\midrule
FedAvg & 0.978 $\pm$ 0.1 & 64.0 & 35.0 & 32.0 & 32.0 \\
Doublesqueeze & 0.981 $\pm$ 0.1 & 2.0 & 1.1 & 1.0 & 1.0 \\
Memsgd & 0.977 $\pm$ 0.1 & 33.0 & 4.2 & 1.0 & 32.0 \\
Liec & 0.983 $\pm$ 0.1 & 4.5 & 2.5 & 2.3 & 2.3 \\
Cser & 0.982 $\pm$ 0.09 & 34.0 & 4.3 & 1.0 & 33.0 \\
Neolithic & 0.982 $\pm$ 0.1 & 4.0 & 2.2 & 2.0 & 2.0 \\
M3 & 0.925 $\pm$ 0.2 & 15.0 & 2.2 & 8.0 & 7.1 \\
BiCompFL-GR-Adaptive & \textbf{0.992 $\pm$ 0.0006} & 0.36 & 0.068 & 0.036 & 0.32 \\
BiCompFL-GR-Adaptive-Avg & 0.992 $\pm$ 0.0003 & 0.29 & \textbf{0.055} & \textbf{0.029} & 0.26 \\
BiCompFL-GR-Fixed & 0.992 $\pm$ 0.0002 & 0.31 & 0.059 & 0.031 & 0.28 \\
BiCompFL-GR-Reconst-Fixed & 0.99 $\pm$ 0.0002 & 0.34 & 0.063 & 0.031 & 0.31 \\
BiCompFL-PR-Fixed & 0.99 $\pm$ 0.0004 & 0.34 & 0.34 & 0.031 & 0.31 \\
BiCompFL-PR-Fixed-SplitDL & 0.988 $\pm$ 0.0009 & \textbf{0.063} & 0.063 & 0.031 & \textbf{0.031} \\
\bottomrule
\end{tabular}

    \label{tab:my_label}
\end{table}

\begin{figure}[H]
\subfigure[Test Accuracy over Communication]{\includegraphics[width = .5\linewidth]{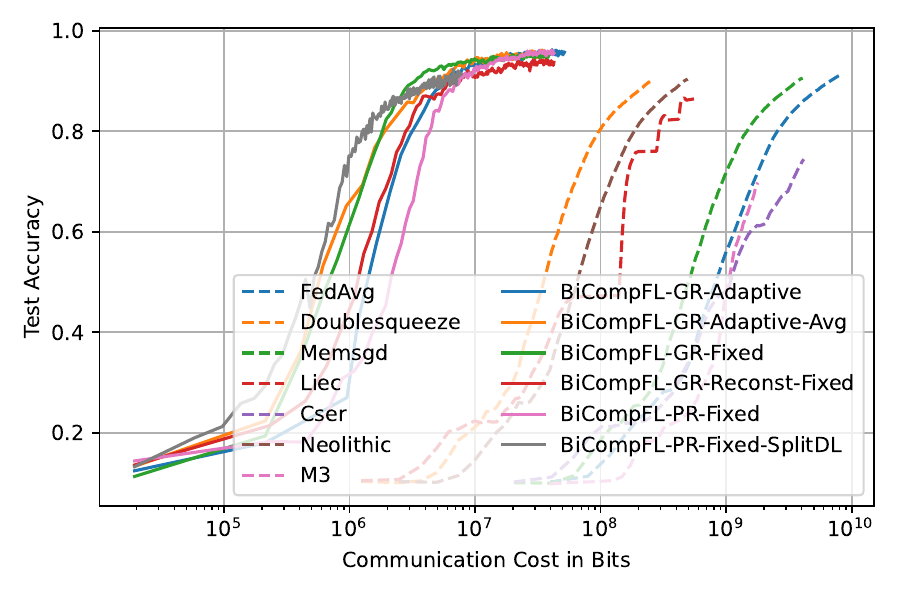}}
\subfigure[Test Accuracy over Bitrate]{\includegraphics[width = .5\linewidth]{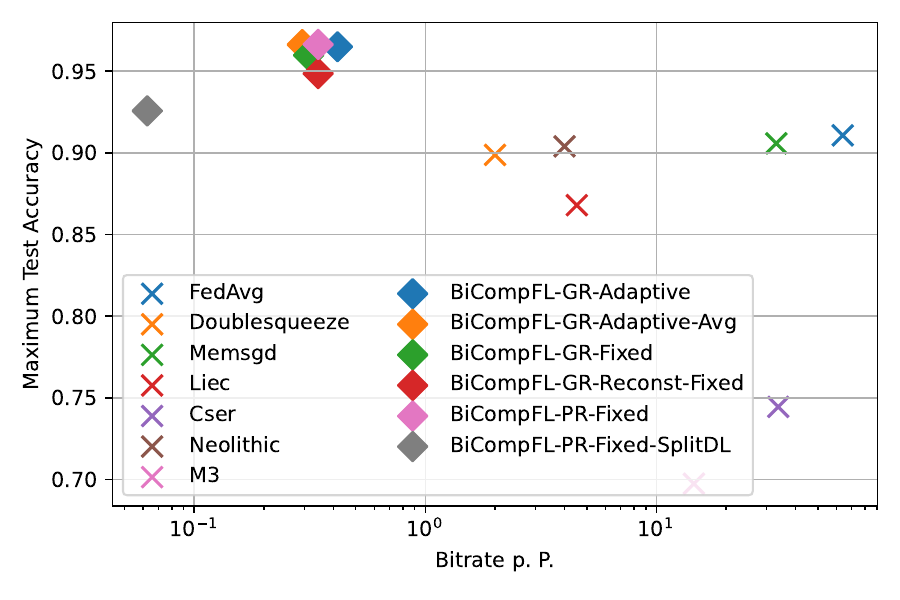}}
\caption{MNIST LeNet \noniid}
\end{figure}

\begin{table}[H]
    \centering
    \caption{MNIST LeNet \noniid}
    \begin{tabular}{|l|c|c|c|c|c|}
\toprule
Method & Acc (mean ± std) & bpp & bpp (BC) & Uplink & Downlink \\
\midrule
FedAvg & 0.911 $\pm$ 0.2 & 64.0 & 35.0 & 32.0 & 32.0 \\
Doublesqueeze & 0.899 $\pm$ 0.2 & 2.0 & 1.1 & 1.0 & 1.0 \\
Memsgd & 0.906 $\pm$ 0.2 & 33.0 & 4.2 & 1.0 & 32.0 \\
Liec & 0.866 $\pm$ 0.2 & 4.5 & 2.5 & 2.3 & 2.3 \\
Cser & 0.744 $\pm$ 0.2 & 34.0 & 4.3 & 1.0 & 33.0 \\
Neolithic & 0.904 $\pm$ 0.2 & 4.0 & 2.2 & 2.0 & 2.0 \\
M3 & 0.697 $\pm$ 0.2 & 15.0 & 2.2 & 7.3 & 7.2 \\
BiCompFL-GR-Adaptive & 0.965 $\pm$ 0.02 & 0.42 & 0.079 & 0.042 & 0.37 \\
BiCompFL-GR-Adaptive-Avg & \textbf{0.966 $\pm$ 0.02} & 0.29 & \textbf{0.056} & \textbf{0.029} & 0.26 \\
BiCompFL-GR-Fixed & 0.96 $\pm$ 0.03 & 0.31 & 0.059 & 0.031 & 0.28 \\
BiCompFL-GR-Reconst-Fixed & 0.949 $\pm$ 0.03 & 0.34 & 0.063 & 0.031 & 0.31 \\
BiCompFL-PR-Fixed & 0.966 $\pm$ 0.02 & 0.34 & 0.34 & 0.031 & 0.31 \\
BiCompFL-PR-Fixed-SplitDL & 0.926 $\pm$ 0.04 & \textbf{0.063} & 0.063 & 0.031 & \textbf{0.031} \\
\bottomrule
\end{tabular}

    \label{tab:my_label}
\end{table}

For 4CNN trained on MNIST, the differences between the proposed approaches become more visible. In the \iid\ setting, we can observe that the adaptive block allocations (both Adaptive and Adaptive-Avg) can drastically reduce the average bitrate in \algglobalrand. Partitioning the model in the downlink (\alglocalrand-Fixed-SplitDL) improves the accuracy over bitrate significantly compared to \alglocalrand-Fixed.

\begin{figure}[H]
\subfigure[Test Accuracy over Communication]{\includegraphics[width = .5\linewidth]{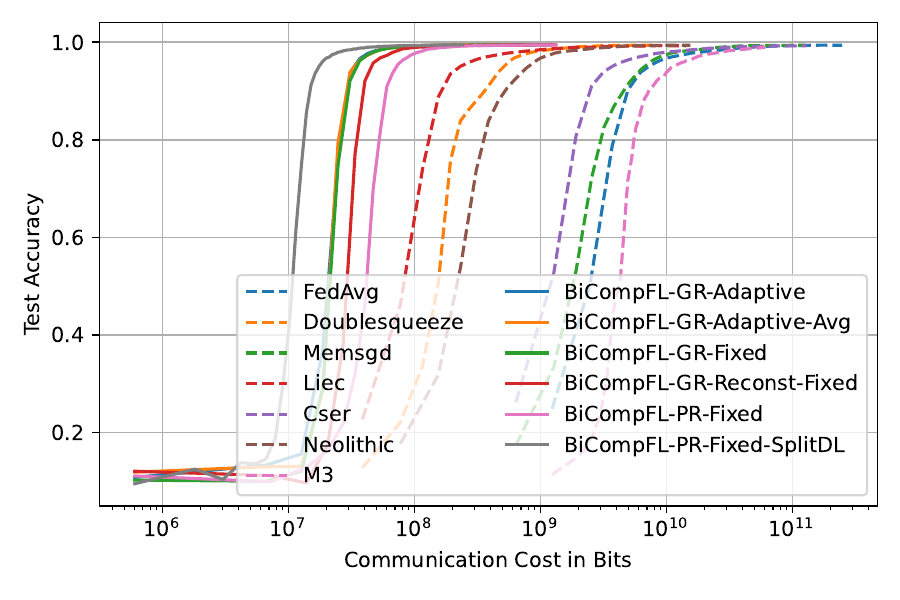}}
\subfigure[Test Accuracy over Bitrate]{\includegraphics[width = .5\linewidth]{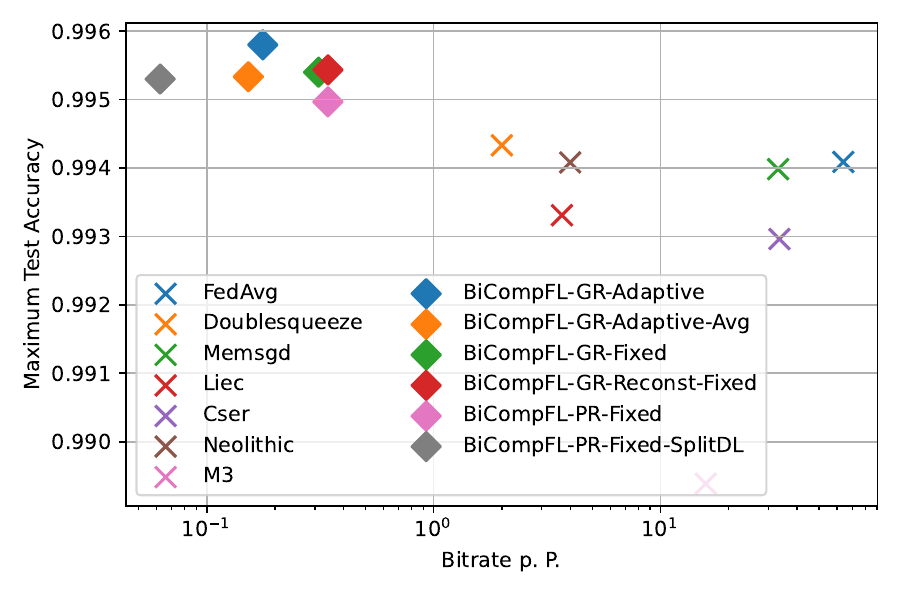}}
\caption{MNIST 4CNN \iid}
\end{figure}

\begin{table}[H]
    \centering
    \caption{MNIST 4CNN \iid}
    \begin{tabular}{|l|c|c|c|c|c|}
\toprule
Method & Acc (mean ± std) & bpp & bpp (BC) & Uplink & Downlink \\
\midrule
FedAvg & 0.994 $\pm$ 0.06 & 64.0 & 35.0 & 32.0 & 32.0 \\
Doublesqueeze & 0.994 $\pm$ 0.1 & 2.0 & 1.1 & 1.0 & 1.0 \\
Memsgd & 0.994 $\pm$ 0.08 & 33.0 & 4.2 & 1.0 & 32.0 \\
Liec & 0.993 $\pm$ 0.07 & 3.7 & 2.0 & 1.8 & 1.8 \\
Cser & 0.993 $\pm$ 0.06 & 33.0 & 4.3 & 1.0 & 32.0 \\
Neolithic & 0.994 $\pm$ 0.08 & 4.0 & 2.2 & 2.0 & 2.0 \\
M3 & 0.989 $\pm$ 0.2 & 16.0 & 2.2 & 8.4 & 7.4 \\
BiCompFL-GR-Adaptive & \textbf{0.996 $\pm$ 0.0001} & 0.18 & 0.034 & 0.018 & 0.16 \\
BiCompFL-GR-Adaptive-Avg & 0.995 $\pm$ 0.0001 & 0.15 & \textbf{0.029} & \textbf{0.015} & 0.14 \\
BiCompFL-GR-Fixed & 0.995 $\pm$ 0.0002 & 0.31 & 0.059 & 0.031 & 0.28 \\
BiCompFL-GR-Reconst-Fixed & 0.995 $\pm$ 0.0001 & 0.34 & 0.062 & 0.031 & 0.31 \\
BiCompFL-PR-Fixed & 0.995 $\pm$ 0.0002 & 0.34 & 0.34 & 0.031 & 0.31 \\
BiCompFL-PR-Fixed-SplitDL & 0.995 $\pm$ 0.0002 & \textbf{0.062} & 0.062 & 0.031 & \textbf{0.031} \\
\bottomrule
\end{tabular}

    \label{tab:my_label}
\end{table}

\begin{figure}[H]
\subfigure[Test Accuracy over Communication]{\includegraphics[width = .5\linewidth]{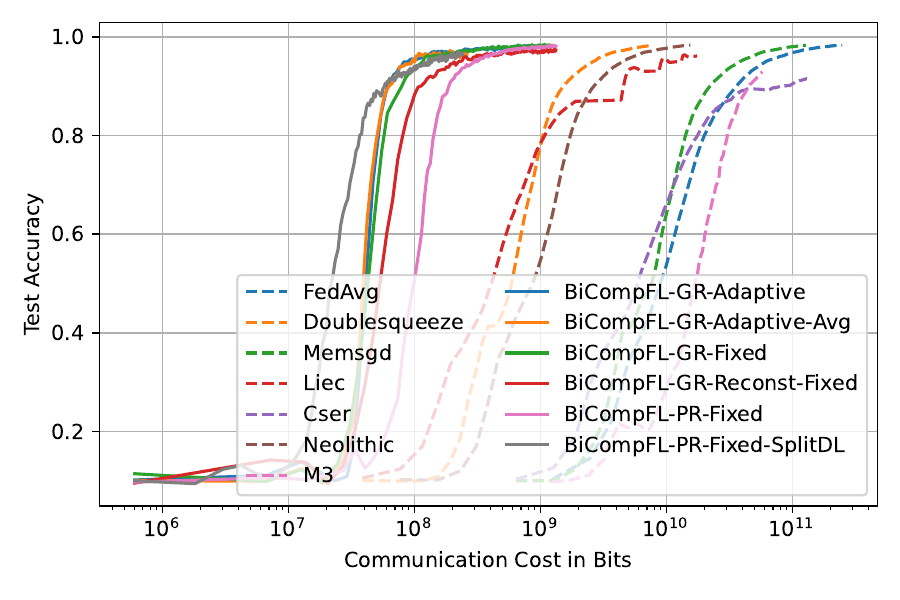}}
\subfigure[Test Accuracy over Bitrate]{\includegraphics[width = .5\linewidth]{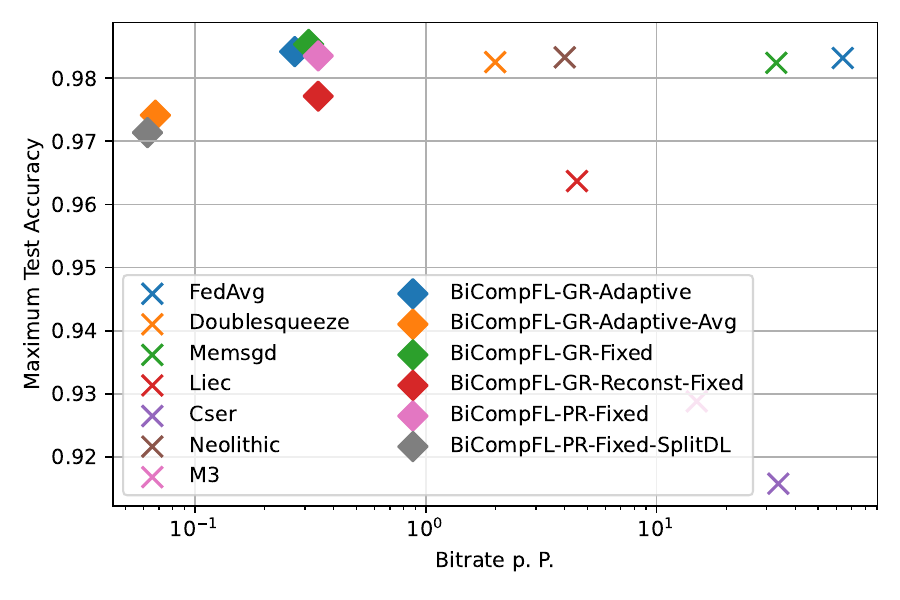}}
\caption{MNIST 4CNN \noniid}
\end{figure}

In the \noniid\ case of 4CNN on MNIST, the adaptive average allocation strategy provides a significant reduction in the bitrate for \algglobalrand, with similar loss in the accuracy as SplitDL for \alglocalrand. In this setting, it is also apparent that the reconstruction in \algglobalrand\ degrades the performance without gains in the bitrate compared to the proposed \cref{alg:globalrand}.

\begin{table}[H]
    \centering
    \caption{MNIST 4CNN \noniid}
    \begin{tabular}{|l|c|c|c|c|c|}
\toprule
Method & Acc (mean ± std) & bpp & bpp (BC) & Uplink & Downlink \\
\midrule
FedAvg & 0.983 $\pm$ 0.1 & 64.0 & 35.0 & 32.0 & 32.0 \\
Doublesqueeze & 0.982 $\pm$ 0.2 & 2.0 & 1.1 & 1.0 & 1.0 \\
Memsgd & 0.982 $\pm$ 0.2 & 33.0 & 4.2 & 1.0 & 32.0 \\
Liec & 0.963 $\pm$ 0.2 & 4.5 & 2.5 & 2.3 & 2.3 \\
Cser & 0.915 $\pm$ 0.1 & 34.0 & 4.3 & 1.0 & 33.0 \\
Neolithic & 0.983 $\pm$ 0.2 & 4.0 & 2.2 & 2.0 & 2.0 \\
M3 & 0.929 $\pm$ 0.3 & 15.0 & 2.2 & 7.8 & 7.1 \\
BiCompFL-GR-Adaptive & 0.984 $\pm$ 0.009 & 0.27 & 0.051 & 0.026 & 0.24 \\
BiCompFL-GR-Adaptive-Avg & 0.974 $\pm$ 0.02 & 0.067 & \textbf{0.013} & \textbf{0.0068} & 0.061 \\
BiCompFL-GR-Fixed & \textbf{0.985 $\pm$ 0.008} & 0.31 & 0.059 & 0.031 & 0.28 \\
BiCompFL-GR-Reconst-Fixed & 0.977 $\pm$ 0.01 & 0.34 & 0.062 & 0.031 & 0.31 \\
BiCompFL-PR-Fixed & 0.984 $\pm$ 0.009 & 0.34 & 0.34 & 0.031 & 0.31 \\
BiCompFL-PR-Fixed-SplitDL & 0.971 $\pm$ 0.02 & \textbf{0.062} & 0.062 & 0.031 & \textbf{0.031} \\
\bottomrule
\end{tabular}

    \label{tab:my_label}
\end{table}

\begin{figure}[H]
\subfigure[Test Accuracy over Communication]{\includegraphics[width = .5\linewidth]{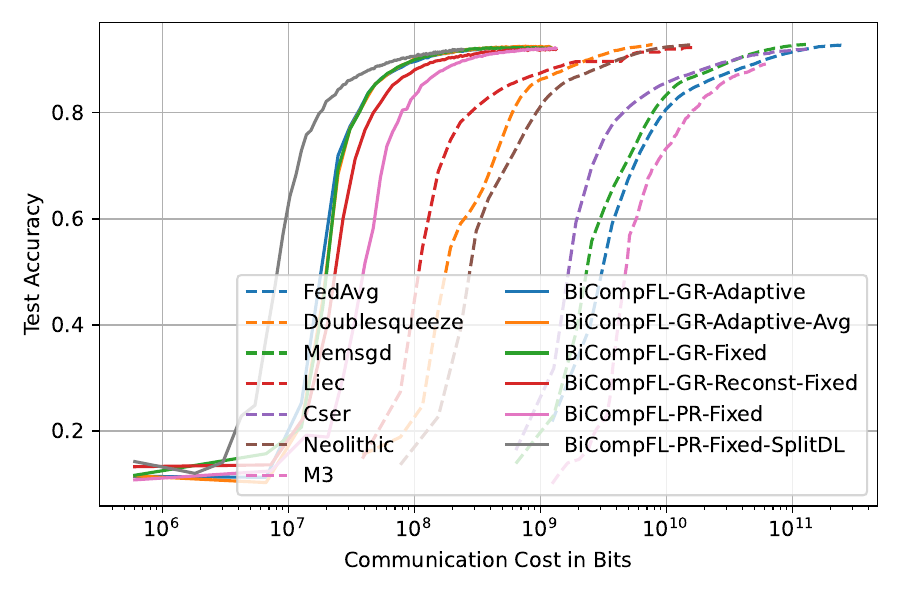}}
\subfigure[Test Accuracy over Bitrate]{\includegraphics[width = .5\linewidth]{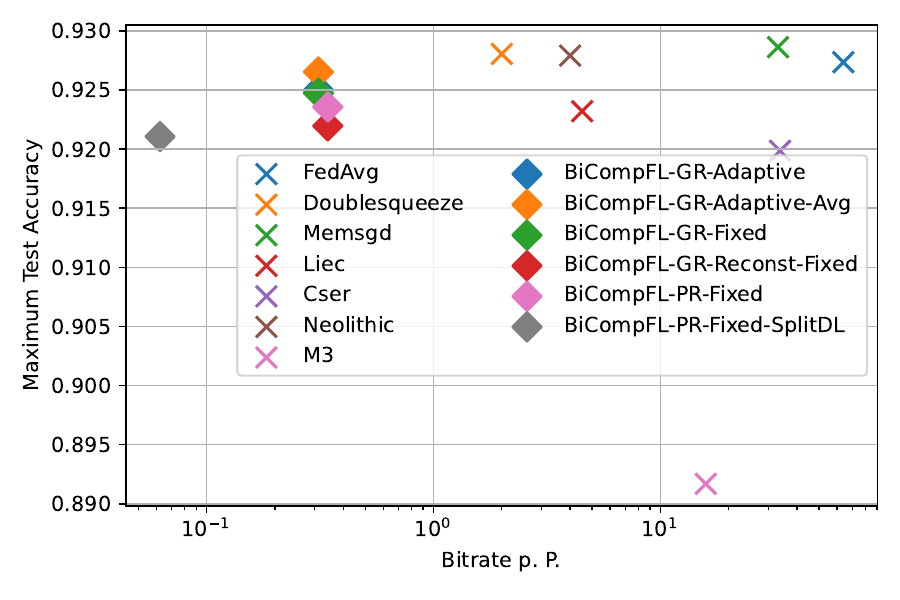}}
\caption{Fashion MNIST 4CNN \iid}
\end{figure}

\begin{table}[H]
    \centering
    \caption{Fashion MNIST 4CNN \iid}
    \begin{tabular}{|l|c|c|c|c|c|}
\toprule
Method & Acc (mean ± std) & bpp & bpp (BC) & Uplink & Downlink \\
\midrule
FedAvg & 0.927 $\pm$ 0.07 & 64.0 & 35.0 & 32.0 & 32.0 \\
Doublesqueeze & \textbf{0.928 $\pm$ 0.1} & 2.0 & 1.1 & 1.0 & 1.0 \\
Memsgd & 0.928 $\pm$ 0.09 & 33.0 & 4.2 & 1.0 & 32.0 \\
Liec & 0.923 $\pm$ 0.08 & 4.5 & 2.5 & 2.3 & 2.3 \\
Cser & 0.92 $\pm$ 0.08 & 34.0 & 4.3 & 1.0 & 33.0 \\
Neolithic & 0.928 $\pm$ 0.09 & 4.0 & 2.2 & 2.0 & 2.0 \\
M3 & 0.892 $\pm$ 0.2 & 16.0 & 2.2 & 8.3 & 7.6 \\
BiCompFL-GR-Adaptive & 0.925 $\pm$ 0.001 & 0.31 & \textbf{0.059} & \textbf{0.031} & 0.28 \\
BiCompFL-GR-Adaptive-Avg & 0.927 $\pm$ 0.0007 & 0.31 & \textbf{0.059} & \textbf{0.031} & 0.28 \\
BiCompFL-GR-Fixed & 0.925 $\pm$ 0.0007 & 0.31 & \textbf{0.059} & \textbf{0.031} & 0.28 \\
BiCompFL-GR-Reconst-Fixed & 0.922 $\pm$ 0.001 & 0.34 & 0.062 & \textbf{0.031} & 0.31 \\
BiCompFL-PR-Fixed & 0.924 $\pm$ 0.002 & 0.34 & 0.34 & \textbf{0.031} & 0.31 \\
BiCompFL-PR-Fixed-SplitDL & 0.921 $\pm$ 0.002 & \textbf{0.062} & 0.062 & \textbf{0.031} & \textbf{0.031} \\
\bottomrule
\end{tabular}

    \label{tab:my_label}
\end{table}

\begin{figure}[H]
\subfigure[Test Accuracy over Communication]{\includegraphics[width = .5\linewidth]{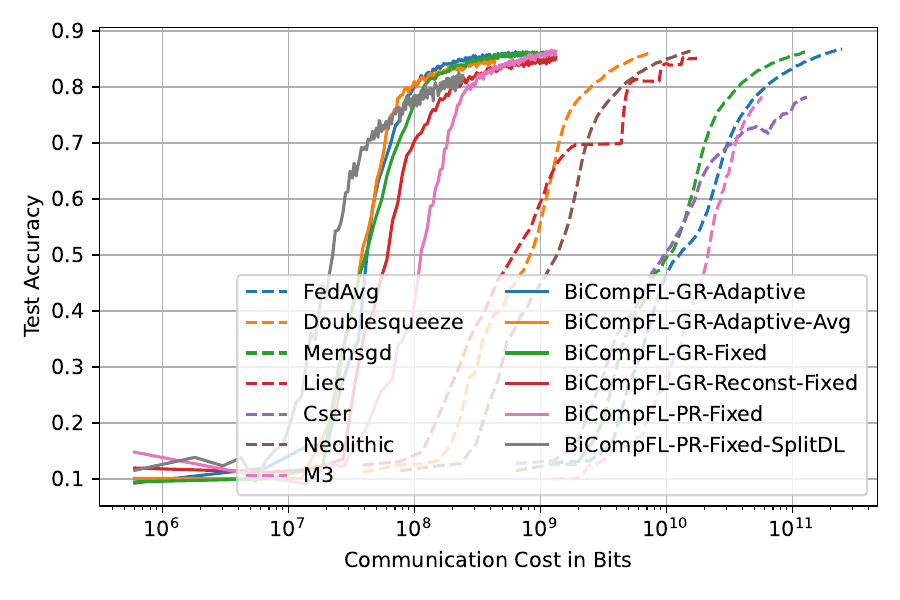}}
\subfigure[Test Accuracy over Bitrate]{\includegraphics[width = .5\linewidth]{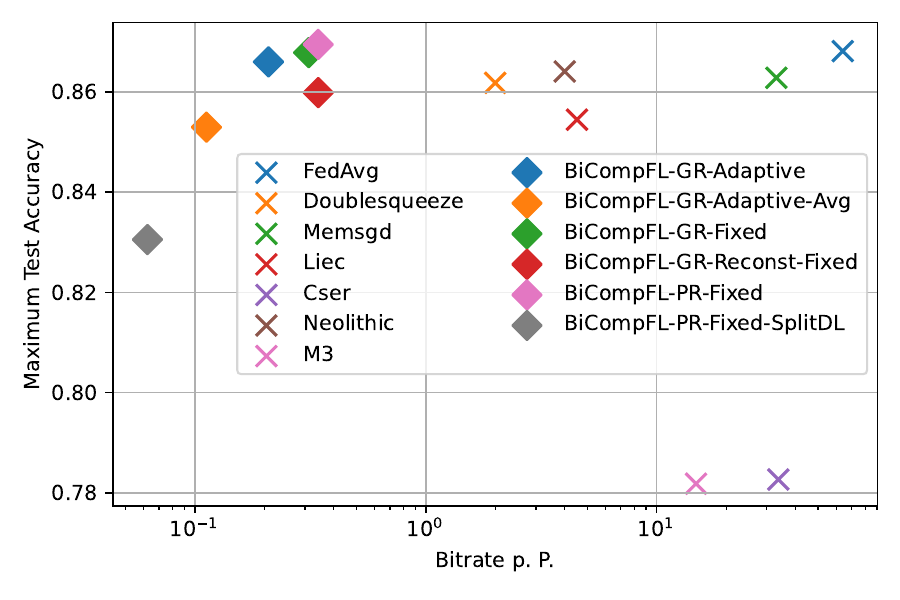}}
\caption{Fashion MNIST 4CNN \noniid}
\end{figure}

The results for Fashion MNIST are similar compared to the MNIST case. However, it becomes clear that \alglocalrand\ can significantly suffer from the unavailability of shared randomness in terms of the achieved accuracy when data is highly heterogeneous.

\begin{table}[H]
    \centering
    \caption{Fashion MNIST 4CNN \noniid}
    \begin{tabular}{|l|c|c|c|c|c|}
\toprule
Method & Acc (mean ± std) & bpp & bpp (BC) & Uplink & Downlink \\
\midrule
FedAvg & 0.867 $\pm$ 0.1 & 64.0 & 35.0 & 32.0 & 32.0 \\
Doublesqueeze & 0.861 $\pm$ 0.2 & 2.0 & 1.1 & 1.0 & 1.0 \\
Memsgd & 0.863 $\pm$ 0.2 & 33.0 & 4.2 & 1.0 & 32.0 \\
Liec & 0.853 $\pm$ 0.1 & 4.5 & 2.5 & 2.3 & 2.3 \\
Cser & 0.781 $\pm$ 0.1 & 34.0 & 4.3 & 1.0 & 33.0 \\
Neolithic & 0.864 $\pm$ 0.2 & 4.0 & 2.2 & 2.0 & 2.0 \\
M3 & 0.782 $\pm$ 0.2 & 15.0 & 2.2 & 8.0 & 6.9 \\
BiCompFL-GR-Adaptive & 0.866 $\pm$ 0.03 & 0.21 & 0.04 & 0.021 & 0.19 \\
BiCompFL-GR-Adaptive-Avg & 0.853 $\pm$ 0.04 & 0.11 & \textbf{0.021} & \textbf{0.011} & 0.1 \\
BiCompFL-GR-Fixed & 0.868 $\pm$ 0.03 & 0.31 & 0.059 & 0.031 & 0.28 \\
BiCompFL-GR-Reconst-Fixed & 0.86 $\pm$ 0.02 & 0.34 & 0.062 & 0.031 & 0.31 \\
BiCompFL-PR-Fixed & \textbf{0.869 $\pm$ 0.03} & 0.34 & 0.34 & 0.031 & 0.31 \\
BiCompFL-PR-Fixed-SplitDL & 0.831 $\pm$ 0.03 & \textbf{0.062} & 0.062 & 0.031 & \textbf{0.031} \\
\bottomrule
\end{tabular}

    \label{tab:my_label}
\end{table}

For 6CNN trained on CIFAR-10, the negative effects of missing global shared randomness and reconstructing in the case of \algglobalrand\ are prominent. For \noniid\ data distributions, the adaptive average allocation shows improvements over the fixed or the average block allocation. Partitioning the model is not a viable option in this setting, especially under \noniid\ data.

\begin{figure}[H]
\subfigure[Test Accuracy over Communication]{\includegraphics[width = .5\linewidth]{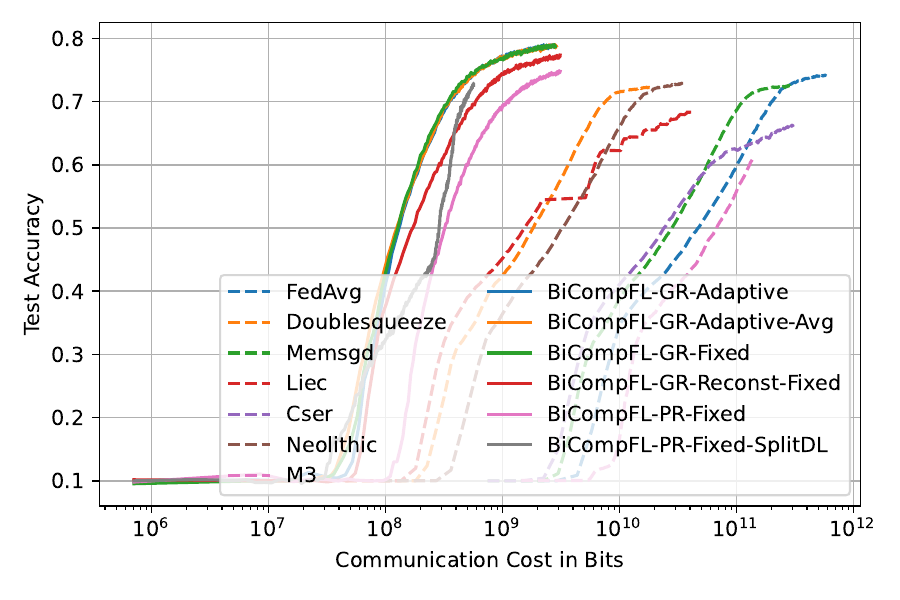}}
\subfigure[Test Accuracy over Bitrate]{\includegraphics[width = .5\linewidth]{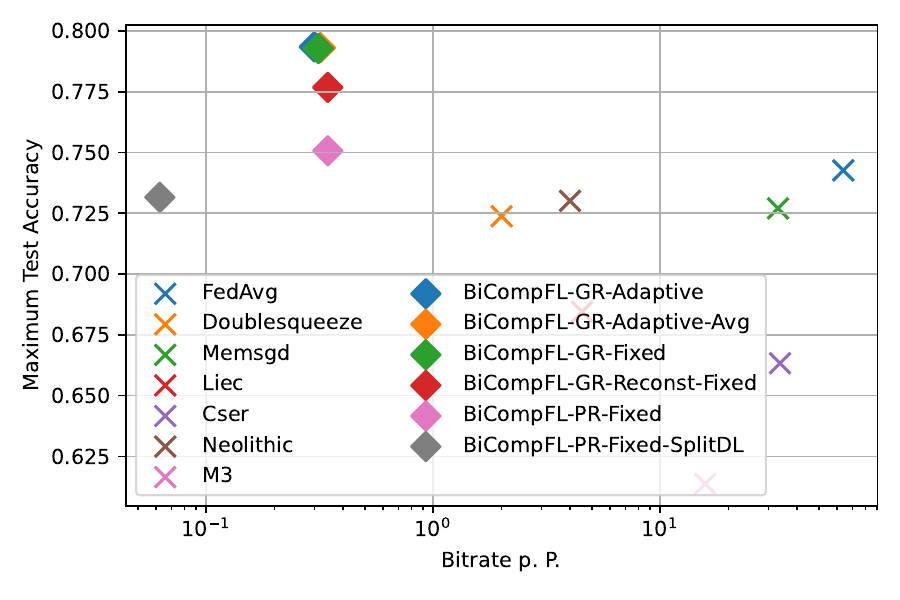}}
\caption{CIFAR-10 6CNN \iid}
\end{figure}

\begin{table}[H]
    \centering
    \caption{CIFAR-10 6CNN \iid}
    \begin{tabular}{|l|c|c|c|c|c|}
\toprule
Method & Acc (mean ± std) & bpp & bpp (BC) & Uplink & Downlink \\
\midrule
FedAvg & 0.742 $\pm$ 0.1 & 64.0 & 35.0 & 32.0 & 32.0 \\
Doublesqueeze & 0.723 $\pm$ 0.1 & 2.0 & 1.1 & 1.0 & 1.0 \\
Memsgd & 0.727 $\pm$ 0.1 & 33.0 & 4.2 & 1.0 & 32.0 \\
Liec & 0.684 $\pm$ 0.09 & 4.5 & 2.5 & 2.3 & 2.3 \\
Cser & 0.663 $\pm$ 0.08 & 34.0 & 4.3 & 1.0 & 33.0 \\
Neolithic & 0.73 $\pm$ 0.1 & 4.0 & 2.2 & 2.0 & 2.0 \\
M3 & 0.614 $\pm$ 0.1 & 16.0 & 2.2 & 8.3 & 7.5 \\
BiCompFL-GR-Adaptive & \textbf{0.793 $\pm$ 0.002} & 0.3 & \textbf{0.057} & \textbf{0.03} & 0.27 \\
BiCompFL-GR-Adaptive-Avg & 0.793 $\pm$ 0.002 & 0.32 & 0.061 & 0.032 & 0.29 \\
BiCompFL-GR-Fixed & 0.793 $\pm$ 0.004 & 0.31 & 0.059 & 0.031 & 0.28 \\
BiCompFL-GR-Reconst-Fixed & 0.777 $\pm$ 0.002 & 0.34 & 0.062 & 0.031 & 0.31 \\
BiCompFL-PR-Fixed & 0.751 $\pm$ 0.003 & 0.34 & 0.34 & 0.031 & 0.31 \\
BiCompFL-PR-Fixed-SplitDL & 0.732 $\pm$ 0.02 & \textbf{0.062} & 0.062 & 0.031 & \textbf{0.031} \\
\bottomrule
\end{tabular}

    \label{tab:my_label}
\end{table}

\begin{figure}[H]
\subfigure[Test Accuracy over Communication]{\includegraphics[width = .5\linewidth]{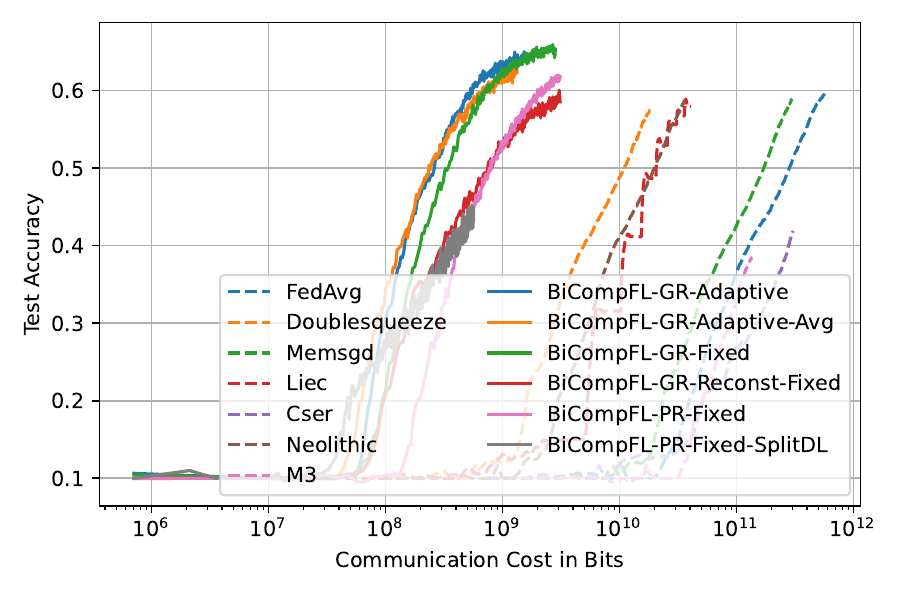}}
\subfigure[Test Accuracy over Bitrate]{\includegraphics[width = .5\linewidth]{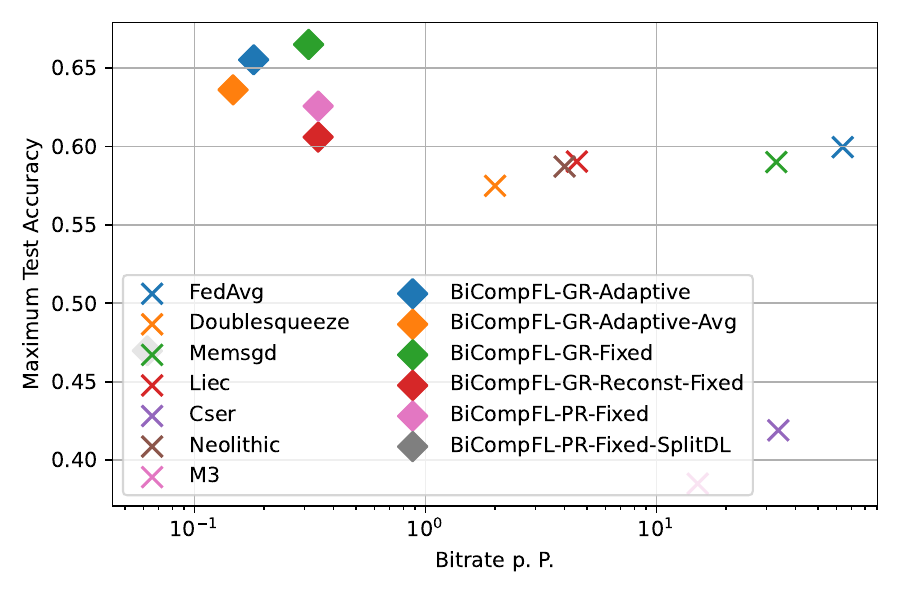}}
\caption{CIFAR-10 6CNN \noniid}
\end{figure}

\begin{table}[H]
    \centering
    \caption{CIFAR-10 6CNN \noniid}
    \begin{tabular}{|l|c|c|c|c|c|}
\toprule
Method & Acc (mean ± std) & bpp & bpp (BC) & Uplink & Downlink \\
\midrule
FedAvg & 0.599 $\pm$ 0.1 & 64.0 & 35.0 & 32.0 & 32.0 \\
Doublesqueeze & 0.575 $\pm$ 0.1 & 2.0 & 1.1 & 1.0 & 1.0 \\
Memsgd & 0.589 $\pm$ 0.1 & 33.0 & 4.2 & 1.0 & 32.0 \\
Liec & 0.589 $\pm$ 0.2 & 4.5 & 2.5 & 2.3 & 2.3 \\
Cser & 0.419 $\pm$ 0.09 & 34.0 & 4.3 & 1.0 & 33.0 \\
Neolithic & 0.587 $\pm$ 0.1 & 4.0 & 2.2 & 2.0 & 2.0 \\
M3 & 0.385 $\pm$ 0.1 & 15.0 & 2.2 & 8.3 & 6.7 \\
BiCompFL-GR-Adaptive & 0.655 $\pm$ 0.04 & 0.18 & 0.034 & 0.018 & 0.16 \\
BiCompFL-GR-Adaptive-Avg & 0.636 $\pm$ 0.05 & 0.15 & \textbf{0.028} & \textbf{0.015} & 0.13 \\
BiCompFL-GR-Fixed & \textbf{0.665 $\pm$ 0.03} & 0.31 & 0.059 & 0.031 & 0.28 \\
BiCompFL-GR-Reconst-Fixed & 0.606 $\pm$ 0.05 & 0.34 & 0.062 & 0.031 & 0.31 \\
BiCompFL-PR-Fixed & 0.626 $\pm$ 0.03 & 0.34 & 0.34 & 0.031 & 0.31 \\
BiCompFL-PR-Fixed-SplitDL & 0.47 $\pm$ 0.07 & \textbf{0.062} & 0.062 & 0.031 & \textbf{0.031} \\
\bottomrule
\end{tabular}

    \label{tab:my_label}
\end{table}

\rev{For completeness, we present in what follows the test accuracies over the number of trained epochs for all scenarios considered above. The setting of interest to this work is that of limited communication cost, and in particular, which performance is achievable given a fixed communication budget. Nonetheless, we can find that our proposed methods are not inferior in convergence speed over epochs compared to the baselines.}

\rev{
\begin{figure}[H]
\subfigure[MNIST LeNet \iid]{\includegraphics[width = .5\linewidth]{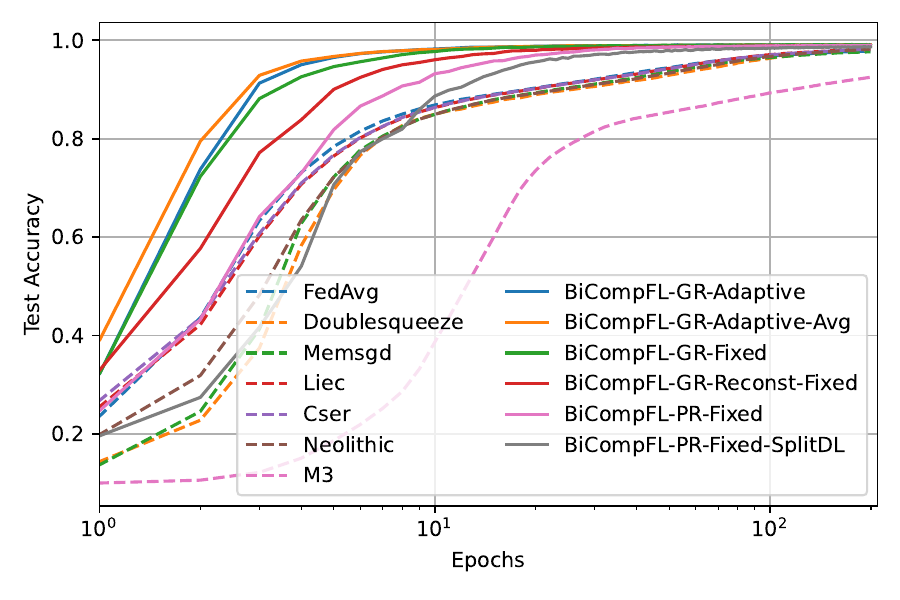}}
\subfigure[MNIST LeNet \noniid]{\includegraphics[width = .5\linewidth]{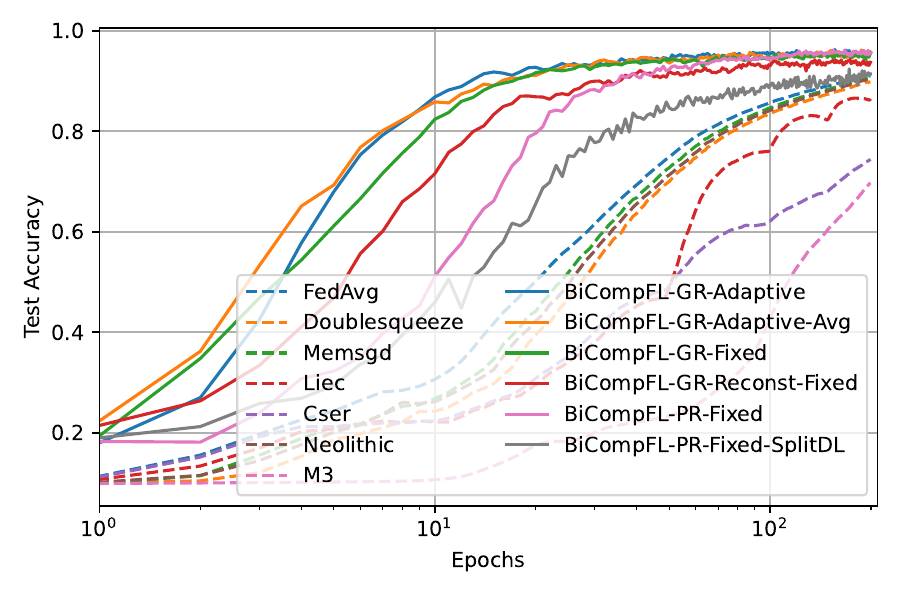}}
\subfigure[MNIST 4CNN \iid]{\includegraphics[width = .5\linewidth]{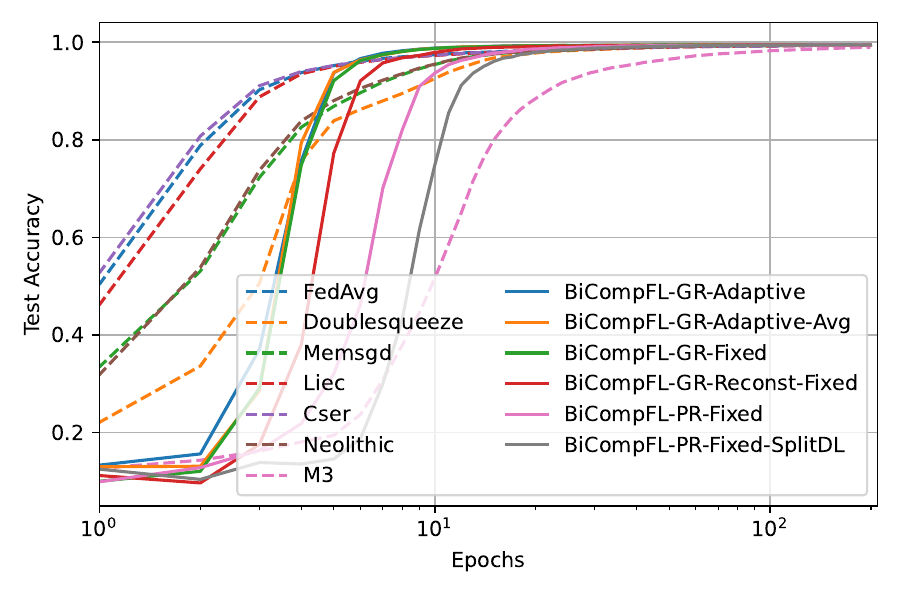}}
\subfigure[MNIST 4CNN \noniid]{\includegraphics[width = .5\linewidth]{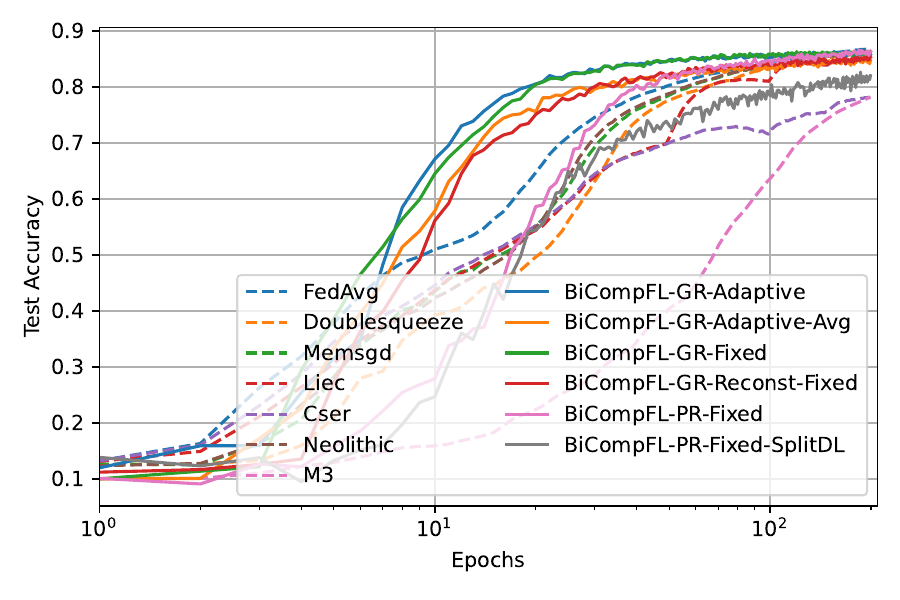}}
\subfigure[Fashion MNIST 4CNN \iid]{\includegraphics[width = .5\linewidth]{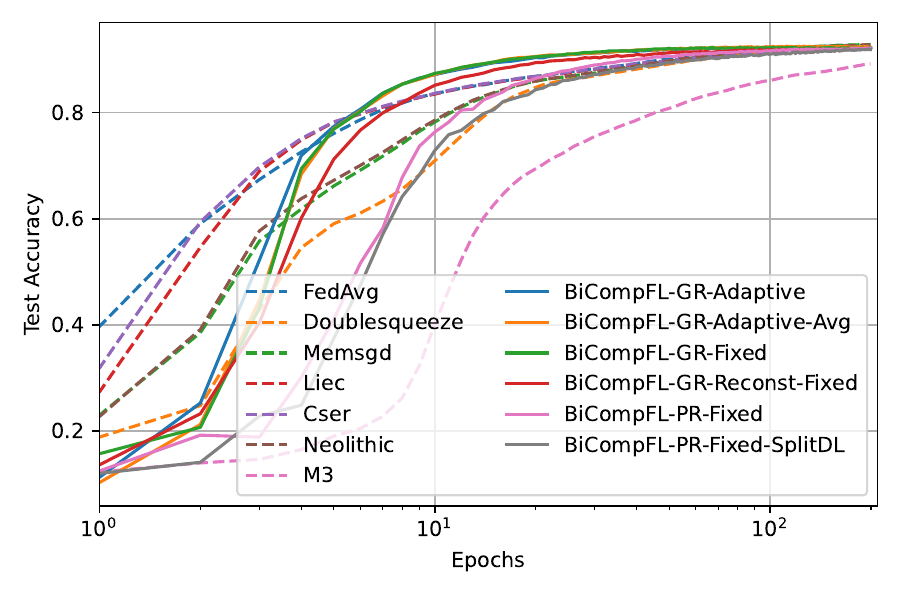}}
\subfigure[Fashion MNIST 4CNN \noniid]{\includegraphics[width = .5\linewidth]{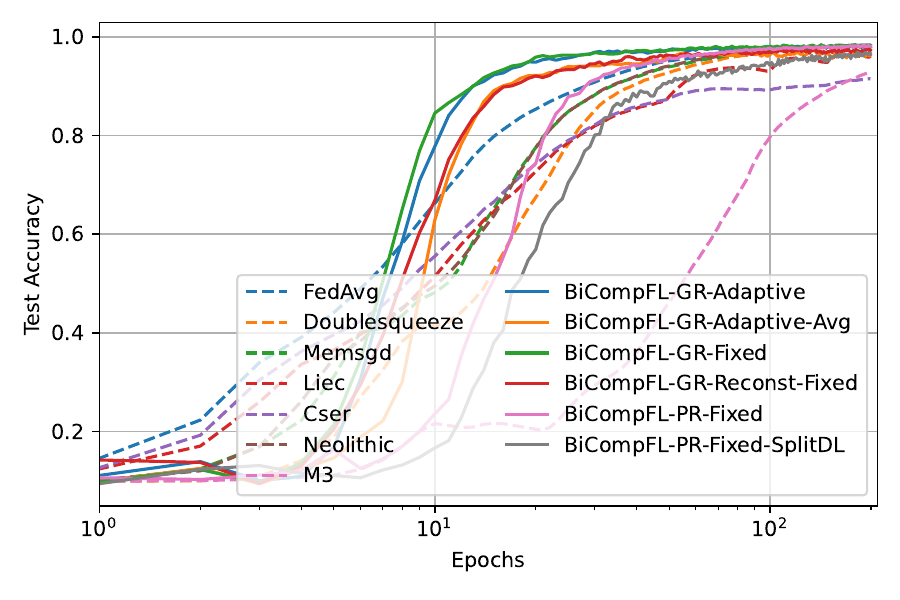}}
\subfigure[Cifar-10 6CNN \iid]{\includegraphics[width = .5\linewidth]{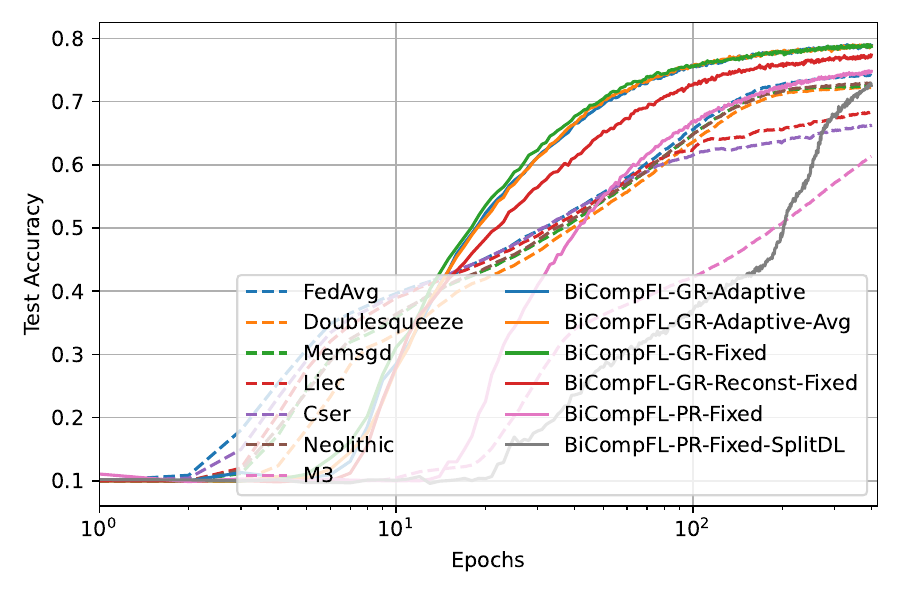}}
\subfigure[Cifar-10 6CNN \noniid]{\includegraphics[width = .5\linewidth]{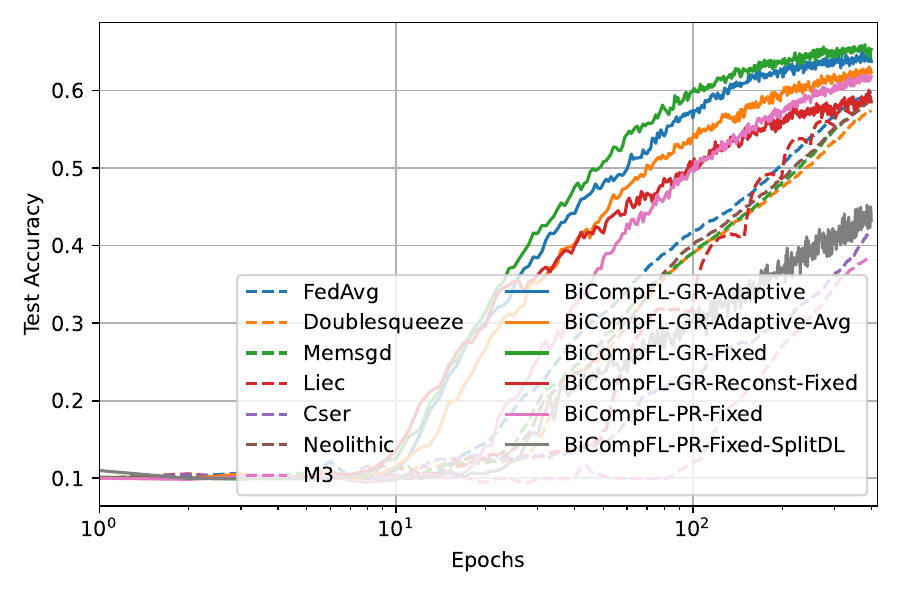}}
\caption{\rev{Test Accuracy over Epochs}}
\end{figure}}

\section{Ablation Studies} \label{sec:hyperparameters}

\subsection{Number of Clients} \label{sec:varclients}

\rev{
We study in what follows the sensitivity to various hyperparameters of our algorithms. For comparability, we conduct all experiments on the model 4CNN, Fashion MNIST, and \iid data. We plot for all experiments the accuracies over the number of epochs, and over the communication cost in bits.
}
\rev{
We first evaluate in \cref{fig:varclients} the effectiveness of \alglocalrand\ and \alglocalrand\ for different numbers of clients. It can be found that both algorithms exhibit satisfying performance even for $\nclients=50$, given that the same data is now distributed on more clients. The overall communication cost increases by roughly the factor of the increase in the number of $\nclients$. To illustrate this further, we additionally plot in \cref{fig:varclients_bitrates} the bitrates per parameter.
\begin{figure}[H]
\subfigure[Test Accuracy over Epochs]{\includegraphics[width = .5\linewidth]{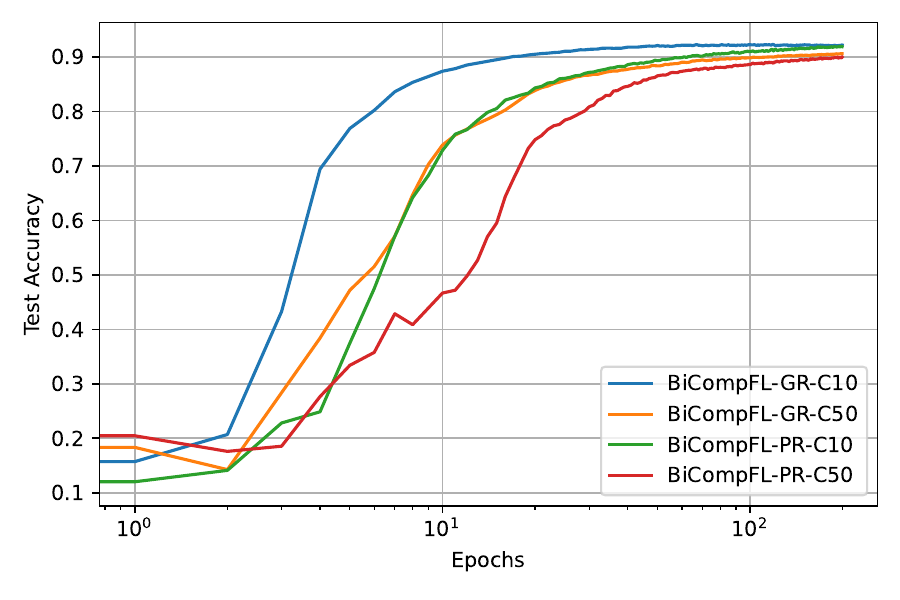}}
\subfigure[Test Accuracy over Communication Cost]{\includegraphics[width = .5\linewidth]{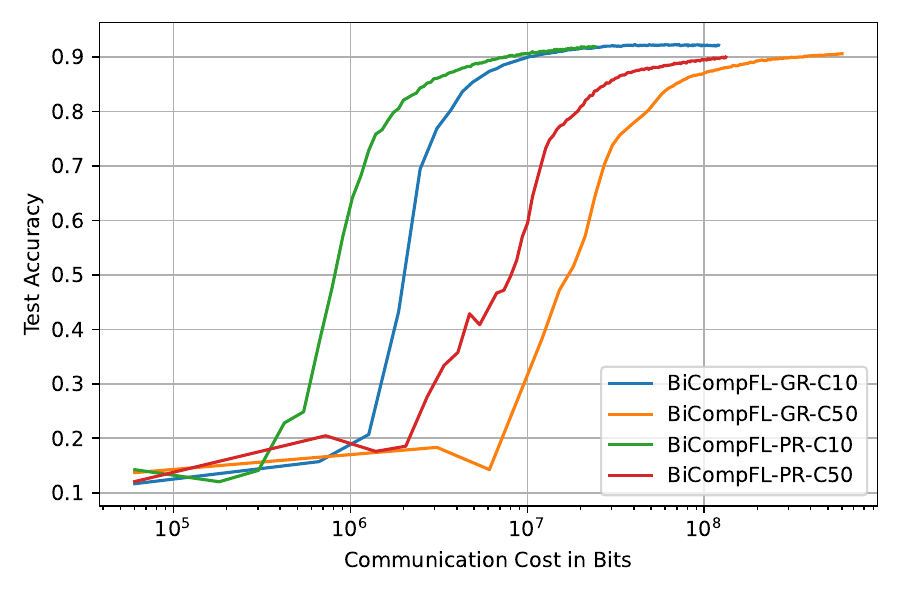}}
\caption{\rev{\algglobalrand\ and \algglobalrand\ With Different Number of Clients}}
\label{fig:varclients}
\end{figure}
\begin{figure}[H]
\centering
\includegraphics[width = .5\linewidth]{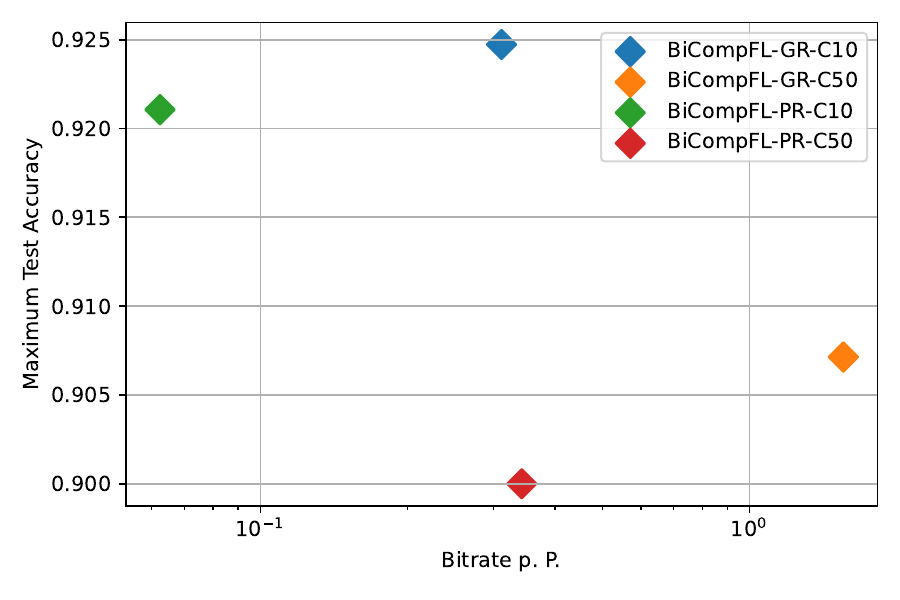}
\caption{\rev{Bitrates for \algglobalrand\ and \algglobalrand\ With Different Number of Clients}}
\label{fig:varclients_bitrates}
\end{figure}
}

\subsection{Optimization of the Prior} \label{sec:optimize_prior}

\rev{
As described in the main body of the paper, \alglocalrand\ allows for optimizing the choice of the prior at the clients by optimizing the convexity parameter $\lambda$ that mixes the global model estimate with the posterior transmitted by the client an iteration ahead, i.e., $\priorul = \lambda \modelest + (1-\lambda) \posteriorest[\epoch-1]$ to reduce the communication cost. To evaluate the potential of this method, we optimize $\lambda$ so that it minimized the KL-divergence between the current posterior $\posterior$ (to be transmitted) and the prior $\priorul$, representative for the uplink communication cost. The KL-minimizing $\lambda$ is transmitted to the federator, which is necessary for the federator to reconstruct the importance samples. This optimization is conducted at each iteration individually at the clients. We present in \cref{fig:varop} the performance of this method compared with the algorithms that use as priors exclusively the global model estimates of the clients. Note that optimizing the prior individually at the clients is only possible for \alglocalrand\. We plot the performance of \algglobalrand\ for reference only. To assess the potential, we ignore for the moment the cost of transmitting $\lambda$, which could be reduced by further compression techniques and leveraging the inter-round dependencies of the choice of $\lambda$.
\begin{figure}[H]
\subfigure[Test Accuracy over Epochs]{\includegraphics[width = .5\linewidth]{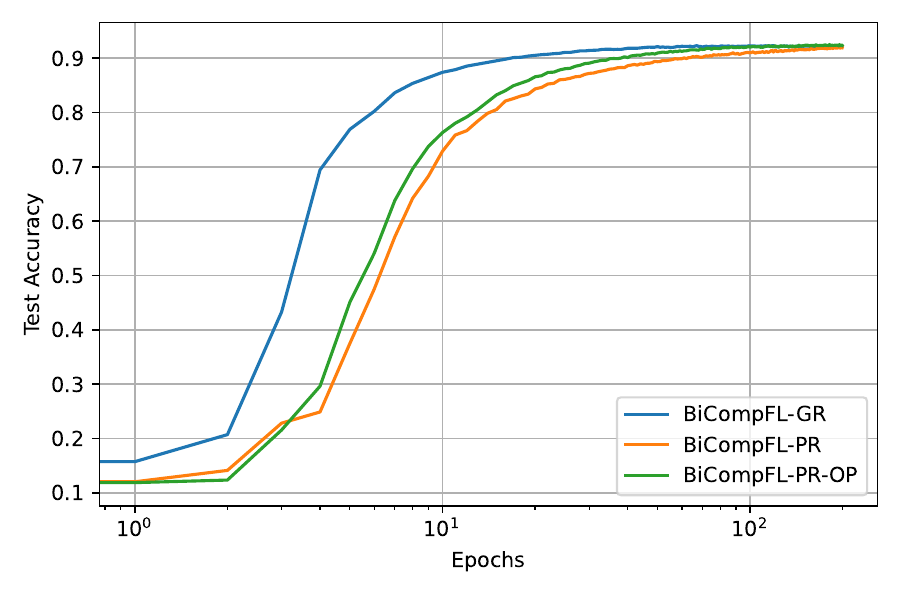}}
\subfigure[Test Accuracy over Communication Cost]{\includegraphics[width = .5\linewidth]{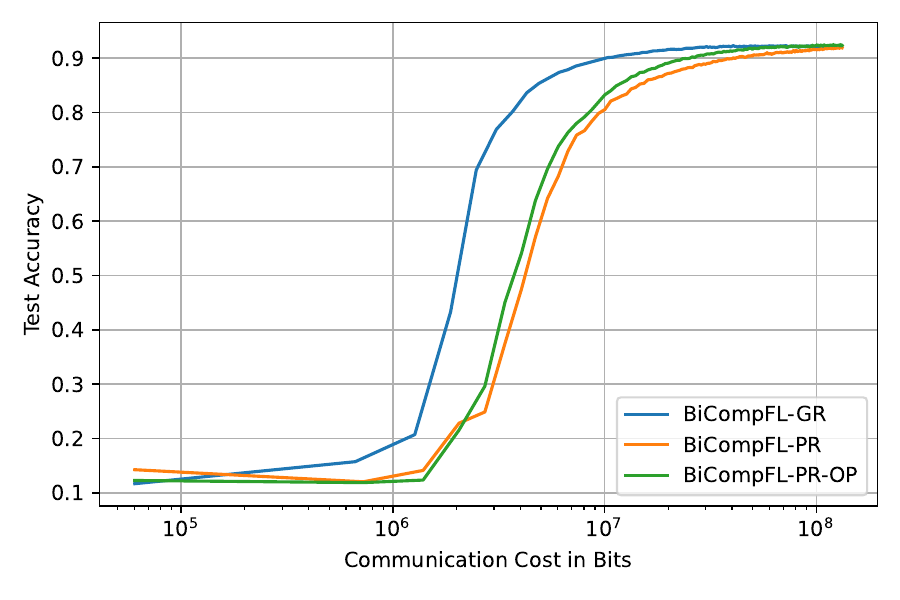}}
\caption{\rev{\alglocalrand\ With and Without Optimization over the Prior. Optimization over the Priors is denoted by OP.}}
\label{fig:varop}
\end{figure}
It can be found that, while optimizing the prior improves the accuracy over epochs and with respect to the communication cost compared to \alglocalrand\, the improvements are rather insignificant. We therefore present for clarity the algorithm with a fixed choice of the prior as the former global model estimate, which additionally reduce the computation overhead at the clients by avoiding the optimization over $\lambda$. Nonetheless, we note that in certain edge cases, there can be merit in the optimization approach, for instance when the number $\nmasksdl$ of samples on the downlink is very small, and hence the global model estimate is inaccurate.
}

\subsection{Number of Samples} \label{sec:varuldl}

\rev{We continue to assess the impact of the number $\nmasksdl$ of samples on the downlink. We therefore evaluate the performance of $\alglocalrand$ for $\nmasksdl \in \{5, 10, 20\}$. We evaluate the differences on \alglocalrand\. The results in \cref{fig:varuldl} reflect the obvious: the larger $\nmasksdl$, the better the accuracy when plotted over the number of epochs. On the contrary, the larger $\nmasksdl$, the larger the communication cost per epoch. The final accuracies do not show substantial differences, and hence, $\nmasksdl=5$ is sufficient in this setting. To avoid assessing our method overly optimistic and provide a fair comparison to other methods, we choose $\nmasksdl=10$ in all our experiments, noting that the communication can further be reduced in certain scenarios by lowering $\nmasksdl$ without notable performance loss.
\begin{figure}[H]
\subfigure[Test Accuracy over Epochs]{\includegraphics[width = .5\linewidth]{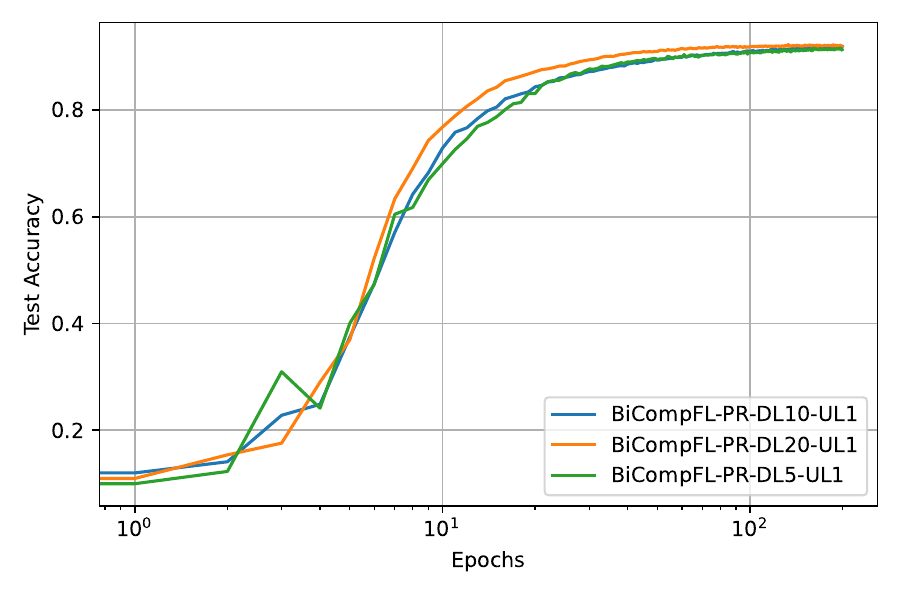}}
\subfigure[Test Accuracy over Communication Cost]{\includegraphics[width = .5\linewidth]{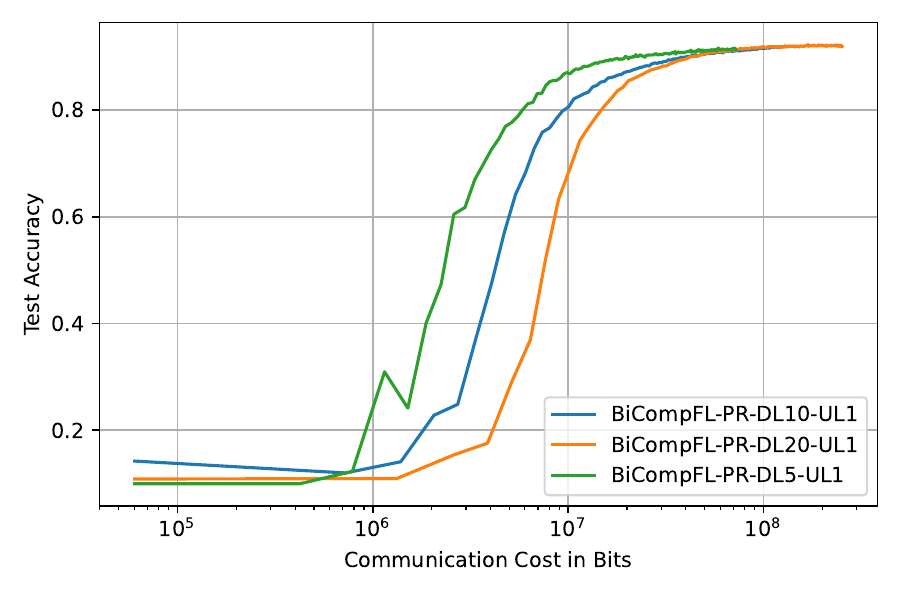}}
\caption{\rev{\alglocalrand\ for Different Number of Downlink Samples and a Single Uplink Sample.}}
\label{fig:varuldl}
\end{figure}
}

\subsection{Block Size} \label{sec:varbs}

\rev{
We compare in \cref{fig:varbs} the performance of \algglobalrand\ for different block sizes BS = $\dimension/\nblocks \in \{128, 256, 512\}$. As expected, fixing $\nisamples$, larger block sizes worsen the performance of the algorithm when evaluated over the number of epochs. However, larger block sizes simultaneously reduce the communication cost, and can hence be beneficial in many scenarios. However, we also note that larger block sizes comes at the expense of increases sampling complexities, and hence, the maximum block sizes are also dominated by the resources of the clients and the federator.
\begin{figure}[H]
\subfigure[Test Accuracy over Epochs]{\includegraphics[width = .5\linewidth]{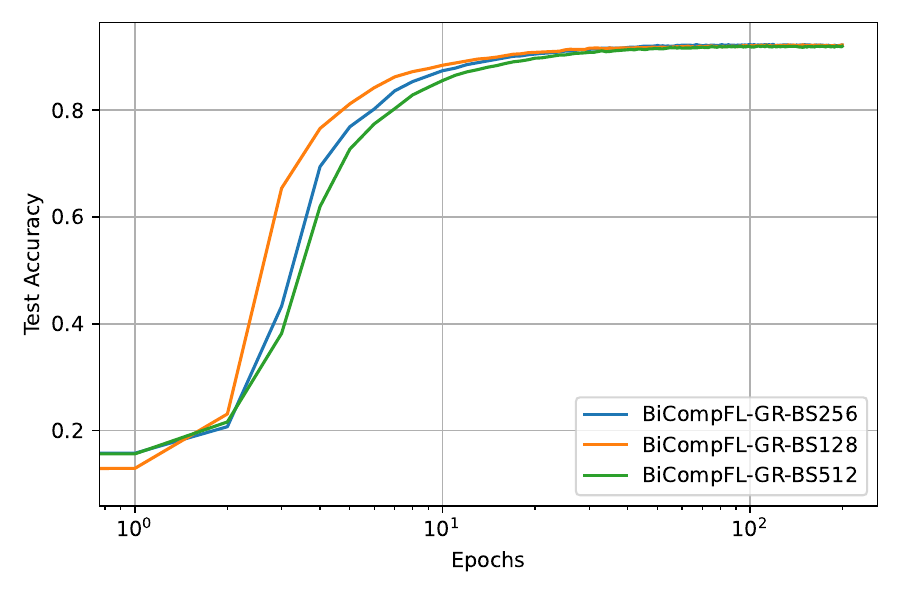}}
\subfigure[Test Accuracy over Communication Cost]{\includegraphics[width = .5\linewidth]{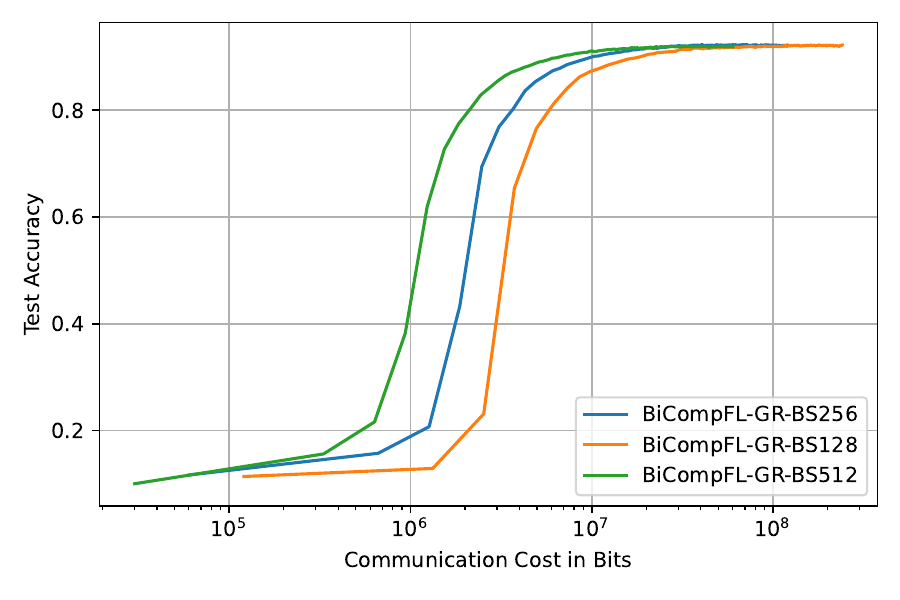}}
\caption{\rev{\algglobalrand\ With Fixed Block Allocation for Varying Block Sizes (BS) $\dimension/\nblocks$.}}
\label{fig:varbs}
\end{figure}
}

\subsection{Number of Importance Samples} \label{sec:varss}

\rev{
In \cref{fig:varss}, we study the sensitivity of our algorithms with respect to the number of importance samples $\nisamples$ at the example of \algglobalrand\. While larger number of $\nisamples$ slightly improves the performance as of the epoch number, the improvements do not outweigh the additional communication costs. Overall, our algorithm proves rather stable within reasonable ranges for $\nisamples$. We fix in all our experiments $\nisamples=256$, presenti
\begin{figure}[H]
\subfigure[Test Accuracy over Epochs]{\includegraphics[width = .5\linewidth]{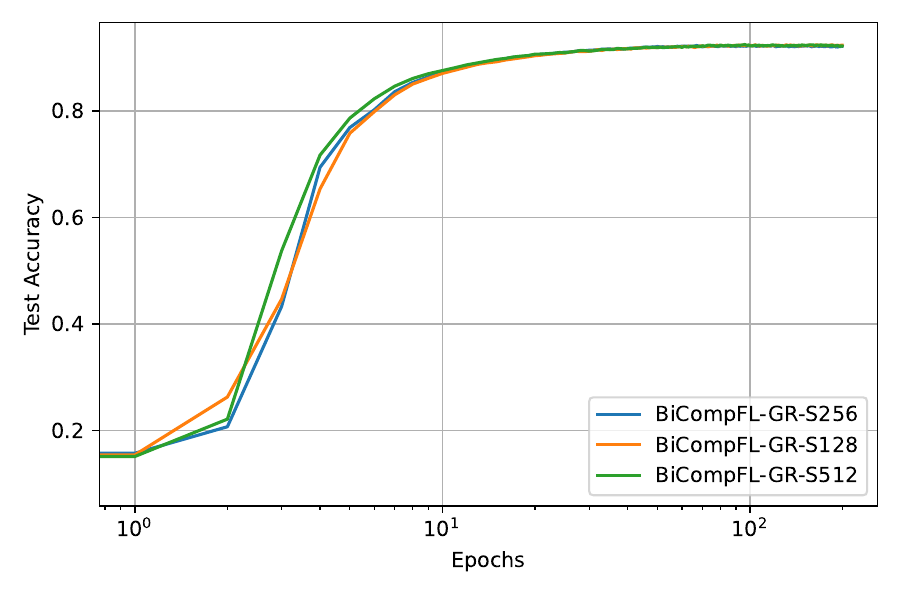}}
\subfigure[Test Accuracy over Communication Cost]{\includegraphics[width = .5\linewidth]{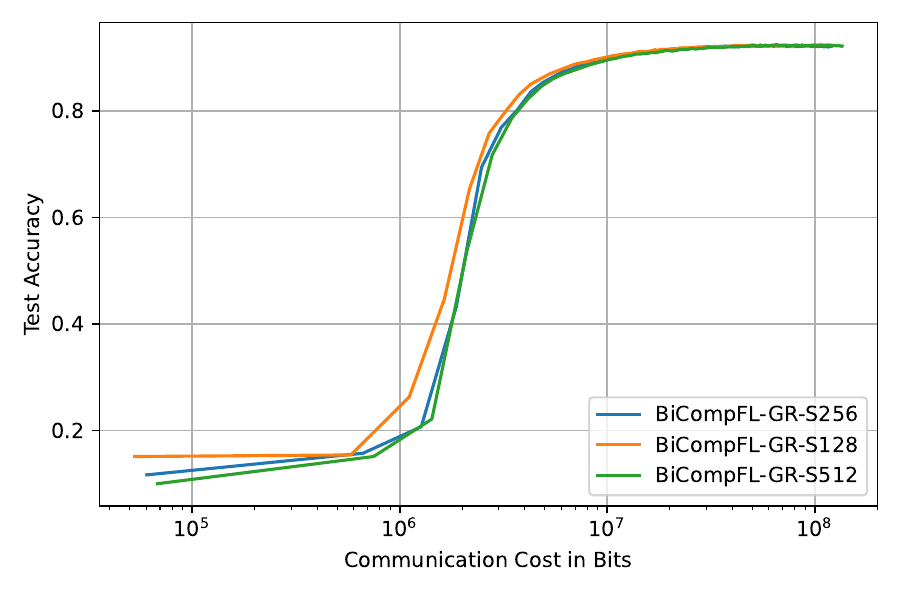}}
\caption{\rev{\algglobalrand\ with Varying Number of Importance Samples $\nisamples$ per Block.}}
\label{fig:varss}
\end{figure}
}

\end{document}